\numberwithin{equation}{section}
\numberwithin{figure}{section}
\numberwithin{table}{section}
\newtheorem{theorem}{Theorem}[section]
\newtheorem{lemma}[theorem]{Lemma}
\newtheorem{proposition}[theorem]{Proposition}
\newtheorem{corollary}[theorem]{Corollary}
\theoremstyle{definition}
\newtheorem{assumption}{Assumption}[section]
\theoremstyle{remark}
\newtheorem{remark}{Remark}[section]
\def\rd{{\mathrm{d}}}
\DeclareMathAlphabet{\mathsfit}{\encodingdefault}{\sfdefault}{m}{sl}
\SetMathAlphabet{\mathsfit}{bold}{\encodingdefault}{\sfdefault}{bx}{n}
\def\gA{{\mathcal{A}}}
\def\gE{{\mathcal{E}}}
\def\gF{{\mathcal{F}}}
\def\gG{{\mathcal{G}}}
\def\gL{{\mathcal{L}}}
\def\gN{{\mathcal{N}}}
\def\gS{{\mathcal{S}}}
\def\gT{{\mathcal{T}}}
\def\0{{\bf 0}}
\def\1{{\bf 1}}
\def\HB{{\mathbb H}}
\def\PB{{\mathbb P}}
\def\SB{{\mathbb S}}
\def\bsA{{\boldsymbol A}}
\def\bsB{{\boldsymbol B}}
\def\bsG{{\boldsymbol G}}
\def\bsH{{\boldsymbol H}}
\def\bsI{{\boldsymbol I}}
\def\bsM{{\boldsymbol M}}
\def\bsS{{\boldsymbol S}}
\def\bsU{{\boldsymbol U}}
\def\bsV{{\boldsymbol V}}
\def\bsX{{\boldsymbol X}}
\def\bsg{{\boldsymbol g}}
\def\bsr{{\boldsymbol r}}
\def\bsu{{\boldsymbol u}}
\def\bsv{{\boldsymbol v}}
\def\bsx{{\boldsymbol x}}
\def\bsy{{\boldsymbol y}}
\def\bsz{{\boldsymbol z}}
\def\bsvareps{{\boldsymbol \varepsilon}}
\def\bsSigma{{\boldsymbol \Sigma}}
\def\bsLambda{{\boldsymbol \Lambda}}
\def\bsDelta{{\boldsymbol \Delta}}
\def\bszeta{{\boldsymbol \zeta}}
\def\bsGamma{{\boldsymbol \Gamma}}
\def\bslambda{{\boldsymbol \lambda}}
\def\var{\mathrm{var}}
\def\tr{\mathrm{tr}}
\def\diag{\mathrm{diag}}
\newcommand{\R}{\mathbb{R}}
\newcommand{\E}{\mathbb{E}}
\newcommand{\Vol}{\operatorname{Vol}}
\newcommand{\Real}{\operatorname{Re}}
\def\indicator{{\mathbbm{1}}}
\title{Asymptotic Time-Uniform Inference for Parameters in Averaged Stochastic Approximation}
\author{
    Chuhan Xie\footnote{School of Mathematical Sciences, Peking University, Email: \href{ch\_xie@pku.edu.cn}{ch\_xie@pku.edu.cn}} 
    \quad Kaicheng Jin\footnote{School of Mathematical Sciences, Peking University, Email: \href{2301110071@pku.edu.cn}{2301110071@pku.edu.cn}} 
    \quad Jiadong Liang\footnote{School of Mathematical Sciences, Peking University, Email: \href{jdliang@pku.edu.cn}{jdliang@pku.edu.cn}} 
    \quad Zhihua Zhang\footnote{School of Mathematical Sciences, Peking University, Email: \href{2301110071@pku.edu.cn}{zhzhang@math.pku.edu.cn}}
}
\date{}
\begin{document}

\maketitle
\vspace{-2em}
\begin{abstract}
  We study time-uniform statistical inference for parameters in stochastic approximation (SA), which encompasses a bunch of applications in optimization and machine learning. To that end, we analyze the almost-sure convergence rates of the averaged iterates to a scaled sum of Gaussians in both linear and nonlinear SA problems. We then construct three types of asymptotic confidence sequences that are valid uniformly across all times with coverage guarantees, in an asymptotic sense that the starting time is sufficiently large. These coverage guarantees remain valid if the unknown covariance matrix is replaced by its plug-in estimator, and we conduct experiments to validate our methodology. 
\end{abstract}

\section{Introduction}

Traditional statistical inference for parameters is based on asymptotic/non-asymptotic coverage guarantees at fixed sample sizes. Such guarantees tend to become problematic in sequential experimental design due to the issue of ``peeking'', i.e., experimenters deciding whether to collect more data for further experiments after looking at current results in order to make the outcome more significant \citep{feller1940statistical,anscombe1954fixed,robbins1952some}.
Recently, there has been an emerging literature on safe anytime-valid inference \citep{johari2022always,howard2021time,grunwald2020safe,shafer2021testing,pace2020likelihood,ramdas2023game} that partially solve such  problems. At a high level, they utilize a supermartingale method coupled with  Ville's inequality \citep{Ville1939} to obtain time-uniform coverage guarantees for the quantities of interest, which is also equivalent to coverage guarantees for arbitrary stopping times \citep{ramdas2020admissible}:
\begin{align*}
    \PB(\forall t\geq 1 \colon \theta_t \in \operatorname{CI}_t ) \geq 1 - \alpha \quad \Longleftrightarrow \quad \PB(\theta_\tau \in \operatorname{CI}_\tau) \geq 1- \alpha \quad \text{for all stopping times } \tau>0.
\end{align*}
With such guarantees, the corresponding statistical inference procedure remains valid even in the presence of peeking, since the experimenter's decision of whether to stop conducting further experiments can always be regarded as a stopping time.
This technique has been applied to a series of traditional statistical tasks including mean estimation and hypothesis tests \citep{darling1968some,howard2021time,waudby2024estimating,ramdas2023game}. 

This work takes a step forward to study time-uniform statistical inference in the stochastic approximation (SA) framework, which refers to a wide class of online iterative algorithms and is prevalent in modern machine learning applications such as stochastic gradient descent (SGD) and reinforcement learning (RL) due to memory and efficiency superiority over offline methods. Specifically, the SA iterates $\{\bsx_t\}_{t\geq 0}$ follow the recursive stochastic updates as \eqref{eq: sa} below. Despite recent progress on statistical inference for SA \citep{chen2020statistical,fang2018online,su2023higrad,lee2022fast}, none of them provide time-uniform results and therefore they still suffer from the peeking issue. The goal of this paper is to provide asymptotic time-uniform coverage guarantees of the following form:
\begin{align*}
    \lim_{m \rightarrow \infty} \PB(\forall t\geq m \colon \Bar{\bsx}_t - \bsx^* \in \operatorname{CI}_t) \geq 1-\alpha,
\end{align*}
where $\Bar{\bsx}_t := t^{-1}\sum_{j=1}^t \bsx_t$ is the averaged iterate and $\bsx^*$ is the true parameter. As a direct consequence, for sequential experiments on e-commerce platforms or in clinical trials where the parameters of interest are updated iteratively, our results provide confidence sequences for the true parameter with asymptotically valid error control regardless of the data collection strategy of experimenters.

\subsection{Our Contributions}

Our theoretical contributions mainly lie in the following three parts.
\begin{enumerate}[(i)]
    \item We establish an almost-sure Gaussian approximation result for the averaged iterates $\{\Bar{\bsx}_t\}_{t\geq 0}$ to a scaled sum of i.i.d. Gaussians with precise convergence rates (Theorem \ref{thm: strong approx}), and provide a refined analysis on the relationship between the hyperparameters and these rates (Section \ref{sec: rate anal}) as well as the optimal rates (Section \ref{sec: optimal rate}) in both linear and nonlinear cases.
    \item We devise three confidence sequence boundaries for the scaled sum of i.i.d.\ Gaussians and provide time-uniform coverage guarantees (Section \ref{sec: boundary}).
    \item For practical inference, we derive the almost-sure convergence for the plug-in estimator of the unknown covariance matrix that is used in constructing confidence sequences (Section \ref{sec: boundary approx}), and prove that asymptotic time-uniform coverage guarantees still hold when replacing this covariance matrix with its plug-in analogue (Section \ref{sec: acs result}).
\end{enumerate}

We perform numerical experiments in Section \ref{sec: experiment} to validate our methodology. Our idea stems from the time-uniform central limit theory proposed in \cite{waudby2021time}, which highlights the necessity of ``almost-sure'' convergence of all quantities of interest, instead of usual ``in-probability'' or ``in-law'' ones. This requirement leads us to derive brand new convergence results mentioned above in the SA framework.

\subsection{Related Work}

\paragraph{Anytime-valid inference.} The idea of confidence sequences is initiated by \cite{darling1967confidence}, which provided a sequential estimation approach for the median. A more recent work \cite{howard2021time} gave comprehensive analysis and constructions of confidence sequences for the mean based on time-uniform concentration bounds from \cite{howard2020time}. Similar techniques were adopted in \cite{howard2022sequential} for quantile estimation and in \cite{waudby2024estimating} for bounded mean estimation. Another line of research, nonparametric sequential testing, goes back to \cite{wald1945sequential}. Based on the test martingale method from \cite{shafer2011test}, \cite{grunwald2020safe} developed a theory of safe hypothesis testing and proposed general methods for constructing tests with optimal powers. Subsequent works include \cite{ramdas2020admissible} for testing symmetry, \cite{ramdas2022testing} for testing exchangeability, etc.

\paragraph{Statistical inference for stochastic approximation.} 
Based on the seminal work of \cite{polyak1992acceleration}, \cite{chen2020statistical} studied asymptotic normality of averaged iterates from SGD, and proposed the plug-in and batch-means estimators for the asymptotic covariance matrix in order to perform statistical inference on model parameters. 
\cite{zhu2023online} developed a fully online batch-means method which improves computation and memory efficiency over the original one in \cite{chen2020statistical}. \cite{fang2018online} proposed an online bootstrap procedure for the estimation of confidence intervals via a number of randomly perturbed SGD iterates.
\cite{su2018uncertainty} designed a hierarchical incremental gradient descent method termed HiGrad that hierarchically split SGD updates into multiple threads and constructed a $t$-based confidence interval for the parameter. \cite{lee2022fast} leveraged insights from time series regression in econometrics \citep{abadir1997two,kiefer2000simple} and constructed asymptotic pivotal statistics for parameter inference via random scaling, which was extended to different machine learning scenarios in \cite{li2022statistical,li2023statistical,li2023online}.
\cite{chen2021statistical1,chen2021statistical2,chen2022online} studied online statistical inference in the contextual bandit settings and analyzed the performance and efficiency of weighted versions of SGD.





\section{Problem Setup}
\label{sec: problem}

We begin by introducing the problem setup. We are concerned with solving the root-finding problem $\bsg(\bsx^*) = \0$, where $\bsg\colon \R^d \rightarrow \R^d$ is expressed as an expectation over the data point $\xi$; i.e., $\bsg(\bsx) = \E[\bsG(\bsx,\xi)]$. Supposing we only have access to a sequence of i.i.d. data points $\xi_1,\xi_2,\dots$, a typical stochastic approximation (SA) algorithm is given by the following $d$-dimensional recursion:
\begin{align}
\label{eq: sa}
    \bsx_{t+1} = \bsx_t - \eta_t \bsG(\bsx_t,\xi_{t+1}),\quad t=0,1,2,\dots
\end{align}
Starting from an initializer $\bsx_0$, the iterates $\{\bsx_t\}_{t\geq 0}$ will converge to the unique root $\bsx^*$ and possess favorable statistical properties under certain assumptions that we are about to display.


\begin{assumption}[Lyapunov function]
\label{ass: lyapunov}
    There exist a differentiable function $V \colon \R^d \rightarrow \R$ and a constant $\delta_V > 0$ such that for all $\|\bsx - \bsx^*\| \leq \delta_V$ and $\bsy,\bsz\in \R^d$, the following conditions hold.
    \begin{enumerate}[(i)]
        \item $V(\0)=0$. \label{ass: lyapunov 1}
        \item $V(\bsx - \bsx^*) \geq \mu_V \|\bsx - \bsx^*\|^2$ for some $\mu_V > 0$. \label{ass: lyapunov 2}
        \item $\|\nabla V(\bsy) - \nabla V(\bsz)\| \leq L_V \|\bsy - \bsz\|$ for some $L_V > 0$. \label{ass: lyapunov 3}
        \item $\nabla V(\bsx - \bsx^*)^\top \bsg(\bsx) > 0$ for $\bsx \neq \bsx^*$. \label{ass: lyapunov 4}
        \item $\nabla V(\bsx - \bsx^*)^\top \bsg(\bsx) \geq \lambda_V V(\bsx - \bsx^*)$ for some $\lambda_V>0$. \label{ass: lyapunov 5}
    \end{enumerate}
\end{assumption}

\begin{assumption}[Local linearity]
\label{ass: local linear}
    There exist a matrix $\bsH\in \R^{d\times d}$ and constants $L_H>0$, $\delta_H>0$, $0<\lambda\leq 1$ such that for all $\|\bsx - \bsx^*\| \leq \delta_H$,
    \begin{align*}
        \| \bsg(\bsx) - \bsH(\bsx - \bsx^*) \| \leq L_H \|\bsx - \bsx^*\|^{1+\lambda},
    \end{align*}
    and that $\Real \lambda_i(\bsH) > 0$ for all $1\leq i\leq d$, where $\lambda_i(\cdot)$ denotes the $i$-th eigenvalue.
\end{assumption}

\begin{assumption}[Lipschitzness]
\label{ass: lipschitz}
    There exists a constant $L_G>0$ such that for any $\bsx,\bsy \in \R^d$,
    \begin{align*}
        \sqrt{\E\big( \| \bsG(\bsx,\xi) - \bsG(\bsy,\xi) \|^2 \big) } \leq L_G \|\bsx - \bsy\|.
    \end{align*}
\end{assumption}

\begin{assumption}[Step size]
\label{ass: step size}
    The step size is $\eta_t = \eta_0 t^{-a}$ for some constant $1/2 < a < 1$.
\end{assumption}

\begin{assumption}[Noise]
\label{ass: noise}
    For each $t\geq 1$, let $\bsvareps_t := \bsg(\bsx_{t-1}) - \bsG(\bsx_{t-1}, \xi_{t})$. The martingale difference sequence $\{\bsvareps_t\}_{t\geq 1}$ satisfies the following conditions.
    \begin{enumerate}[(i)]
        \item For all $t\geq 1$, $\E(\|\bsvareps_t\|^2\mid \gF_{t-1}) + \|\bsg(\bsx_{t-1})\|^2 \leq L_\varepsilon (1 + \|\bsx_{t-1} - \bsx^*\|^2)$ almost surely for some constant $L_\varepsilon>0$. \label{ass: noise 1}
        \item \label{ass: noise 2} The following decomposition holds: $\bsvareps_t = \bsvareps_t(0) + \bszeta_t(\bsx_{t-1})$, where
        \begin{enumerate}[(a)]
            \item $\E(\bsvareps_t(0)\mid \gF_{t-1}) = 0$ almost surely;
            \item $\E(\bsvareps_t(0)\bsvareps_t(0)^\top \mid \gF_{t-1}) \xrightarrow[]{P} S$ as $t\rightarrow \infty$, where $\bsS\in \SB^{d}_+$ is a symmetric and positive definite matrix;
            \item $\sup_{t} \E(\|\bsvareps_t(0)\|^2 \indicator(\|\bsvareps_t(0)\| > C)\mid \gF_{t-1}) \xrightarrow[]{P} 0$ as $C\rightarrow \infty$;
            \item for all $t$ large enough, $\E(\|\bszeta_t(\bsx_{t-1})\|^2\mid \gF_{t-1}) \leq \delta(\bsx_{t-1} - \bsx^*)$ almost surely, where $\delta(\bsDelta) \rightarrow 0$ as $\bsDelta \rightarrow 0$.
        \end{enumerate}
        \item For all $t\geq 1$, $\E(\|\bsvareps_t\|^{2p}) \leq C_\varepsilon$ and $\E(\|\bsG(\bsx^*,\xi)\|^{2p}) \leq C_\varepsilon$ for some constants $C_\varepsilon$ and $p>1$. \label{ass: noise 3}
    \end{enumerate}
\end{assumption}

\begin{remark}
    Assumptions \ref{ass: lyapunov}-\ref{ass: noise} are analogous to Assumptions 3.1-3.4 of \cite{polyak1992acceleration}, which are standard in the SA literature. Assumptions \ref{ass: lyapunov}-\ref{ass: lipschitz} are restrictions on $\bsg(\bsx)$ and $\bsG(\bsx,\xi)$ to ensure convergence of the SA algorithm \eqref{eq: sa}. In Assumption \ref{ass: step size} we use a standard polynomial step size schedule that satisfies the Robbins-Monro conditions \citep{robbins1951stochastic}, i.e., $\sum_{t=0}^\infty \eta_t = \infty$ and $\sum_{t=0}^{\infty} \eta_t^2 < \infty$.
    Assumption \ref{ass: noise} contains conditions on the scale of noise and is essential to establishing the asymptotic behavior of the averaged iterates. The SA framework includes a bunch of examples including linear regression and logistic regression, which are studied in our experiments (Section \ref{sec: experiment}). 
\end{remark}



    

    

    

\section{Almost-Sure Gaussian Approximation}
\label{sec: strong approx}

In the following Theorem \ref{thm: strong approx}, we approximate the averaged iterate $\Bar{\bsx}_t - \bsx^*$ by a sum of i.i.d. Gaussians in an almost-sure sense, and characterize the corresponding approximation rate. Its detailed proof is deferred to Appendix \ref{sec: thm: strong approx}.

\begin{theorem}[Gaussian approximation]
    \label{thm: strong approx}
    Let $\epsilon>0$ be an arbitrarily small constant. Under Assumptions \ref{ass: lyapunov}-\ref{ass: noise}, there exists a sequence of $d$-dimensional i.i.d. Gaussian random vectors $\{\bsG_t\}_{t\geq 1}$ with mean zero and covariance $\bsH^{-1}\bsS \bsH^{-1}$ on the same (potentially enriched) probability space, such that
    \begin{align*}
        \Bar{\bsx}_t - \bsx^* = \frac{1}{t} \sum_{j=1}^t \bsG_j + o(r_{t,\epsilon} + w_{t,\epsilon,d}) \quad \text{a.s.},
    \end{align*}
    where 
    \begin{align*}
        &r_{t,\epsilon} = t^{-1}(\log t)^{\epsilon} + t^{-\frac{a(1+\lambda)}{2}}(\log t)^{1+\epsilon} + t^{\frac{1}{2p} - \frac{2-a}{2}}(\log t)^{\frac{1}{2p}+\epsilon} + t^{-\frac{1+a}{2}}(\log t)^{\frac{1}{2}+\epsilon}, \\
        &w_{t,\epsilon,1} = t^{-\frac{1}{2} - \frac{p-1}{4p}}(\log t)^{1+\frac{1}{4p} + \epsilon}, \quad w_{t,\epsilon,d} = t^{-\frac{1}{2} - \frac{p-1}{50dp}} (\log t)^{\frac{1}{50dp} + \epsilon} \quad \text{for } d>1.
    \end{align*}
\end{theorem}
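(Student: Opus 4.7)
The plan is to combine a Polyak-Ruppert-type linear decomposition with an almost-sure multivariate invariance principle. Setting $\bsDelta_t := \bsx_t - \bsx^*$ and using Assumption~\ref{ass: local linear} to linearize $\bsg$ near $\bsx^*$, the recursion \eqref{eq: sa} reads
\begin{align*}
    \bsDelta_{t+1} = (\bsI - \eta_t\bsH)\bsDelta_t + \eta_t\bsvareps_{t+1} - \eta_t\boldsymbol{\rho}_t,
\end{align*}
where $\boldsymbol{\rho}_t := \bsg(\bsx_t) - \bsH\bsDelta_t$ satisfies $\|\boldsymbol{\rho}_t\|\leq L_H\|\bsDelta_t\|^{1+\lambda}$ once $\|\bsDelta_t\|\leq \delta_H$. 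Unrolling this linear recursion and rearranging by a discrete Abel summation (in the Polyak-Juditsky manner), I would obtain a decomposition
\begin{align*}
    \bar{\bsDelta}_t = \bsH^{-1}\cdot\frac{1}{t}\sum_{j=1}^{t}\bsvareps_j + R_t^{\mathrm{init}} + R_t^{\mathrm{nonlin}} + R_t^{\mathrm{tele}} + R_t^{\mathrm{avg}},
\end{align*}
in which the four remainders collect, respectively, the decay of the initial condition, the nonlinear contribution of $\boldsymbol{\rho}$, a telescoping martingale correction arising from inverting $\bsI-\eta_t\bsH$, and a deterministic averaging error on the leading term.

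Next, I would split $\bsvareps_j = \bsvareps_j(0) + \bszeta_j(\bsDelta_{j-1})$ using Assumption~\ref{ass: noise}\,(\ref{ass: noise 2}); the $\bszeta$-part is absorbed into $R_t^{\mathrm{nonlin}}$ provided I first secure an almost-sure rate for $\|\bsDelta_j\|$. The remaining $\bsvareps_j(0)$-sum is a square-integrable martingale-difference sequence whose conditional covariance converges to $\bsS$, so I would invoke a multivariate strong invariance principle---a Strassen-Philipp-Stout construction for martingales, sharpened by a Koml\'os-Major-Tusn\'ady/Einmahl/Zaitsev-type Gaussian coupling under the $2p$-th moment in Assumption~\ref{ass: noise}\,(\ref{ass: noise 3})---to build, on a possibly enriched probability space, i.i.d.\ $\gN(\0,\bsS)$ vectors $\bsY_j$ satisfying
\begin{align*}
    \sum_{j=1}^{t}\bsvareps_j(0) - \sum_{j=1}^{t}\bsY_j = o(t\,w_{t,\epsilon,d}) \quad \text{a.s.,}
\end{align*}
the two regimes of $w_{t,\epsilon,d}$ reflecting the sharp KMT rate in $d=1$ versus a dimension-dependent Zaitsev-type rate in $d>1$. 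Setting $\bsG_j := \bsH^{-1}\bsY_j$ then produces the theorem's leading term with covariance $\bsH^{-1}\bsS\bsH^{-1}$.

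The final step is to show $R_t^{\,\cdot} = o(r_{t,\epsilon})$ almost surely. Its prerequisite is an almost-sure iterate rate $\|\bsDelta_t\| = O(t^{-a/2}(\log t)^{1/2+\epsilon})$, which I would extract from the Lyapunov function (Assumption~\ref{ass: lyapunov}) via a stochastic-ODE contraction, a Robbins-Siegmund supermartingale argument, and Burkholder-type inequalities fed by Assumption~\ref{ass: noise}. Plugged back in, $R_t^{\mathrm{init}}$ contributes $t^{-1}(\log t)^{\epsilon}$ after Abel summation; the nonlinearity term $\sum_j \eta_j\|\bsDelta_{j-1}\|^{1+\lambda}$ integrates to $t^{-a(1+\lambda)/2}(\log t)^{1+\epsilon}$; $R_t^{\mathrm{tele}}$ gives $t^{-(1+a)/2}(\log t)^{1/2+\epsilon}$ via a maximal inequality; and a truncation argument handling the heavy-tailed part of $\bsvareps_j$ under only a $2p$-th moment yields $t^{1/(2p)-(2-a)/2}(\log t)^{1/(2p)+\epsilon}$. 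The main obstacle is to carry out every step in the \emph{almost-sure with explicit rate} mode rather than the usual ``in probability'' or ``in $L^2$'' mode; in particular, the multivariate Gaussian coupling under only a $2p$-th moment (instead of the exponential moments required by classical sharp KMT) forces a delicate truncation-plus-Zaitsev balance when $d>1$ and is the origin of the unusual $50dp$ denominator.
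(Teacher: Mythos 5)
Your high-level skeleton --- unroll the linearized Polyak--Ruppert recursion, isolate a leading martingale term, push the remainders to their respective rates using an almost-sure iterate bound, and couple the leading martingale to i.i.d.\ Gaussians --- is the right shape, but two steps would not go through as proposed and a third is misattributed.

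First, your calibration step uses the decomposition $\bsvareps_j = \bsvareps_j(0) + \bszeta_j(\bsDelta_{j-1})$ from Assumption~\ref{ass: noise}\,(\ref{ass: noise 2}) and absorbs $\bszeta_j$ into $R_t^{\mathrm{nonlin}}$. But that assumption only gives $\E(\|\bszeta_j\|^2 \mid \gF_{j-1}) \leq \delta(\bsDelta_{j-1})$ with $\delta(\cdot)\to 0$ and \emph{no rate}, so it cannot produce the explicit $t^{-(1+a)/2}(\log t)^{1/2+\epsilon}$ term in $r_{t,\epsilon}$. The paper instead replaces $\bsvareps_j$ by $\tilde{\bsvareps}_j := -\bsG(\bsx^*,\xi_j)$, which are genuinely i.i.d.\ (the $\xi_j$ are i.i.d.), and bounds $\|\bsvareps_j - \tilde{\bsvareps}_j\|$ through the Lipschitz condition of Assumption~\ref{ass: lipschitz} together with the $L^2$ rate $\E\|\bsDelta_j\|^2 \lesssim \eta_j$; that is the step that actually delivers the $t^{-(1+a)/2}$ rate.

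Second, the coupling tool. You propose applying a ``KMT/Einmahl/Zaitsev-type Gaussian coupling'' directly to the $\bsvareps_j(0)$, but those are martingale differences, not i.i.d.\ vectors, so KMT/Zaitsev couplings do not apply to them as written; and once you do have an i.i.d.\ sequence (as the paper's $\tilde{\bsvareps}_j$ are), KMT/Zaitsev would give a \emph{strictly better} rate (of order $t^{1/(2p)}$ rather than $t^{1/2-(p-1)/(50dp)}$), so the specific functions $w_{t,\epsilon,1}$ and $w_{t,\epsilon,d}$ in the statement do not come from those tools at all. They are the rates of the Skorokhod-embedding ASIP of Strassen (1967) with $f(t)=t^{1/p}(\log t)^{1/p+\epsilon}$ for $d=1$, and of Philipp's Hilbert-space ASIP for martingales (1986) for $d>1$, whose non-optimized $50d$ constant is exactly what produces the $50dp$ in $w_{t,\epsilon,d}$ --- it is not a ``truncation-plus-Zaitsev balance.'' Finally, the remainder bookkeeping is also off in places: the $t^{1/(2p)-(2-a)/2}(\log t)^{1/(2p)+\epsilon}$ term in $r_{t,\epsilon}$ is not from a heavy-tail truncation on $\bsvareps_j$ but from the matrix-inverse approximation term $\frac{1}{t}\sum_{j}(\bsA_j^{t-1}-\bsH^{-1})\bsvareps_{j+1}$ (with $\bsA_j^{t-1}=\sum_{s=j}^{t-1}\prod_{i=j+1}^{s}(\bsI-\eta_i\bsH)\,\eta_j$), controlled via Burkholder's inequality using the $2p$-th moment of Assumption~\ref{ass: noise}\,(\ref{ass: noise 3}) together with $\sum_j\|\bsA_j^{t-1}-\bsH^{-1}\|=O(t^a)$, and the $t^{-(1+a)/2}$ rate comes from the covariance calibration, not from a ``telescoping martingale correction.''
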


Compared with the (functional) central limit theorems for averaged iterates in the literature \citep{polyak1992acceleration,chen2020statistical,chen2024online,lee2022fast,lee2024fast,chen2023sgmm,li2022statistical,li2023statistical,li2023online}, our Theorem \ref{thm: strong approx} provides an almost-sure version instead of a usual in-probability one under some mildly stronger assumptions (see Assumption \ref{ass: step size rate} below). In addition, we precisely characterize the approximation rate when $t\rightarrow \infty$, while previous works only verify that it is of order $o(t^{-1/2})$ for the sole purpose of establishing asymptotic normality.

\subsection{Proof Idea of Theorem \ref{thm: strong approx}}
    The high-level idea of its proof is to decompose $\Bar{\bsx}_t - \bsx^*$ into the following five parts \citep{polyak1992acceleration}: denoting $\bsA_{j}^{t-1} = \sum_{s=j}^{t-1} \left( \prod_{i=j+1}^s (\bsI - \eta_i \bsH) \right) \eta_j$, $\bsr_j = \bsH(\bsx_j - \bsx^*) - \bsg(\bsx_j)$, $\bsvareps_{j+1} = \bsg(\bsx_j) - \bsG(\bsx_j, \xi_{j+1})$ and $\Tilde{\bsvareps}_{j+1} = \bsg(\bsx^*) - \bsG(\bsx^*, \xi_{j+1})$, we have
    \begin{align*}
        \Bar{\bsx}_t - \bsx^* &= \underbrace{\frac{1}{t} \bsA_0^{t-1} (\bsI - \eta_0 \bsH) (\bsx_0 - \bsx^*)}_{\gS_1} + \underbrace{\frac{1}{t} \sum_{j=0}^{t-1} \bsA_j^{t-1} \bsr_j}_{\gS_2} + \underbrace{\frac{1}{t} \sum_{j=0}^{t-1} (\bsA_j^{t-1} - \bsH^{-1}) \bsvareps_{j+1}}_{\gS_3} \\
        &\quad + \underbrace{\frac{1}{t} \sum_{j=0}^{t-1} \bsH^{-1} (\bsvareps_{j+1} - \Tilde{\bsvareps}_{j+1})}_{\gS_4} + \underbrace{\frac{1}{t} \sum_{j=0}^{t-1} \bsH^{-1} \Tilde{\bsvareps}_{j+1}}_{\gS_5}.
    \end{align*}
    Here, $\bsA_j^{t-1}$ can be regarded as an approximation of $\bsH^{-1}$. Below we discuss how the different rates contained in $r_{t,\epsilon}$ and $w_{t,\epsilon,d}$ appear from the above five error terms.
    
    Put simply, $\gS_1$ is the error caused by inaccurate initialization and is of order $O(t^{-1})$. And $\gS_2$ is the error induced when we linearize the expected increment $\bsg(\bsx_j)$ of an update \eqref{eq: sa} by $\bsH(\bsx_j - \bsx^*)$. By Assumption \ref{ass: local linear}, when $\bsx_j$ is close to $\bsx^*$ (which is always true when $j\rightarrow \infty$), this linearization error will decay at a rate similar to that of $\|\bsx_j - \bsx^*\|^{1+\lambda}$, and we rigorously prove it is of order $o\big(t^{-\frac{a(1+\lambda)}{2}}(\log t)^{1+\epsilon}\big)$ in Lemma \ref{lem: as clt approx}. 
    
    $\gS_3$ characterizes the approximation error of $\bsA_j^{t-1}$ to $\bsH^{-1}$ and is of order $o\big(t^{\frac{1}{2p} - \frac{2-a}{2}}(\log t)^{\frac{1}{2p}+\epsilon}\big)$, as also proved in Lemma \ref{lem: as clt approx}. Compared to the corresponding proof in \cite{polyak1992acceleration} (see Lemma 2 and the proof of Theorem 1 therein), in order for the almost-sure convergence rate of $\gS_3$ to be faster than $t^{-1/2}$, we additionally need a finite $2p$-th moment condition on the noise, which we formulate as Assumption \ref{ass: noise} \eqref{ass: noise 3}. This condition also appears in previous works \citep{zhu2021constructing,lee2022fast,li2022statistical,li2023online} concerning weak convergence of the scaled iterates to a Brownian motion (i.e., functional central limit theorems), and is also essential to our proof.

    $\gS_4$ characterizes the error induced when we calibrate the increments of the martingale $\sum_{j=0}^{t-1} \bsH^{-1} \bsvareps_{j+1}$ to i.i.d. random variables, sharing the same randomness $\{\xi_t\}_{t\geq 1}$ and the same limiting covariance $\bsH^{-1}\bsS \bsH^{-1}$. This is easily achieved by considering the martingale $\sum_{j=0}^{t-1} \bsH^{-1} \Tilde{\bsvareps}_{j+1}$. Using the averaged Lipschitzness in Assumption \ref{ass: lipschitz}, again we link the convergence rate of $\gS_4$ to that of $\|\bsx_t - \bsx^*\|^2$ and derive a rate of $o\big(t^{-\frac{1+a}{2}}(\log t)^{\frac{1}{2}+\epsilon}\big)$ in Lemma \ref{lem: strong approx}.

    Finally, $\gS_5$ is approximated by i.i.d. Gaussians with the same mean and covariance, using almost-sure invariance principles from \cite{strassen1967almost, philipp1986note}. For one-dimensional SA problems with $d=1$, the approximation error is of order $o\big( t^{-\frac{1}{2} - \frac{p-1}{4p}}(\log t)^{1+\frac{1}{4p} + \epsilon} \big)$ \citep{strassen1967almost}. For high-dimensional SA problems with $d>1$, the error rate is slightly worsened to $o\big( t^{-\frac{1}{2} - \frac{p-1}{50dp}}(\log t)^{\frac{1}{50dp} + \epsilon} \big)$ \citep{philipp1986note}.

    Combining the convergence rates of the above five parts finally yields the overall strong approximation rate $o(r_{t,\epsilon} + w_{t,\epsilon,d})$ for arbitrarily small $\epsilon>0$.

\subsection{Rate Analysis}
\label{sec: rate anal}
Based on Theorem \ref{thm: strong approx}, we now perform a detailed analysis on the behavior of approximation rates. 
In Figure \ref{fig: as_rate} below, we plot the negative exponents of these rates separately with respect to $a$ (which is the negative exponent of the step size as defined in Assumption \ref{ass: step size}) for both linear and nonlinear SA problems, resulting in several new findings.

\begin{figure}[ht]
    \centering
    \includegraphics[scale=0.42]{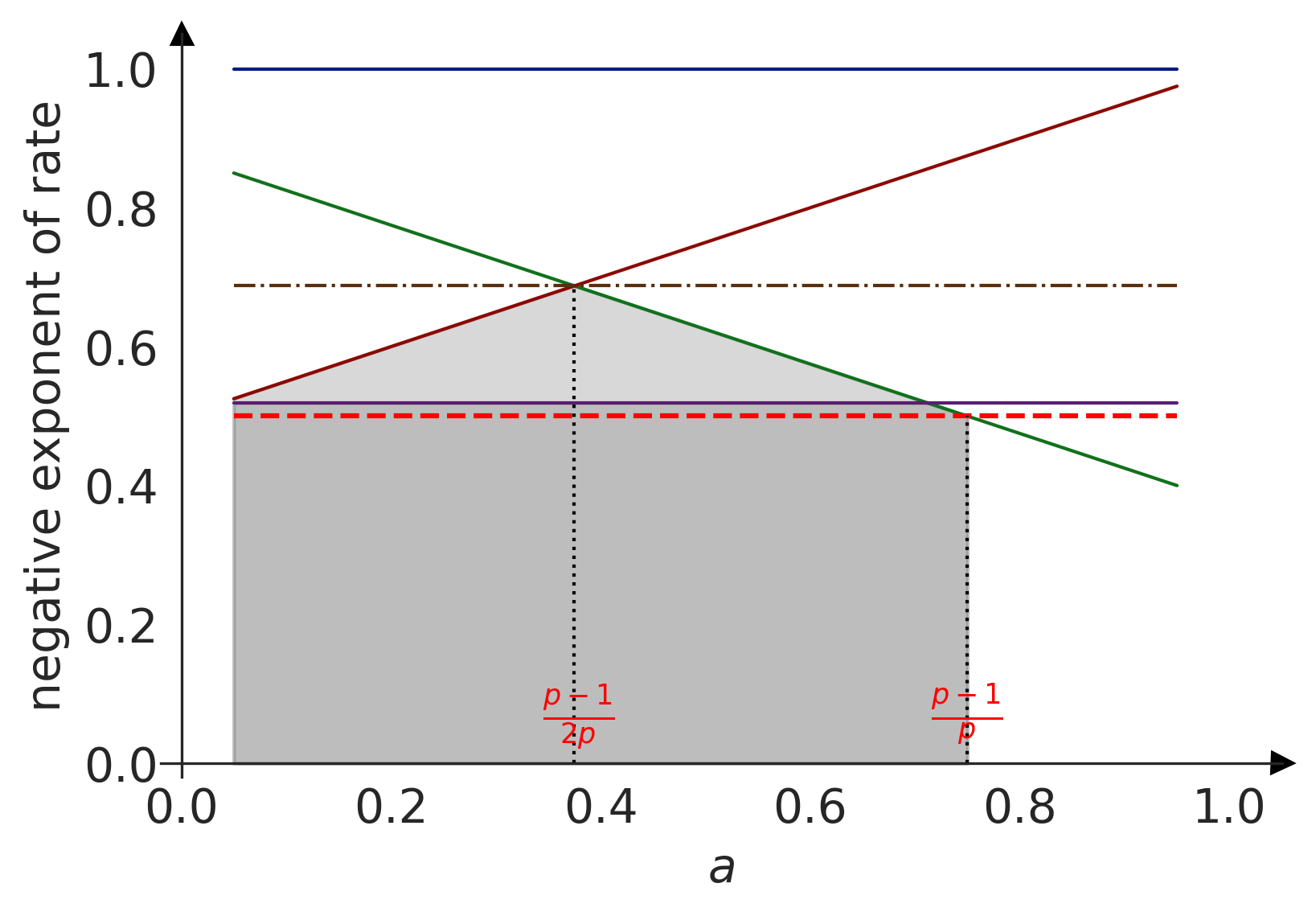}
    \includegraphics[scale=0.42]{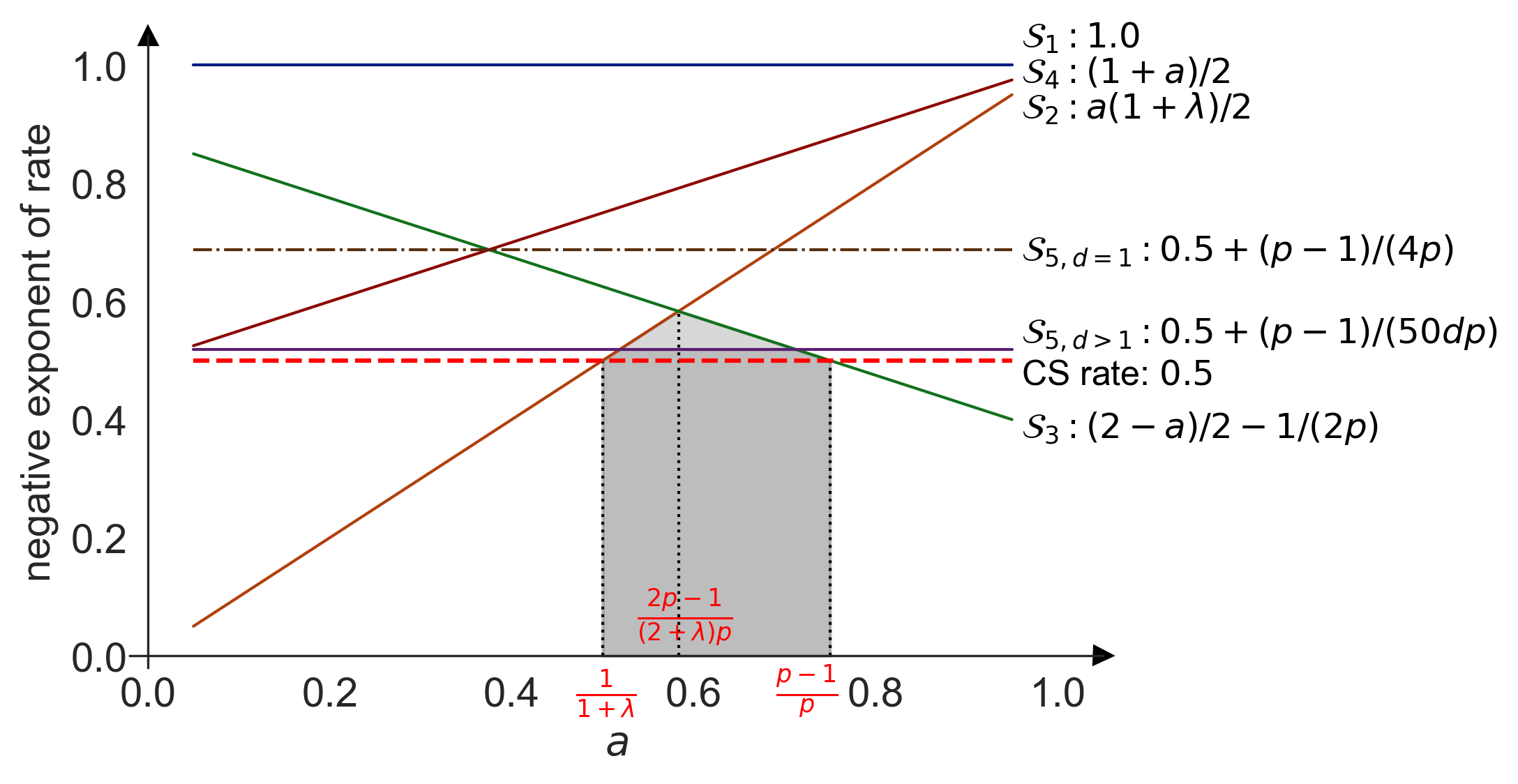}
    \caption{Almost-sure approximation rates for  all error terms with respect to the choice of $a$ in both linear (Left) and nonlinear (Right) SA settings. Respectively, $\gS_1$ represents the initialization error, $\gS_2$ represents the linearization error which is absent in the linear case, $\gS_3$ represents the matrix-inverse approximation error, $\gS_4$ represents the covariance calibration error, and $\gS_5$ represents the error due to the almost-sure invariance principles.}
    \label{fig: as_rate}
\end{figure}

Firstly, the linearization error $\gS_2$ and the covariance calibration error $\gS_4$ decay faster for larger $a$, while the matrix-inverse approximation error $\gS_3$ decays slower for larger $a$. This is due to the fact that both $\gS_2$ and $\gS_4$ are controlled by the $L^2$ error $\E\|\bsx_t - \bsx^*\|^2$. For large $t>0$, $\E\bsx_t$ gets very close to $\bsx^*$ so the variance of the iterate dominates its bias in $\E\|\bsx_t - \bsx^*\|^2$. Since its variance is proportional to that of the increment $-\eta_{t} \bsG(\bsx_{t},\xi_{t+1})$ in \eqref{eq: sa}, it follows that larger $a$ implies smaller $\eta_t$ and thus faster convergence of $\E\|\bsx_t - \bsx^*\|^2$ (see Lemma \ref{lem: as l2 conv} for details). On the other hand, $\gS_3$ purely depends on the approximation rate of $\bsA_j^t$ to $\bsH^{-1}$, which is unrelated to the randomness in data points $\{\xi_t\}_{t\geq 0}$. From the definition of $\bsA_j^t$ as well as Lemma \ref{lem: convergence matrix}, smaller $\eta_t$ will, on the contrary, impede the convergence of $\bsA_j^t$.

Secondly, we hope the overall approximation rate $r_{t,\epsilon}+w_{t,\epsilon,d}$ to be of order $\Tilde{o}(t^{-1/2})$, so that the invariance principle and the law of the iterated logarithm for $\frac{1}{t}\sum_{j=1}^t \bsG_j$ will also hold for $\Bar{\bsx}_t - \bsx^*$. To meet this requirement, the rate of step sizes $a$ must satisfy $0<a<\frac{p-1}{p}$ in the linear case and $\frac{1}{1+\lambda} < a < \frac{p-1}{p}$ in the nonlinear case, the latter of which implicitly requires $p>\frac{1+\lambda}{\lambda}$. In the special case where $\lambda=1$ and $p=\infty$, the feasible range for $a$ becomes $0<a<1$ and $1/2 < a < 1$, respectively, matching the corresponding conditions in the literature \citep{polyak1992acceleration}. In the sequel, we add these conditions for $a$ to pave the way for further development of time-uniform inference.

\begin{assumption}[Rate condition]
\label{ass: step size rate}
    The constants $\lambda, p, a, L_H$ satisfy one of the following conditions:
    \begin{enumerate}[(i)]
        \item (Linear SA) $L_H = 0$ and $0<a<\frac{p-1}{p}$.
        \item (Nonlinear SA) $p>\frac{1+\lambda}{\lambda}$ and $\frac{1}{1+\lambda} < a < \frac{p-1}{p}$.
    \end{enumerate}
\end{assumption}

\subsection{Optimal Approximation Rates}
\label{sec: optimal rate}
In this section, we study the best possible approximation rate of the overall errors in Theorem \ref{thm: strong approx}. Note that $w_{t,\epsilon,d}$ stems from the existing almost-sure invariance principles \citep{strassen1967almost,philipp1986note} that are generally hard to improve, so we mainly focus on improving $r_{t,\epsilon}$. According to Figure \ref{fig: as_rate}, for linear SA problems, the best possible approximation rate is attained at $a = \frac{p-1}{2p}$, in which case $r_{t,\epsilon} = \widetilde{O}\Big(t^{-\frac{3p-1}{4p}}\Big)$; for nonlinear SA problems, the best possible approximation rate is attained at $a = \frac{2p-1}{(2+\lambda)p}$, in which case $r_{t,\epsilon} = \widetilde{O}\Big(t^{-\frac{(1+\lambda)(2p-1)}{2(2+\lambda)p}}\Big)$. In the special case where $\lambda=1$ and $p=\infty$, the optimal approximation rate becomes $r_{t,\epsilon} = \widetilde{O}(t^{-3/4})$ for the linear case and $r_{t,\epsilon} = \widetilde{O}(t^{-2/3})$ for the nonlinear case, respectively. These are brand new results that characterize the almost-sure convergence rates of the averaged iterates to a scaled sum of i.i.d. random vectors:
\begin{align}
\label{eq:opt app rate}
    \Bar{\bsx}_t - \bsx^* = \frac{1}{t} \sum_{j=1}^t \bsH^{-1}\Tilde{\bsvareps}_j + \Tilde{o}(r_{t}) \quad \text{a.s.}
\end{align}

\begin{corollary}[Linear SA]
    Suppose Assumption \ref{ass: local linear} holds with $L_H = 0$, i.e., the problem is linear. Under Assumptions \ref{ass: lyapunov}-\ref{ass: noise} and \ref{ass: step size rate}, if we set $a=\frac{p-1}{2p}$, then \eqref{eq:opt app rate} holds with $r_{t} = t^{-\frac{3p-1}{4p}}$.
    In particular, if $\lambda=1$, $p=\infty$, and we set $a=1/2$, then $r_t = t^{-3/4}$. The above approximation rates are optimal in the choice of $a$.
\end{corollary}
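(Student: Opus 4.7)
The plan is to specialize Theorem~\ref{thm: strong approx} to the linear regime $L_H = 0$ and then optimize its approximation-rate exponent in the step-size parameter $a$. Two simplifications drop out immediately. First, $L_H = 0$ turns the local-linearity bound in Assumption~\ref{ass: local linear} into the identity $\bsg(\bsx) = \bsH(\bsx - \bsx^*)$ on $\{\|\bsx - \bsx^*\| \leq \delta_H\}$, so the residual $\bsr_j = \bsH(\bsx_j - \bsx^*) - \bsg(\bsx_j)$ vanishes for every $j$ with $\bsx_j$ in that neighborhood, which holds eventually almost surely because $\bsx_j \to \bsx^*$ a.s. Hence the linearization term $\gS_2$ contributes $0$ to the tail of the five-term decomposition, and the summand $t^{-a(1+\lambda)/2}(\log t)^{1+\epsilon}$ drops out of $r_{t,\epsilon}$. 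Second, because~\eqref{eq:opt app rate} keeps $\gS_5 = \tfrac{1}{t}\sum_{j=0}^{t-1} \bsH^{-1}\tilde{\bsvareps}_{j+1}$ in its original i.i.d.\ form rather than replacing it by a Gaussian partial sum, the Strassen--Philipp almost-sure invariance step is bypassed and the entire $w_{t,\epsilon,d}$ penalty disappears from the error. What remains is $\gS_1 + \gS_3 + \gS_4$, whose orders (up to polylogarithmic factors absorbed into $\tilde o$) are $t^{-1}$, $t^{-((2-a)/2 - 1/(2p))}$, and $t^{-(1+a)/2}$, as obtained in Lemmas~\ref{lem: as clt approx} and~\ref{lem: strong approx}.

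The optimization is then elementary. On the admissible range $0 < a < (p-1)/p$ from Assumption~\ref{ass: step size rate}(i), the $\gS_1$-exponent is constant at $1$, the $\gS_3$-exponent $\tfrac{2-a}{2} - \tfrac{1}{2p}$ is strictly decreasing in $a$, and the $\gS_4$-exponent $\tfrac{1+a}{2}$ is strictly increasing in $a$. The maximum of the minimum of these three exponents is therefore attained at the unique $a$ where $\gS_3$ and $\gS_4$ match, namely $\tfrac{2-a}{2} - \tfrac{1}{2p} = \tfrac{1+a}{2}$, giving $a = \tfrac{p-1}{2p}$ and common value $\tfrac{3p-1}{4p}$. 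Since $\tfrac{3p-1}{4p} < 1$ the $\gS_1$-exponent is not binding, and substituting back into~\eqref{eq:opt app rate} yields $r_t = t^{-(3p-1)/(4p)}$. The limiting case $\lambda = 1$, $p = \infty$ then gives $a = 1/2$ and $r_t = t^{-3/4}$.

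Optimality in the choice of $a$ is an immediate consequence of the opposing monotonicities: for any admissible $a \neq \tfrac{p-1}{2p}$, at least one of the $\gS_3$- or $\gS_4$-exponents drops strictly below $\tfrac{3p-1}{4p}$, so no other step-size exponent can improve the overall rate extracted from this decomposition. The main obstacle I anticipate is purely bookkeeping, namely making sure the constituent bounds of Lemmas~\ref{lem: as clt approx} and~\ref{lem: strong approx} remain valid at $a = \tfrac{p-1}{2p}$, which in the linear case can be smaller than $1/2$; the delicate step is checking that the $L^2$-convergence estimates on $\E\|\bsx_t - \bsx^*\|^2$ feeding into the $\gS_4$-bound retain their stated decay throughout the broader range $0 < a < (p-1)/p$ permitted by Assumption~\ref{ass: step size rate}(i), which in the purely linear setting follows from explicit matrix-product computations on $\prod_{i=j+1}^s(\bsI - \eta_i \bsH)$ together with the spectral positivity of $\bsH$ rather than from the $\sum \eta_t^2 < \infty$ condition normally imposed on the step sizes.
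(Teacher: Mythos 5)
Your proposal is correct and takes essentially the same route as the paper's implicit argument: the paper states this corollary as a direct consequence of Theorem~\ref{thm: strong approx} together with the exponent-optimization discussed in Section~\ref{sec: optimal rate}, and you carry out exactly that optimization — drop $\gS_2$ when $L_H=0$, drop the ASIP penalty $w_{t,\epsilon,d}$ because \eqref{eq:opt app rate} targets the martingale $\tfrac{1}{t}\sum_j \bsH^{-1}\tilde\bsvareps_j$ rather than the Gaussian partial sum, and equate the remaining $\gS_3$- and $\gS_4$-exponents $\tfrac{2-a}{2}-\tfrac{1}{2p}$ and $\tfrac{1+a}{2}$ to obtain $a=(p-1)/(2p)$ and common exponent $(3p-1)/(4p)$.

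The caveat you flag at the end is substantive, and it is worth saying out loud that the paper itself does not resolve it. The optimizing $a=(p-1)/(2p)$ is strictly below $1/2$ for every finite $p$ (and equals $1/2$ only in the limit $p=\infty$), which falls outside the interval $1/2<a<1$ of Assumption~\ref{ass: step size}; consequently $\sum_t\eta_t^2=\infty$, and the Robbins--Siegmund argument behind Lemma~\ref{lem: as l2 conv} — which feeds both the $\gS_4$ calibration bound and the $L^2$ input to $\gS_2$ — is not directly applicable. Your proposed fix, namely proving $\E\|\bsx_t-\bsx^*\|^2\lesssim\eta_t$ in the linear case by working directly with the exact recursion $\bsx_{t+1}-\bsx^*=(\bsI-\eta_t\bsH)(\bsx_t-\bsx^*)+\eta_t\bsvareps_{t+1}$ and the spectral decay of the product $\prod_i(\bsI-\eta_i\bsH)$, is the right idea and does work for all $a\in(0,1)$ given $\Real\lambda_i(\bsH)>0$; but as written it is an assertion rather than a proof, and closing this loop is exactly what is needed to make the corollary fully rigorous at the optimizing step size. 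You should be explicit that Assumption~\ref{ass: step size rate}(i) is being read as replacing, not intersecting, the range in Assumption~\ref{ass: step size} for the linear case.
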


\begin{corollary}[Nonlinear SA]
    Suppose Assumption \ref{ass: local linear} holds with $L_H > 0$, i.e., the problem is nonlinear. Under Assumptions \ref{ass: lyapunov}-\ref{ass: noise} and \ref{ass: step size rate}, if we set $a=\frac{2p-1}{(2+\lambda)p}$, then \eqref{eq:opt app rate} holds with $r_t = t^{-\frac{(1+\lambda)(2p-1)}{2(2+\lambda)p}}$.
    In particular, if $\lambda=1$, $p=\infty$, and we set $a=2/3$, then $r_t = t^{-2/3}$. The above approximation rates are optimal in the choice of $a$.
\end{corollary}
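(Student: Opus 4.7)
The plan is to read off Theorem~\ref{thm: strong approx} and then minimize its stated rate exponent over the step-size parameter $a$. A crucial observation is that the fifth piece in the proof-sketch decomposition of $\Bar{\bsx}_t-\bsx^*$ is exactly $\gS_5 = \tfrac{1}{t}\sum_{j=0}^{t-1}\bsH^{-1}\tilde{\bsvareps}_{j+1}$, which already matches the desired right-hand side of~\eqref{eq:opt app rate}. Consequently the almost-sure invariance-principle error $w_{t,\epsilon,d}$ plays no role in this corollary, and I only need to control the aggregate $\gS_1+\gS_2+\gS_3+\gS_4$ at the claimed polynomial rate.

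From Theorem~\ref{thm: strong approx}, these four terms decay with (negative) polynomial exponents $1$, $\tfrac{a(1+\lambda)}{2}$, $\tfrac{2-a}{2}-\tfrac{1}{2p}$, and $\tfrac{1+a}{2}$ respectively, up to polylogarithmic factors. Under the nonlinear feasible range $\tfrac{1}{1+\lambda}<a<\tfrac{p-1}{p}$ together with $\lambda\le 1$, one checks $a\lambda<1$, so $\gS_4$ decays strictly faster than $\gS_2$; likewise $\gS_1$ trivially decays strictly faster than $\gS_3$. The overall rate is therefore governed by $\max\{\gS_2,\gS_3\}$. Since the exponent of $\gS_2$ is strictly increasing in $a$ and that of $\gS_3$ is strictly decreasing in $a$, balancing $\tfrac{a(1+\lambda)}{2}=\tfrac{2-a}{2}-\tfrac{1}{2p}$ gives the unique minimax solution $a^\star=\tfrac{2p-1}{(2+\lambda)p}$, at which the common decay exponent equals $\tfrac{(1+\lambda)(2p-1)}{2(2+\lambda)p}$; this is exactly the claimed rate $r_t$, and any deviation from $a^\star$ enlarges the slower of $\gS_2,\gS_3$, establishing optimality in $a$.

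It remains to verify admissibility. Both endpoint inequalities $\tfrac{1}{1+\lambda}<a^\star<\tfrac{p-1}{p}$ reduce after clearing denominators to the single condition $p>\tfrac{1+\lambda}{\lambda}$, which is precisely the hypothesis of Assumption~\ref{ass: step size rate}(ii), so the rate condition makes the optimizer feasible. Specializing $\lambda=1$ and $p\to\infty$ then yields $a^\star\to 2/3$ and exponent $2/3$, recovering the stated special case $r_t=t^{-2/3}$. The main obstacle is not technical -- the hard analytic work already sits inside Theorem~\ref{thm: strong approx} -- but rather the careful bookkeeping to confirm that the dominance ordering $\{\gS_1,\gS_4\}\prec\{\gS_2,\gS_3\}$ genuinely holds throughout the entire admissible window for $a$, so that the optimization collapses to the clean two-term balance above instead of a messier four-way trade-off in which, for instance, $\gS_4$ might become active near the boundary $a\to(p-1)/p$.
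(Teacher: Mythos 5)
Your argument is correct and follows the same route the paper takes (implicitly, via Figure~\ref{fig: as_rate} and the discussion in Section~\ref{sec: optimal rate}): read off the four polynomial exponents from Theorem~\ref{thm: strong approx}, note that~\eqref{eq:opt app rate} targets the pre-Gaussian sum $\tfrac{1}{t}\sum_j\bsH^{-1}\Tilde{\bsvareps}_j$ so $w_{t,\epsilon,d}$ does not enter, and then observe that the slowest-decaying terms are always $\gS_2$ (increasing in $a$) and $\gS_3$ (decreasing in $a$), yielding the two-term balance $\tfrac{a(1+\lambda)}{2}=\tfrac{2-a}{2}-\tfrac{1}{2p}$. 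Your dominance checks ($a\lambda<1$ so $\gS_4\prec\gS_2$, and $\tfrac{2-a}{2}-\tfrac{1}{2p}<1$ so $\gS_3\succ\gS_1$) and your verification that $a^\star=\tfrac{2p-1}{(2+\lambda)p}$ lies inside $\bigl(\tfrac{1}{1+\lambda},\tfrac{p-1}{p}\bigr)$ precisely under Assumption~\ref{ass: step size rate}(ii) are exactly the bookkeeping the paper leaves to the reader, and both reduce cleanly to $p>\tfrac{1+\lambda}{\lambda}$; specializing $\lambda=1$, $p\to\infty$ gives $a^\star\to 2/3$ and exponent $2/3$ as claimed.
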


\section{Construction of Asymptotic Confidence Sequences}

Having established the almost-sure Gaussian approximation result, it is reasonable to believe that confidence sequences for the well-behaved sum of Gaussians will also serve as confidence sequences for the averaged iterates in some asymptotic sense, since as $t\rightarrow \infty$ the above two behaves quite the same except for negligible terms. In this section, we follow this idea to design such asymptotic confidence sequences for the averaged iterates, and provide suitable error guarantees for them.

\subsection{Confidence Sequences for the Approximating Process}
\label{sec: boundary}

We consider three different types of confidence sequences for the mean of multivariate Gaussians, which serve as building blocks for time-uniform inference on averaged iterates. Their specific forms are displayed in Table \ref{tab: cs}, where we denote $\{\bsM_t\}_{t\geq 1}$ as the mean of $d$-dimensional i.i.d. Gaussian random vectors with mean zero and covariance $\bsV$. Detailed derivation is deferred to Appendix \ref{app: proof cs}.

\begin{table}[ht]
    \centering
    \caption{Confidence sequences for the mean of multivariate Gaussians}
    \label{tab: cs}
    \begin{tabular}{lll}
    \toprule
       Notation & Form & Reference \\
    \midrule
       $C_{\bsV}^{\text{LIL-UB}}(t)$ & $\left\| \bsV^{-\frac{1}{2}}\bsM_t \right\|_\infty \leq 1.7 \sqrt{\frac{ \log\log(2t) + 0.72 \log\left(\frac{10.4d}{\alpha} \right) }{t}}$ & \cite{howard2021time} Eq. 3.4 \\
       $C_{\bsV}^{\text{GM}}(t)$ & $\left\| \bsV^{-\frac{1}{2}}\bsM_t \right\|_2 \leq \sqrt{\frac{\left\{1 + \frac{t_0}{t\lambda^*}\right\} \left\{d\log\left(1 + \frac{t\lambda^*}{t_0} \right) + 2\log\left(\frac{1}{\alpha}\right) \right\} }{t} }$ & \cite{howard2021time} Eq. 3.7 \\
       \multirow{2}{*}{$C^{\text{LIL-EN}}_{\bsV}(t)$} & $\left\| \bsV^{-\frac{1}{2}}\bsM_t \right\|_2 \leq \frac{2}{1-\epsilon} \sqrt{\frac{1.4\log\log(2t\kappa(\bsV)) + L_{d,\epsilon,\alpha}(\bsV)}{ t}} $ 
       & \multirow{2}{*}{\cite{whitehouse2023timeuniform} Cor. 4.3} \\
        & $L_{d,\epsilon,\alpha}(\bsV) := \log\left( \frac{5.2C_d}{\alpha}\right) + (d-1)\log\left(\frac{3\sqrt{\kappa(\bsV)}}{\epsilon}\right)$ & \\
    \bottomrule
    \end{tabular}
\end{table}

\paragraph{LIL boundary with union bounds, $C_{\bsV}^{\text{LIL-UB}}(t)$.} This bound 
comes from Equation 3.4 of \cite{howard2021time}, which is called the law of the iterated logarithm (LIL) boundary because this boundary scales as $O(\sqrt{t^{-1}\log\log t})$. To accommodate the multivariate case in out settings, we use a union bound over $d$ dimensions of the standardized process $\bsV^{-\frac{1}{2}}\bsM_t$, which results in a $L^\infty$-type confidence sequence (see Appendix \ref{app: lil ub} for details).

\paragraph{Gaussian mixture boundary, $C_{\bsV}^{\text{GM}}(t)$.} This $L^2$-type bound comes from Equation 3.7 of \cite{howard2021time} and originally dates back to \cite{robbins1970boundary,lai1976boundary}. In the formula of $C_{\bsV}^{\text{GM}}(t)$, $t_0\geq 1$ is a user-specified time, and $\lambda^* = -W_{-1}(-\alpha^2/e)$ where $W_{-1}(x)$ is the lower branch of the Lambert $W$ function, i.e., the most negative real-valued solution in $z$ to $ze^z = x$. Such specific form of $C_{\bsV}^{\text{GM}}(t)$ results in the minimal volume of the confidence region at time $t_0$ among all Gaussian mixture boundaries (see Appendix \ref{app: gauss mix} for details).

\paragraph{LIL boundary with $\varepsilon$-nets, $C^{\text{LIL-EN}}_{\bsV}(t)$.} This $L^2$-type bound comes from \cite{whitehouse2023timeuniform} and scales as $O(\sqrt{t^{-1}\log\log t})$. Unlike the union bound technique used for $C_{\bsV}^{\text{LIL-UB}}(t)$, here $C^{\text{LIL-EN}}_{\bsV}(t)$ is derived via the $\varepsilon$-net arguments (e.g., Chapter 5 of \cite{wainwright2019high}). The boundary contains a user-specified $\epsilon\in (0,1)$, the conditional number $\kappa(\bsV)$, and a constant $C_d = \frac{d 2^d \Gamma(\frac{d+1}{2})}{\pi^{\frac{d-1}{2}}}$. All other hyperparameters in the original form (Corollary 4.3 of \cite{whitehouse2023timeuniform}) are set to be the same as those in $C^{\text{LIL-UB}}_{\bsV}(t)$ for ease of comparison (See Appendix \ref{app: lil en} for details).

\begin{figure}[ht]
    \centering
    \includegraphics[scale=0.95]{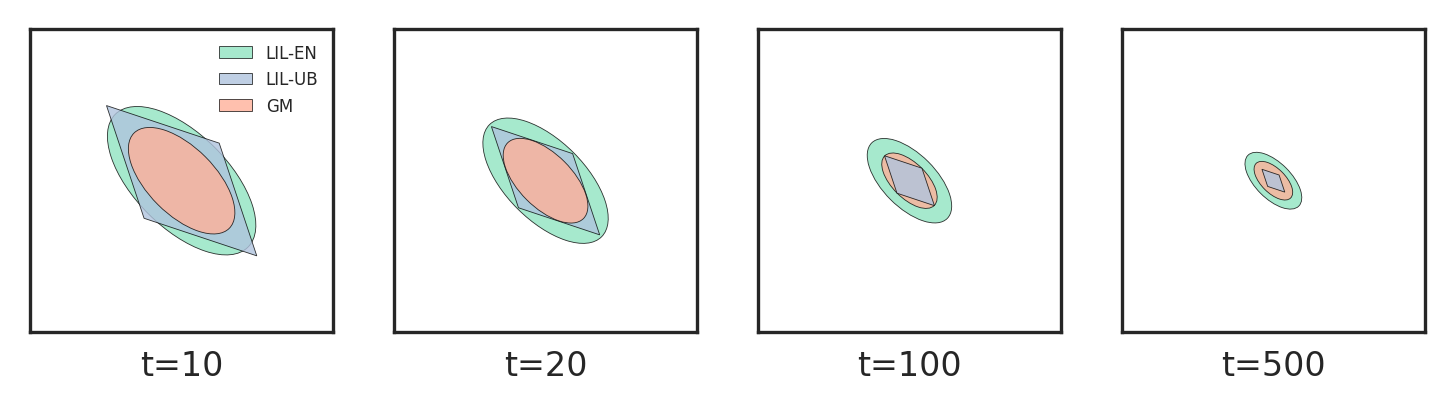}
    \caption{The shapes of three confidence regions at time $t\in \{10, 20, 100, 500\}$ in a 2D example.}
    \label{fig: cs_size}
\end{figure}

Figure \ref{fig: cs_size} plots the above three confidence regions at different times in a 2D example. LIL-EN and GM boundaries are ellipsoids while LIL-UB boundaries are polygons. GM boundaries are generally smaller than LIL-UB boundaries when $t$ is small but larger when $t$ is large. This conforms to the $O(\sqrt{t^{-1}\log t})$ rate of the former which grows faster than the $O(\sqrt{t^{-1}\log\log t})$ rate of the latter. LIL-EN boundaries are always the largest in this illustration.

The following Proposition \ref{prop: general boundary gaussian} formally states the time-uniform coverage property for these confidence sequences.

\begin{proposition}[Time-Uniform Coverage for Gaussians]
\label{prop: general boundary gaussian}
    Let $\bsG_1,\bsG_2,\dots$ be i.i.d. Gaussian random vectors with mean zero and covariance $\bsH^{-1}\bsS \bsH^{-1}$, and let $\bsM_t := \frac{1}{t}\sum_{j=1}^t \bsG_j$. Then for any $\clubsuit \in \{ \text{LIL-UB}, \text{GM}, \text{LIL-EN} \}$ and any $\alpha \in (0,1)$, it holds that
    \begin{align*}
        \PB \left( \forall t\geq 1\colon \bsM_t \in C_{\bsH^{-1}\bsS \bsH^{-1}}^{\clubsuit}(t) \right) \geq 1-\alpha.
    \end{align*}
\end{proposition}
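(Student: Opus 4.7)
The plan is to reduce all three cases to bounds on the standardized process. Let $\bsV := \bsH^{-1}\bsS\bsH^{-1}$, which is symmetric positive definite, so $\bsV^{-1/2}$ is well defined. Setting $\bsZ_j := \bsV^{-1/2}\bsG_j$, the random vectors $\bsZ_1, \bsZ_2, \ldots$ are i.i.d.\ standard Gaussians in $\R^d$, and the quantity appearing inside every boundary in Table~\ref{tab: cs} is exactly $\bsV^{-1/2}\bsM_t = \frac{1}{t}\sum_{j=1}^{t} \bsZ_j$, a sample mean of i.i.d.\ isotropic Gaussians. So each coverage statement is equivalent to a time-uniform bound for this canonical object, and I would verify the three cases separately by invoking the corresponding cited constructions.

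For the LIL-UB boundary, I would apply the univariate line-crossing inequality of \cite{howard2021time} (Eq.~3.4), which states that for a scalar sample mean $\bar Z_t$ of standard Gaussians and any level $\beta \in (0,1)$, one has $\PB(\forall t \geq 1\colon |\bar Z_t| \leq 1.7\sqrt{(\log\log(2t) + 0.72 \log(10.4/\beta))/t}) \geq 1-\beta$, to each of the $d$ coordinates of $\bsV^{-1/2}\bsM_t$ with $\beta = \alpha/d$, and then take a union bound. Since the $\ell^\infty$ norm is the coordinatewise maximum, this yields the required simultaneous containment, with the inflation of $1/\alpha$ to $d/\alpha$ matching the constant in the statement.

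For the GM boundary, I would use the standard Robbins-type Gaussian mixture supermartingale. For each $\lambda > 0$ and unit vector $\vu \in \R^{d}$, $\exp(\lambda \vu^\top \sum_{j \leq t} \bsZ_j - t\lambda^2/2)$ is a nonnegative martingale of mean one. Mixing over a Gaussian prior on $\sqrt t\,\lambda \vu$ with covariance $(t_0/\lambda^\star) \bsI_d$ and applying Ville's inequality gives the boundary on $\|\bsV^{-1/2}\bsM_t\|_2$, where the constant $\lambda^\star = -W_{-1}(-\alpha^2/e)$ arises from minimizing the boundary's volume at the design time $t_0$; the derivation is essentially the same as \cite{howard2021time} Eq.~3.7 and I would cite Appendix~\ref{app: gauss mix} for the detailed optimization. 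The main thing to verify is that the initial value of the mixture supermartingale equals one and that the resulting Ville threshold, inverted in $t$, reproduces exactly the stated boundary.

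For the LIL-EN boundary, I would invoke Corollary~4.3 of \cite{whitehouse2023timeuniform}, whose statement directly supplies a time-uniform bound on $\|\bsV^{-1/2}\bsM_t\|_2$ built from the univariate LIL inequality, an $\varepsilon$-net of the unit sphere (producing the $C_d$ factor and the $(d-1)\log(3\sqrt{\kappa(\bsV)}/\varepsilon)$ term), and a union bound over the net. Since their hypotheses are satisfied with $(\bsZ_j)$ being i.i.d.\ subgaussian with identity covariance, and the leading constants are chosen to align with $C^{\text{LIL-UB}}_{\bsV}(t)$, the result transfers verbatim. The routine but careful step that needs attention across all three cases is the standardization: one must check that the quadratic form $\|\bsV^{-1/2}\bsM_t\|_2$ (respectively the $\ell^\infty$ norm) is exactly what the cited inequalities control, so the constants $\kappa(\bsV)$ and the dimensional factor $d$ enter only where shown. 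I expect no genuine obstacle beyond this bookkeeping, since the substantive probabilistic content is already contained in the cited line-crossing and mixture inequalities.
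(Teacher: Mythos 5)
Your proposal follows the paper's proof exactly: standardize by $\bsV^{-1/2}$, then handle the three boundaries separately via a coordinatewise union bound at level $\alpha/d$ (LIL-UB), a Gaussian-mixture supermartingale plus Ville's inequality (GM), and the $\varepsilon$-net argument of \cite{whitehouse2023timeuniform} (LIL-EN), which is precisely what Propositions \ref{prop: lil bound}, \ref{prop: gauss mix}/Corollary \ref{coro: gauss mix}, and \ref{prop: lil en} do. One small slip to fix when you carry out the GM case: the mixing law cannot involve $\sqrt{t}$, and in standardized coordinates the Gaussian prior on the $d$-dimensional mixing variable should have covariance $(\lambda^*/t_0)\bsI_d$, not $(t_0/\lambda^*)\bsI_d$ (equivalently $\bsSigma = (\lambda^*/t_0)(\bsH^{-1}\bsS\bsH^{-1})^{-1}$ in unstandardized coordinates, as in Corollary \ref{coro: gauss mix}).
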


\begin{remark}
    The results above assume simultaneous inference on all $d$ coordinates. To perform inference on only a subset $\gS\subseteq [d]$ of coordinates, it suffices to replace $\bsM_t$ with $[\bsM_t]_\gS$, $\bsH^{-1}\bsS\bsH^{-1}$ with $[\bsH^{-1}\bsS\bsH^{-1}]_{\gS,\gS}$, and $d$ with $|\gS|$ in the forms of confidence sequences in Proposition \ref{prop: general boundary gaussian} and Table \ref{tab: cs}.
\end{remark}

\subsection{Almost-Sure Convergent Variance Estimator}
\label{sec: boundary approx}

The confidence sequences proposed in Section \ref{sec: boundary} all rely on the covariance $\bsH^{-1}\bsS \bsH^{-1}$, which is usually unknown a priori. Consequently, to construct practical confidence sequences for time-uniform inference, we need to design consistent estimators for $\bsH^{-1}\bsS \bsH^{-1}$ in an online fashion. We mainly study the plug-in estimator \citep{chen2020statistical}, i.e., we estimate $\bsH$ and $\bsS$ with 
\begin{align*}
    \widehat{\bsH}_t := \frac{1}{t} \sum_{j=0}^{t-1} \Bar{\bsH}(\bsx_j, \xi_{j+1}),\quad \widehat{\bsS}_t := \frac{1}{t} \sum_{j=0}^{t-1} \bsG(\bsx_j, \xi_{j+1}) \bsG(\bsx_j, \xi_{j+1})^\top,
\end{align*}
and then use $\widehat{\bsH}_t^{-1}\widehat{\bsS}_t \widehat{\bsH}_t^{-1}$ to approximate $\bsH^{-1} \bsS \bsH^{-1}$.
\cite{chen2020statistical} proved $L^1$ convergence rates for $\widehat{\bsH}_t$ and $\widehat{\bsS}_t$, but these are insufficient for proving asymptotic time-uniform coverage guarantees \citep{waudby2021time}. Instead, we need almost-sure convergence for these estimators, which we present below.

\begin{assumption}[Unbiased estimate]
\label{ass: random est H}
    There exists a random function $\Bar{\bsH}(\cdot, \xi)\colon \R^{d} \rightarrow \R^{d\times d}$ such that $\bsH = \E \big( \Bar{\bsH}(\bsx^*,\xi) \big)$ and $\E \big( \|\Bar{\bsH}(\bsx^*,\xi)\|^{\Bar{p}} \big) < \infty$ for some $\Bar{p} > 1$.
\end{assumption}

\begin{assumption}[Lipschitzness]
\label{ass: lipschitz H}
    There exists a constant $L_{\Bar{H}} > 0$ such that for any $\bsx, \bsy \in \R^d$,
    \begin{align*}
        \sqrt{\E\big( \| \Bar{\bsH}(\bsx,\xi) - \Bar{\bsH}(\bsy,\xi) \|^2 \big) } \leq L_{\Bar{H}} \|\bsx - \bsy\|.
    \end{align*}
\end{assumption}

\begin{proposition}
\label{prop: as cov H S}
    Under Assumptions \ref{ass: lyapunov}-\ref{ass: noise} and \ref{ass: step size rate}-\ref{ass: lipschitz H}, it holds that
    \begin{align*}
        \widehat{\bsH}_t = \bsH + \tilde{o}\Big(t^{-\min\left\{1-\frac{1}{\Bar{p}}, \frac{a}{2}\right\}}\Big) \quad \text{and}\quad \widehat{\bsS}_t = \bsS + \tilde{o}\big(t^{-\frac{a}{2}}\big) \quad \text{a.s.}
    \end{align*}
\end{proposition}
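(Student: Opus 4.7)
The plan is to handle both estimators in parallel via a baseline-plus-perturbation split, reducing each one to (i) a Marcinkiewicz--Zygmund strong law for an i.i.d.\ matrix sum evaluated at $\bsx^*$, and (ii) a Ces\`aro average of Lipschitz perturbations driven by the almost-sure rate of $\|\bsx_j - \bsx^*\|$.

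\textbf{Step 1: Decomposition.} For $\widehat{\bsH}_t$, write
\begin{align*}
    \widehat{\bsH}_t - \bsH = \underbrace{\frac{1}{t}\sum_{j=0}^{t-1}\big[\Bar{\bsH}(\bsx^*, \xi_{j+1}) - \bsH\big]}_{\text{baseline}} + \underbrace{\frac{1}{t}\sum_{j=0}^{t-1}\big[\Bar{\bsH}(\bsx_j, \xi_{j+1}) - \Bar{\bsH}(\bsx^*, \xi_{j+1})\big]}_{\text{perturbation}}.
\end{align*}
For $\widehat{\bsS}_t$, use $\bsG(\bsx^*, \xi_{j+1}) \bsG(\bsx^*, \xi_{j+1})^\top$ as the baseline; its mean equals $\bsS$ since $\bsg(\bsx^*) = \vzero$ and Assumption~\ref{ass: noise} identifies $\bsS$ with $\E[\bsG(\bsx^*,\xi)\bsG(\bsx^*,\xi)^\top]$. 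Expand the residual via
\begin{align*}
    \bsG(\bsx_j)\bsG(\bsx_j)^\top - \bsG(\bsx^*)\bsG(\bsx^*)^\top &= [\bsG(\bsx_j)-\bsG(\bsx^*)]\bsG(\bsx^*)^\top + \bsG(\bsx^*)[\bsG(\bsx_j)-\bsG(\bsx^*)]^\top \\
    &\quad + [\bsG(\bsx_j)-\bsG(\bsx^*)][\bsG(\bsx_j)-\bsG(\bsx^*)]^\top.
\end{align*}

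\textbf{Step 2: Baseline rate.} The baseline sums are i.i.d.\ matrix averages of centered summands. Applied entry-wise, the Marcinkiewicz--Zygmund strong law gives rate $o(t^{-(1-1/q)})$ a.s.\ for centered i.i.d.\ sequences with finite $q$-th moment, $1<q\le 2$. Invoking it with $q=\Bar{p}$ (Assumption~\ref{ass: random est H}) and with $q=p$ (Assumption~\ref{ass: noise}, combined with the submultiplicative bound $\|\bsG(\bsx^*)\bsG(\bsx^*)^\top\|\le \|\bsG(\bsx^*)\|^2$), the two baselines obey $\tilde{o}(t^{-(1-1/\Bar{p})})$ and $\tilde{o}(t^{-(1-1/p)})$ respectively.

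\textbf{Step 3: Perturbation rate.} A Lyapunov-function argument (Lemma~\ref{lem: as l2 conv}, invoked en route to Theorem~\ref{thm: strong approx}) yields $\E\|\bsx_j-\bsx^*\|^{2p} = O(\eta_j^p) = O(j^{-ap})$; Markov's inequality combined with Borel--Cantelli then upgrades this to an almost-sure rate $\|\bsx_j - \bsx^*\| = \tilde{O}(j^{-a/2})$. Plugging this into the $L^2$-Lipschitz bounds of Assumptions~\ref{ass: lipschitz} and \ref{ass: lipschitz H}, and using $\frac{1}{t}\sum_{j=1}^t j^{-a/2} = O(t^{-a/2})$ since $a/2<1$, gives a perturbation of order $\tilde{O}(t^{-a/2})$ for $\widehat{\bsH}_t$. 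For $\widehat{\bsS}_t$ the linear cross-term is handled by Cauchy--Schwarz paired with the uniformly bounded $L^{2p}$-norm of $\bsG(\bsx^*,\xi)$, while the purely quadratic residual contributes at the faster rate $\tilde{O}(t^{-a})$ and is absorbed; the overall perturbation rate is again $\tilde{O}(t^{-a/2})$.

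\textbf{Step 4: Combining.} The final rate is the slower of baseline and perturbation. For $\widehat{\bsH}_t$ this is $\tilde{o}(t^{-\min\{1-1/\Bar{p},\, a/2\}})$. For $\widehat{\bsS}_t$, Assumption~\ref{ass: step size rate} forces $a<(p-1)/p$, so $a/2<1-1/p$; the perturbation dominates and the overall rate reduces to $\tilde{o}(t^{-a/2})$.

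The main obstacle is upgrading the in-expectation moment bound for $\|\bsx_j-\bsx^*\|$ to a sharp almost-sure pointwise rate and then threading that rate through the multiplicative residual in $\widehat{\bsS}_t$, where no pointwise Lipschitz constant for $\bsG$ is available. In particular, the cross term $\|\bsG(\bsx_j)-\bsG(\bsx^*)\|\cdot\|\bsG(\bsx^*,\xi_{j+1})\|$ must be controlled almost surely after Ces\`aro averaging, which requires pairing the $L^2$-Lipschitz estimate with a strong law for $\frac{1}{t}\sum_j\|\bsG(\bsx^*,\xi_{j+1})\|^2$; this is the step where the exponent $a/2$ common to both rates is pinned down.
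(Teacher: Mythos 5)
Your Step 1 decomposition and Step 2 baseline Marcinkiewicz--Zygmund argument match the paper exactly, but Step 3 has a genuine gap. You propose to first upgrade the in-expectation bound $\E\|\bsx_j - \bsx^*\|^2 \lesssim j^{-a}$ to a pointwise almost-sure rate $\|\bsx_j - \bsx^*\| = \tilde{O}(j^{-a/2})$ via a $2p$-th moment bound, Markov, and Borel--Cantelli, and then plug that rate into the Lipschitz conditions. This fails on three counts. First, the paper's Lemma~\ref{lem: as l2 conv} only delivers an $L^2$ bound on the iterate error; an $L^{2p}$ bound $\E\|\bsx_j-\bsx^*\|^{2p} = O(j^{-ap})$ is not established under the stated assumptions and would require separate work. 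Second, even granting that moment bound, Markov plus Borel--Cantelli yields $\|\bsx_j-\bsx^*\| = O(j^{-a/2 + 1/(2p) + \epsilon})$ almost surely (you need the tail probability $\PB(\|\bsx_j-\bsx^*\| > Cj^{-a/2+\delta}) \lesssim j^{-2p\delta}$ to be summable, forcing $\delta > 1/(2p)$), which is strictly weaker than $\tilde{O}(j^{-a/2})$ and would propagate through the Ces\`aro average to a final rate slower than $t^{-a/2}$. Third, and as you yourself flag, Assumptions~\ref{ass: lipschitz} and \ref{ass: lipschitz H} are only $L^2$-Lipschitz, so a pointwise almost-sure rate on $\|\bsx_j - \bsx^*\|$ does not transfer to a pointwise almost-sure rate on $\|\bsG(\bsx_j,\xi_{j+1}) - \bsG(\bsx^*,\xi_{j+1})\|$ or on the $\Bar{\bsH}$ increment; you would need a further concentration argument to control the Ces\`aro average of these random multipliers.

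The paper sidesteps all of this by never leaving the world of expectations until the last moment. For the perturbation it shows directly that
\[
  \sum_{t\geq 2} \frac{\E\|\Bar{\bsH}(\bsx_t,\xi_{t+1}) - \Bar{\bsH}(\bsx^*,\xi_{t+1})\|}{t^{(2-a)/2}(\log t)^{1+\epsilon}} \lesssim \sum_{t\geq 2}\frac{(\E\|\bsx_t-\bsx^*\|^2)^{1/2}}{t^{(2-a)/2}(\log t)^{1+\epsilon}} \lesssim \sum_{t\geq 2}\frac{1}{t(\log t)^{1+\epsilon}} < \infty,
\]
which by Tonelli implies almost-sure summability of the weighted increments; Kronecker's lemma then gives a Ces\`aro rate of $o(t^{-a/2}(\log t)^{1+\epsilon})$ without ever invoking a pointwise rate on $\|\bsx_j-\bsx^*\|$ or a pointwise Lipschitz constant. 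For $\widehat{\bsS}_t$ the same device handles the cross term via Cauchy--Schwarz on $\E\|\bsu_t\bsv_t^\top\|$ and the quadratic term via $\E\|\bsv_t\|^2$, keeping the whole argument at the level of first moments. If you want to repair your Step 3, replace the Markov--Borel--Cantelli upgrade with this summability-plus-Kronecker argument; it is both shorter and gives the sharp $\tilde{o}(t^{-a/2})$ rate the proposition states.
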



The proof of Proposition \ref{prop: as cov H S} is deferred to Appendix \ref{sec: prop: as cov H S}. The next proposition states that the plug-in estimator $\widehat{\bsH}_t^{-1} \widehat{\bsS}_t \widehat{\bsH}_t^{-1}$ is a reasonable estimator for $\bsH^{-1}\bsS \bsH^{-1}$, with the same rate of convergence as $\widehat{\bsH}_t$. Its proof is deferred to Appendix \ref{sec: prop: as coverge HSH}.

\begin{proposition}
\label{prop: as coverge HSH}
Under Assumptions \ref{ass: lyapunov}-\ref{ass: noise} and \ref{ass: step size rate}-\ref{ass: lipschitz H}, it holds that
\begin{align*}
    \widehat{\bsH}_t^{-1} \widehat{\bsS}_t \widehat{\bsH}_t^{-1} = \bsH^{-1} \bsS \bsH^{-1} + \tilde{o}\Big(t^{-\min\left\{1-\frac{1}{\Bar{p}}, \frac{a}{2}\right\}}\Big) \quad \text{a.s.}
\end{align*}
\end{proposition}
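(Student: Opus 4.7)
The plan is to reduce the claim to a first-order matrix perturbation argument combined with the almost-sure rates already supplied by Proposition \ref{prop: as cov H S}. Write $\bsDelta^H_t := \widehat{\bsH}_t - \bsH$ and $\bsDelta^S_t := \widehat{\bsS}_t - \bsS$. By Proposition \ref{prop: as cov H S}, $\|\bsDelta^H_t\| = \tilde{o}(t^{-\min\{1-1/\bar{p},\,a/2\}})$ and $\|\bsDelta^S_t\| = \tilde{o}(t^{-a/2})$ almost surely. Since Assumption \ref{ass: local linear} guarantees all eigenvalues of $\bsH$ have positive real parts, $\bsH$ is invertible; and $\|\bsDelta^H_t\|\to 0$ almost surely, so on an event of probability one there exists a (random) $T$ such that $\|\bsH^{-1}\|\|\bsDelta^H_t\| \leq 1/2$ for all $t\geq T$. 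On this event $\widehat{\bsH}_t$ is invertible for $t\geq T$, which makes the plug-in quantity well-defined eventually.

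Next I would invoke the standard matrix inverse perturbation identity
\begin{align*}
\widehat{\bsH}_t^{-1} - \bsH^{-1} \;=\; -\widehat{\bsH}_t^{-1}\bsDelta^H_t \bsH^{-1},
\end{align*}
which, combined with the Neumann-series bound $\|\widehat{\bsH}_t^{-1}\| \leq 2\|\bsH^{-1}\|$ valid for $t\geq T$, yields
\begin{align*}
\bigl\|\widehat{\bsH}_t^{-1} - \bsH^{-1}\bigr\| \;\leq\; 2\|\bsH^{-1}\|^2 \,\|\bsDelta^H_t\| \;=\; \tilde{o}\bigl(t^{-\min\{1-1/\bar{p},\,a/2\}}\bigr)\quad\text{a.s.}
\end{align*}
In particular $\widehat{\bsH}_t^{-1}$ is almost surely bounded for all large $t$.

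Then I would apply the telescoping decomposition
\begin{align*}
\widehat{\bsH}_t^{-1}\widehat{\bsS}_t\widehat{\bsH}_t^{-1} - \bsH^{-1}\bsS\bsH^{-1}
&= (\widehat{\bsH}_t^{-1} - \bsH^{-1})\,\widehat{\bsS}_t\,\widehat{\bsH}_t^{-1} \\
&\quad + \bsH^{-1}(\widehat{\bsS}_t - \bsS)\widehat{\bsH}_t^{-1} + \bsH^{-1}\bsS(\widehat{\bsH}_t^{-1} - \bsH^{-1}),
\end{align*}
take operator norms, and use the uniform bounds on $\widehat{\bsS}_t$ (which is eventually bounded since $\widehat{\bsS}_t \to \bsS$ a.s.) and on $\widehat{\bsH}_t^{-1}$. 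The first and third terms inherit the rate $\tilde{o}(t^{-\min\{1-1/\bar{p},\,a/2\}})$ from the inverse perturbation bound, while the middle term is of order $\tilde{o}(t^{-a/2})$; since $\min\{1-1/\bar{p},\,a/2\}\leq a/2$, the slowest of these three rates is precisely $\tilde{o}(t^{-\min\{1-1/\bar{p},\,a/2\}})$, giving the claim.

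I do not expect any serious obstacle: the argument is a clean continuous-mapping / first-order perturbation calculation once the almost-sure rates of Proposition \ref{prop: as cov H S} are in hand. The only mildly subtle point is that one must work on the probability-one event where $\widehat{\bsH}_t$ is eventually invertible and $\|\widehat{\bsH}_t^{-1}\|$ is eventually bounded; this is where Assumption \ref{ass: local linear} (positive-real-part spectrum of $\bsH$) is essential, but it is immediate from $\widehat{\bsH}_t \to \bsH$ almost surely.
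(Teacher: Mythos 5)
Your proposal is correct and follows essentially the same route as the paper: both use the almost-sure rates from Proposition \ref{prop: as cov H S}, the inverse perturbation identity to bound $\|\widehat{\bsH}_t^{-1}-\bsH^{-1}\|$, and the same three-term telescoping decomposition (yours is the paper's decomposition with the summands reordered). Your explicit Neumann-series argument for the eventual bound $\|\widehat{\bsH}_t^{-1}\|\leq 2\|\bsH^{-1}\|$ is a minor extra detail the paper glosses over, but the structure and conclusion are identical.
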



\subsection{Coverage Guarantees}
\label{sec: acs result}

Combining the pieces above, we arrive at the following coverage guarantee for the averaged iterates.

\begin{theorem}[Asymptotic Time-Uniform Coverage]
\label{thm: asymp t1 control}
    Under Assumptions \ref{ass: lyapunov}-\ref{ass: noise} and \ref{ass: step size rate}, for any $\clubsuit \in \{ \text{LIL-UB}, \text{GM}, \text{LIL-EN} \}$ and any $\alpha \in (0,1)$, it holds that
    \begin{align*}
        \lim_{m\rightarrow \infty} \PB \left( \forall t\geq m \colon \Bar{\bsx}_t - \bsx^* \in C_{\widehat{\bsH}_t^{-1}\widehat{\bsS}_t \widehat{\bsH}_t^{-1}}^{\clubsuit}(t) \right) \geq 1-\alpha.
    \end{align*}
\end{theorem}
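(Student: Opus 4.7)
I would combine the almost-sure Gaussian approximation of $\bar{\bsx}_t-\bsx^*$ from Theorem~\ref{thm: strong approx} with the almost-sure consistency $\widehat{\bsV}_t:=\widehat{\bsH}_t^{-1}\widehat{\bsS}_t\widehat{\bsH}_t^{-1}\to\bsV^*:=\bsH^{-1}\bsS\bsH^{-1}$ from Proposition~\ref{prop: as coverge HSH} to reduce the coverage event $A_t:=\{\bar{\bsx}_t-\bsx^*\in C_{\widehat{\bsV}_t}^\clubsuit(t)\}$ to a statement about the Gaussian sum $\bsM_t:=t^{-1}\sum_{j=1}^t \bsG_j$ under $\bsV^*$. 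Every region from Table~\ref{tab: cs} can be written uniformly as $C_\bsV^\clubsuit(t)=\{\bsy\in\R^d:\|\bsV^{-1/2}\bsy\|_{\#}\leq b(t)\}$ for a suitable norm $\|\cdot\|_\#\in\{\|\cdot\|_\infty,\|\cdot\|_2\}$ and a deterministic boundary satisfying $b(t)=\Omega(\sqrt{t^{-1}\log\log t})$. Proposition~\ref{prop: general boundary gaussian} will play the baseline role, guaranteeing $\|\bsV^{*-1/2}\bsM_t\|_\#\leq b(t)$ for all $t$ on an event of probability at least $1-\alpha$.

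\textbf{Perturbation analysis.} Under Assumption~\ref{ass: step size rate}, the rates in Theorem~\ref{thm: strong approx} satisfy $r_{t,\epsilon}+w_{t,\epsilon,d}=o(t^{-1/2})$, so on an almost-sure event one has $\bar{\bsx}_t-\bsx^*=\bsM_t+\bsDelta_t$ with the $\bsG_j$ i.i.d.\ $\mathcal{N}(\vzero,\bsV^*)$ and $\|\bsDelta_t\|=o(t^{-1/2})$. Proposition~\ref{prop: as coverge HSH} together with Lipschitzness of $\bsA\mapsto\bsA^{-1/2}$ near the positive-definite $\bsV^*$ gives $\|\widehat{\bsV}_t^{-1/2}-\bsV^{*-1/2}\|_{\op}=\tilde{o}(t^{-\gamma})$ for some $\gamma>0$, while Hartman--Wintner's law of the iterated logarithm yields $\|\bsM_t\|_2=O(\sqrt{t^{-1}\log\log t})$ almost surely. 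Combining via the triangle inequality and submultiplicativity of the operator norm produces
\begin{align*}
\|\widehat{\bsV}_t^{-1/2}(\bsM_t+\bsDelta_t)\|_{\#}\leq\|\bsV^{*-1/2}\bsM_t\|_{\#}+\rho_t,\qquad\rho_t=o(b(t))\text{ a.s.,}
\end{align*}
where $\rho_t\lesssim\|\widehat{\bsV}_t^{-1/2}-\bsV^{*-1/2}\|_{\op}\|\bsM_t\|_2+\|\widehat{\bsV}_t^{-1/2}\|_{\op}\|\bsDelta_t\|_2$.

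\textbf{Closing the gap: the main obstacle.} Plugging the previous display into Proposition~\ref{prop: general boundary gaussian} only yields $\|\widehat{\bsV}_t^{-1/2}(\bsM_t+\bsDelta_t)\|_{\#}\leq b(t)+o(b(t))$, which is \emph{not} immediately the desired $\leq b(t)$. To absorb this $o(b(t))$ slack I would exploit the fact that each boundary in Table~\ref{tab: cs} is strictly above the LIL rate of $\bsV^{*-1/2}\bsM_t$: a direct Hartman--Wintner comparison with the explicit constants gives, on an almost-sure event,
\begin{align*}
\limsup_{t\to\infty}\frac{\|\bsV^{*-1/2}\bsM_t\|_{\#}}{b(t)}=c_\clubsuit<1,
\end{align*}
with $c_{\text{LIL-UB}}=\sqrt{2}/1.7$, $c_{\text{GM}}=0$, and $c_{\text{LIL-EN}}=(1-\epsilon)/\sqrt{2.8}$. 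Picking $\eta=(1-c_\clubsuit)/4$, on the good event there is a random $T(\omega)<\infty$ with both $\|\bsV^{*-1/2}\bsM_t\|_{\#}\leq(c_\clubsuit+\eta)b(t)$ and $\rho_t\leq\eta b(t)$ for all $t\geq T(\omega)$; summing the two gives $\|\widehat{\bsV}_t^{-1/2}(\bsM_t+\bsDelta_t)\|_{\#}<b(t)$ for every such $t$. Hence $A_t$ occurs eventually on an almost-sure event, and continuity of probability from below delivers $\lim_{m\to\infty}\PB(\forall t\geq m:A_t)\geq 1-\alpha$. The principal obstacle is this last step: the naive triangle-inequality bound cannot be tightened to $\leq b(t)$ without invoking the LIL-level slackness engineered into the boundary constants of Table~\ref{tab: cs}.
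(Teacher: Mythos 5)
Your proof is essentially correct and takes a genuinely different, more direct route than the paper's. The paper introduces the coupled events $\gA_m$ (the clean Gaussian ratio exceeds $1$) and $\widehat{\gA}_m$ (the plug-in ratio exceeds $1-\gE_t$), proves $\gA_m\subseteq\widehat{\gA}_m$ and $\PB(\widehat{\gA}_m\setminus\gA_m)\to 0$, and then closes the loop by invoking Proposition~\ref{prop: general boundary gaussian} to bound $\lim_m\PB(\gA_m)$ by $\alpha$. You bypass the event coupling entirely: you observe that because each boundary in Table~\ref{tab: cs} has strict LIL slack $c_\clubsuit<1$ and the plug-in perturbation $\rho_t=o(b(t))$ almost surely, the event $A_t$ eventually holds on an almost-sure set, so $\lim_m\PB(\forall t\geq m:A_t)=1\geq 1-\alpha$ by continuity from below. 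Two remarks. First, your argument in fact delivers the limit equal to $1$, which makes the ``baseline'' role you assign to Proposition~\ref{prop: general boundary gaussian} superfluous; the paper's route superficially appears to need it, but its own final step $(c)$ could likewise be tightened to $\lim_m\PB(\gA_m)=0$ by the same LIL observation. Both proofs therefore hinge on the identical fact that the boundary constants ($1.7$ vs.~$\sqrt{2}$ for LIL-UB, $2\sqrt{1.4}/(1-\epsilon)$ vs.~$\sqrt{2}$ for LIL-EN, $\sqrt{\log t}$ vs.~$\sqrt{\log\log t}$ growth for GM) leave room below $1$; you simply exploit it earlier in the argument. Second, a small imprecision: you write $b(t)$ as if it were deterministic and independent of $\bsV$, but for LIL-EN the boundary depends on $\kappa(\widehat{\bsV}_t)$. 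This is handled by $\widehat{\bsV}_t\to\bsV^*$ a.s.\ (Proposition~\ref{prop: as coverge HSH}), which gives $b_{\widehat{\bsV}_t}(t)/b_{\bsV^*}(t)\to 1$ a.s., so the conclusion is unaffected; you should state this explicitly. Your $\limsup$ computations $c_{\text{LIL-UB}}=\sqrt{2}/1.7$, $c_{\text{GM}}=0$, $c_{\text{LIL-EN}}=(1-\epsilon)\sqrt{2}/(2\sqrt{1.4})$ are correct, and the multivariate LIL inputs ($\|\cdot\|_\infty$ via coordinate-wise Hartman--Wintner and a max; $\|\cdot\|_2$ via the Hilbert-space LIL) match the paper's Lemma~\ref{lem: multi lil gauss}.
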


As opposed to Proposition \ref{prop: general boundary gaussian} where the coverage rate is above $1-\alpha$ for time starting at the beginning, here we only have an ``asymptotic'' coverage rate $1-\alpha$ when $m$ goes to infinity in order to compensate for the errors induced by Gaussian approximation and the plug-in error in covariance estimation. However, we show in experiments below that the coverage rate quickly becomes valid after $m \approx 1000$ iterations, which is just a warm start of the whole training process.

\section{Experiments}
\label{sec: experiment}

In this section we run simulation experiments on two regression problems and illustrate the performance of three confidence sequences.

\paragraph{Linear regression.} For linear model $Y_i = X_i^\top \theta^* + \epsilon_i$, the loss function is $\gL(\theta) = \sum_{i=1}^{n} (Y_i - X_i^\top \theta)^2/2 $, so the SGD update corresponds to \eqref{eq: sa} with $\bsG(\bsx_t,\xi_{t+1}) = (Y_{t+1} - X_{t+1}^\top \theta_t) X_{t+1}$. We assume $\theta^*=(1,\dots,d)^\top$, and generate data as $[X_i]_j\overset{\text{i.i.d.}}{\sim} \operatorname{Unif}(-10, 10)$ and $\epsilon_i \overset{\text{i.i.d.}}{\sim} \gN(0, 16)$.

\paragraph{Logistic regression.} For logit model $Y_i \sim \operatorname{Ber}(\operatorname{logit}(X_i^\top \theta^*))$, the loss function is $\gL(\theta) = -\sum_{i=1}^n \big\{ Y_i\log(1+e^{-X_i^\top \theta}) + (1-Y_i) \log(1+e^{X_i^\top \theta}) \big\}$, so the SGD update corresponds to \eqref{eq: sa} with $\bsG(\bsx_t,\xi_{t+1}) = X_{t+1}Y_{t+1} - X_{t+1}/{(1 + e^{-X_{t+1}^\top \theta_t})}$. We assume $\theta^*=(1,\dots,d)^\top$, and generate data as $[X_i]_j\overset{\text{i.i.d.}}{\sim} \operatorname{Unif}(-0.5, 0.5)$.

We conduct our experiment with Intel(R) Xeon(R) Gold 6132 CPU @ 2.60GHz. For each model, we perform SGD for $1.5 \times 10^5$ iterations and repeat the trajectory for 1,000 times to compute coverage rates. We illustrate in Figures \ref{fig:d=1} and \ref{fig:d=5} the performance of three CSs as well as the traditional fixed-time CI when $d=1$ and $d=5$, respectively. Specifically, we plot the fixed-time coverage rates at each iteration (Column 1), and the time-uniform coverage rates of the whole trajectory before each iteration (Column 2). To mitigate the effect of the approximation errors, we compute such time-uniform coverage rates starting from the 1,000-th iteration. We also plot the CS boundaries (Column 4), and their correspondences where the variance $\bsH^{-1} \bsS \bsH^{-1}$ is replaced by its plug-in estimator (Column 3 and 5). As expected, our CSs are wider than the fixed-width one, and the latter fails to achieve the nominal time-uniform coverage rate. Comparing the boundaries of the CSs, we find the Gaussian mixture boundary (GM) is always the narrowest, achieving around two times of the fixed-time CI length. The boundary based on $\varepsilon$-nets is the widest and often achieves an extremely low coverage rate; this phenomenon is exacerbated in high dimensions. Nevertheless, all three CSs validate the asymptotic coverage guarantee.

\begin{figure}[ht]
    \centering
    \includegraphics[scale=0.25]{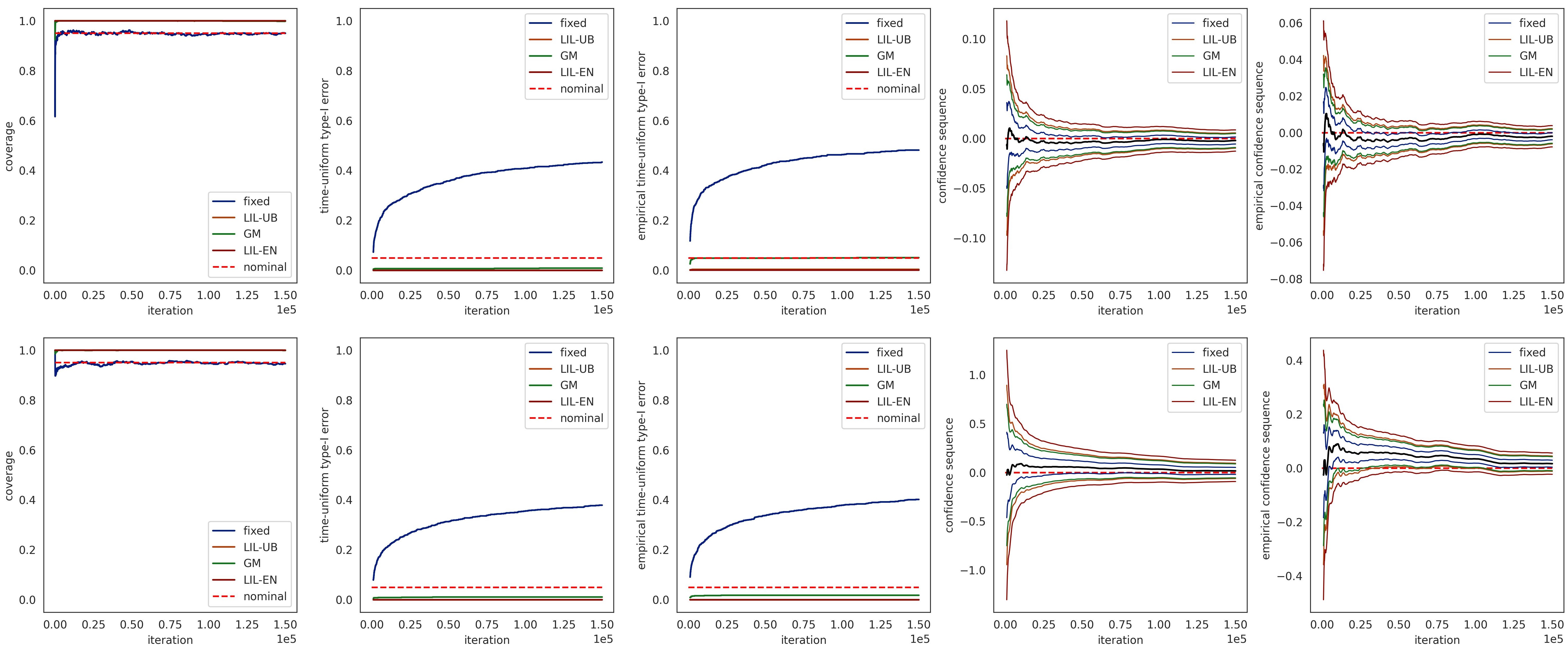}
    \caption{Simulation results for $d=1$. The first row corresponds to linear regression and the second row corresponds to logistic regression. The first column shows fixed-time coverage rates of three CSs as well as the traditional CI (``fixed''); the second/third column displays time-uniform coverage rates without/with the plug-in variance estimator; the fourth/fifth column displays CS boundaries.}
    \label{fig:d=1}
\end{figure}

\begin{figure}[ht]
    \centering
    \includegraphics[scale=0.25]{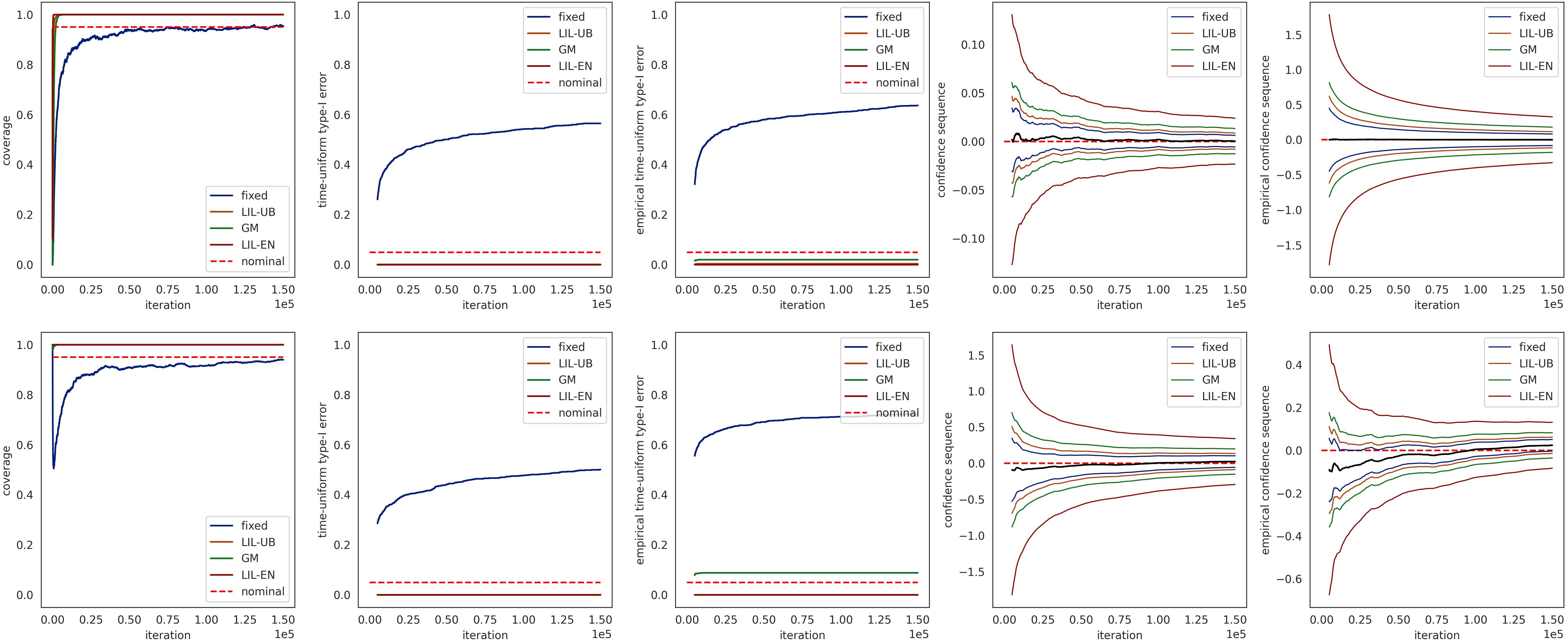}
    \caption{Simulation results for $d=5$, plotted for the first component. Meanings of all plots are the same as in Figure \ref{fig:d=1}.}
    \label{fig:d=5}
\end{figure}

\section{Concluding Remarks}
\label{sec: conclude}

In this work we have leveraged probabilistic tools from anytime-valid inference to establish asymptotic confidence sequences for the averaged iterates from stochastic approximation algorithms. As a by-product, we have derived almost-sure convergence rates of the averaged iterates to a scaled sum of i.i.d.\ Gaussians, and discussed the trade-off between those rates from different causes as well as the best possible rate theoretically. We have proposed three asymptotic confidence sequences and established asymptotic type-I error control guarantees for them. Subsequent numerical experiments validate our results.


There are a few interesting directions for future work. The first is to extend our results to the martingale or Markovian noise settings so that algorithms such as bandit learning \citep{chen2021statistical1,chen2021statistical2,chen2022online}, policy gradient \citep{sutton1999policy} and actor-critic \citep{konda1999actor} can be fit into our framework. The second is to establish asymptotic confidence sequences for the last iterates \citep{pelletier2000asymptotic}.

\clearpage
\bibliographystyle{apalike}
\bibliography{ref.bib}

\clearpage
\appendix

\section{Proofs Related to Almost-Sure Convergence}

\subsection{Proof of Theorem \ref{thm: strong approx}}
\label{sec: thm: strong approx}

The proof is complete by combining Lemma \ref{lem: as clt approx} and Lemma \ref{lem: strong approx} below. The proofs of Lemmas \ref{lem: as clt approx} and \ref{lem: strong approx} can be found in Appendices \ref{sec: lem: as clt approx} and \ref{sec: lem: strong approx}, respectively.

At a high level, we decompose $\Bar{\bsx}_t - \bsx^*$ into several error terms and a well-behaved martingale term as in \eqref{eq: decompose}, where the covariances of increments in this martingale are different but converge to a fixed matrix $\bsH^{-1}\bsS \bsH^{-1}$. Such decomposition is common in the SA literature \citep{polyak1992acceleration,su2023higrad,zhu2021constructing,li2022statistical} and is similarly done in Lemma \ref{lem: as clt approx}, except that we characterize an almost-sure decomposition instead of a usual in-probability one. We also obtain specific convergence rates of all error terms, which are new to the literature to the best of our knowledge.

\begin{lemma}[Approximation of the averaged iterate process]
    \label{lem: as clt approx}
    Let $\epsilon>0$ be an arbitrarily small constant. Under Assumptions \ref{ass: lyapunov}-\ref{ass: noise}, we have
    \begin{align*}
        \Bar{\bsx}_t - \bsx^* = \frac{1}{t} \sum_{j=1}^{t} \bsH^{-1} \bsvareps_{j} + o\left(t^{-1}(\log t)^{\epsilon} + t^{-\frac{a(1+\lambda)}{2}}(\log t)^{1+\epsilon} + t^{\frac{1}{2p} - \frac{2-a}{2}}(\log t)^{\frac{1}{2p}+\epsilon}\right) \quad \text{a.s.}
    \end{align*}
\end{lemma}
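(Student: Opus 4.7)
I will follow the five-term decomposition already displayed in the proof sketch of Theorem~\ref{thm: strong approx}. Subtract $\bsx^*$ from both sides of \eqref{eq: sa}, add and subtract $\eta_t \bsH(\bsx_t - \bsx^*)$, and iterate the affine recursion $\bsx_{t+1} - \bsx^* = (\bsI - \eta_t \bsH)(\bsx_t - \bsx^*) + \eta_t(\bsr_t + \bsvareps_{t+1})$; averaging the closed-form solution yields $\Bar{\bsx}_t - \bsx^* = \gS_1 + \gS_2 + \gS_3 + \gS_4 + \gS_5$. Reindexing $\gS_4 + \gS_5 = \frac{1}{t}\sum_{j=1}^{t}\bsH^{-1}\bsvareps_j$, so the lemma reduces to showing that $\gS_1,\gS_2,\gS_3$ are each almost surely of the size asserted in the little-$o$.

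\textbf{Controlling $\gS_1$ and $\gS_2$.} For $\gS_1$, the uniform bound $\sup_{t,j}\|\bsA_j^{t-1}\| \lesssim 1$ (the auxiliary Lemma~\ref{lem: convergence matrix} on products of contractions) together with $(\bsI - \eta_0\bsH)(\bsx_0 - \bsx^*)$ being a fixed vector gives $\|\gS_1\| = O(t^{-1})$ deterministically, which is absorbed into $t^{-1}(\log t)^{\epsilon}$. For $\gS_2$, Assumption~\ref{ass: local linear} supplies $\|\bsr_j\| \leq L_H\|\bsx_j - \bsx^*\|^{1+\lambda}$ once $\bsx_j$ is close enough to $\bsx^*$. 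I would invoke a pathwise $L^2$ rate of the form $\|\bsx_j - \bsx^*\|^2 = o(j^{-a}(\log j)^{1+\epsilon'})$ a.s.\ (Lemma~\ref{lem: as l2 conv}, obtained by applying the Robbins--Siegmund almost-supermartingale lemma to the Lyapunov function $V$ of Assumption~\ref{ass: lyapunov}, with the noise term summable by Assumption~\ref{ass: noise}\eqref{ass: noise 1}). Plugging this into $\gS_2 = t^{-1}\sum_j \bsA_j^{t-1}\bsr_j$ and using $\sup_{t,j}\|\bsA_j^{t-1}\| \lesssim 1$ together with Kronecker's lemma produces the stated rate $o\bigl(t^{-a(1+\lambda)/2}(\log t)^{1+\epsilon}\bigr)$.

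\textbf{Controlling $\gS_3$ (the main obstacle).} This term $t^{-1}\sum_{j=0}^{t-1}(\bsA_j^{t-1} - \bsH^{-1})\bsvareps_{j+1}$ is a martingale sum with a data-dependent weighting. Two ingredients are needed. First, a deterministic decay estimate $\|\bsA_j^{t-1} - \bsH^{-1}\| = O\bigl(\eta_j + \|\prod_{i=j+1}^{t-1}(\bsI - \eta_i\bsH)\|\bigr)$, proved by Abel-summing the identity $(\bsI - \eta_i\bsH)\bsH^{-1} = \bsH^{-1} - \eta_i \bsI$ through the product defining $\bsA_j^{t-1}$; under Assumption~\ref{ass: step size} the tail product decays exponentially in the partial sum of $\eta_i$, so effectively $\|\bsA_j^{t-1}-\bsH^{-1}\|^2$ is of order $\eta_j^2 = j^{-2a}$ uniformly in $t$. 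Second, a Burkholder--Davis--Gundy bound in $L^{2p}$ applied to this martingale, legitimized by the finite $2p$-th moment of $\bsvareps_t$ in Assumption~\ref{ass: noise}\eqref{ass: noise 3}, gives $\bigl\|\sum_{j=0}^{t-1}(\bsA_j^{t-1}-\bsH^{-1})\bsvareps_{j+1}\bigr\|_{2p}^{2p} \lesssim \bigl(\sum_j j^{-2a}\bigr)^{p} \lesssim t^{p(1-2a)}$ for $a<1/2$ (and analogously for $a\ge 1/2$ after absorbing logs), which after dividing by $t$ yields an $L^{2p}$ rate of order $t^{-(2-a)/2}$ for $\gS_3$.

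\textbf{From $L^{2p}$ to almost sure.} To upgrade this $L^{2p}$ bound to the claimed a.s.\ rate $o\bigl(t^{1/(2p) - (2-a)/2}(\log t)^{1/(2p)+\epsilon}\bigr)$, I would use the standard dyadic Borel--Cantelli device: evaluate the $L^{2p}$ bound along $t_k = 2^k$, apply Markov's inequality with threshold $t_k^{1/(2p) - (2-a)/2}(\log t_k)^{1/(2p)+\epsilon}$, observe that the resulting probabilities are summable (this is exactly where the exponent $1/(2p)$ appears), and fill in between dyadic blocks using Doob's maximal inequality applied to the partial-sum martingale on each block $[t_k, t_{k+1})$. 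Combining the three bounds for $\gS_1, \gS_2, \gS_3$ gives the lemma. The hardest step is the last one: the sharp product-of-matrices estimate and the loss of the $t^{1/(2p)}$ factor in the a.s.\ rate both hinge critically on Assumption~\ref{ass: noise}\eqref{ass: noise 3}, which is exactly the new ingredient beyond the in-probability analysis of \cite{polyak1992acceleration}.
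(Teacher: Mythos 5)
Your overall skeleton is right (the Polyak--Juditsky decomposition, Burkholder for the weighted martingale term, Borel--Cantelli to upgrade to almost-sure rates), but three of your steps are either wrong or rely on facts the paper does not prove and which are not routine.

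First, for $\gS_2$: you invoke a \emph{pathwise} almost-sure rate $\|\bsx_j - \bsx^*\|^2 = o(j^{-a}(\log j)^{1+\epsilon'})$, attributing it to Lemma~\ref{lem: as l2 conv}. That lemma only gives $\bsx_t\to\bsx^*$ a.s.\ (no rate) together with the in-expectation bound $\E\|\bsx_t-\bsx^*\|^2 \leq \widetilde{C}_0\eta_t$; a pathwise rate of that strength does not follow directly from Markov plus Borel--Cantelli, since the resulting tail probabilities are not summable. The paper sidesteps this by a different route: it shows $\sum_{t}\E\|\bsr_t\| / \bigl(t^{1-a(1+\lambda)/2}(\log t)^{1+\epsilon}\bigr) < \infty$, concludes that the corresponding random series converges a.s., and then applies Kronecker's lemma. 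Your route requires a pathwise concentration result you haven't supplied.

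Second, for $\gS_3$, your operator estimate is wrong. You claim $\|\bsA_j^{t-1}-\bsH^{-1}\|$ is of order $\eta_j \asymp j^{-a}$ ``uniformly in $t$''. This cannot be: for $j$ close to $t$ the truncated sum $\bsA_j^{t-1}$ is far from $\bsH^{-1}$ (indeed $\bsA_{t-1}^{t-1}=\eta_{t-1}\bsI$, which goes to $\vzero$, not $\bsH^{-1}$), so for $j$ near $t$ the error is $O(1)$, not $O(j^{-a})$. The correct estimate, used in the paper via Lemma~\ref{lem: convergence matrix}, is $\sum_{j=0}^{t-1}\|\bsA_j^{t-1}-\bsH^{-1}\| = O(t^a)$ (reflecting an $O(t^a)$-length boundary layer of $O(1)$ errors), which combined with the uniform bound from Lemma~\ref{lem: bounded matrix} gives $\sum_j\|\bsA_j^{t-1}-\bsH^{-1}\|^2 = O(t^a)$. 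Your $(\sum_j j^{-2a})^p$ gives a different (too fast) exponent, and the final rate $t^{-(2-a)/2}$ that you then write down is inconsistent with your own computation.

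Third, for the a.s.\ upgrade of $\gS_3$, your dyadic Borel--Cantelli plus Doob scheme runs into a structural problem you don't address: $\sum_{j=0}^{t-1}(\bsA_j^{t-1}-\bsH^{-1})\bsvareps_{j+1}$ is not a partial sum of a single martingale in $t$, because the weights $\bsA_j^{t-1}$ are re-evaluated at every horizon $t$. Doob's maximal inequality on a block $[t_k, t_{k+1})$ controls a fixed martingale's running maximum, not a family of sums with $t$-dependent coefficients, so this step would need additional work. The paper avoids the issue entirely: it shows $\E\|\gT_3\|^{2p} = O(t^{(a-2)p})$ and applies Markov directly at every $t$, obtaining summable probabilities $\lesssim t^{-1}(\log t)^{-1-2p\epsilon}$, so a vanilla Borel--Cantelli over all integers $t$ suffices and no maximal inequality is required. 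Your approach as stated has a gap; the paper's simpler route works.

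Minor: you attribute the uniform bound $\|\bsA_j^{t-1}\|\lesssim 1$ to Lemma~\ref{lem: convergence matrix}; the uniform bound is Lemma~\ref{lem: bounded matrix}, while Lemma~\ref{lem: convergence matrix} is the $O(t^a)$ sum estimate you actually need for $\gS_3$.
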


To further apply the almost-sure invariance principles \citep{strassen1967almost,philipp1986note} so as to approximate $\Bar{\bsx}_t - \bsx^*$ with a sum of i.i.d. Gaussians, we further calibrate the martingale term so that its increments are i.i.d. with covariance $\bsH^{-1} \bsS \bsH^{-1}$. We derive almost-sure convergence rates of errors due to such calibration as well as application of the almost-sure invariance principles in Lemma \ref{lem: strong approx}.

\begin{lemma}[Strong approximation of the martingale]
\label{lem: strong approx}
    Let $\epsilon>0$ be an arbitrarily small constant. Under Assumptions \ref{ass: lyapunov}-\ref{ass: noise}, there exists a sequence of $d$-dimensional i.i.d. centered Gaussian random vectors $\{\bsG_t\}_{t\geq 1}$ with covariance $\bsH^{-1}\bsS \bsH^{-1}$ on the same (potentially enriched) probability space, such that
    \begin{align*}
        \frac{1}{t} \sum_{j=1}^{t} \bsH^{-1} \bsvareps_{j} = \frac{1}{t} \sum_{j=1}^t \bsG_j + o\left( t^{-\frac{1+a}{2}}(\log t)^{\frac{1}{2}+\epsilon} + t^{-\frac{1}{2} - \frac{p-1}{50dp}} (\log t)^{\frac{1}{50dp} + \epsilon} \right) \quad \text{a.s.}
    \end{align*}
    If in addition $d=1$, then the second rate on the right-hand side can be improved to $o\big(t^{-\frac{1}{2} - \frac{p-1}{4p}}(\log t)^{1+\frac{1}{4p} + \epsilon}\big)$.
\end{lemma}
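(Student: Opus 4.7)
The plan is to split $\sum_{j=1}^t \bsH^{-1}\bsvareps_j$ into an i.i.d.\ part, to which an almost-sure invariance principle applies directly, plus a calibration residual that measures how much the actual noise $\bsvareps_j$ deviates from its ``frozen'' counterpart evaluated at $\bsx^*$. Since $\bsg(\bsx^*)=\vzero$, the vectors $\Tilde{\bsvareps}_j := \bsg(\bsx^*)-\bsG(\bsx^*,\xi_j) = -\bsG(\bsx^*,\xi_j)$ are i.i.d.\ with mean zero and covariance $\bsS$ (they play the role of the ``limiting noise'' $\bsvareps_j(0)$ in Assumption~\ref{ass: noise}). I would then write
\[
\frac{1}{t}\sum_{j=1}^t \bsH^{-1}\bsvareps_j \;=\; \frac{1}{t}\sum_{j=1}^t \bsH^{-1}\Tilde{\bsvareps}_j \;+\; \frac{1}{t}\sum_{j=1}^t \bsH^{-1}\bigl(\bsvareps_j - \Tilde{\bsvareps}_j\bigr),
\]
and treat the two summands separately.

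For the calibration residual, the increments $\bsH^{-1}(\bsvareps_j - \Tilde{\bsvareps}_j) = \bsH^{-1}\bigl[\bsg(\bsx_{j-1})-\bsG(\bsx_{j-1},\xi_j) + \bsG(\bsx^*,\xi_j)\bigr]$ form a martingale difference sequence with respect to $\gF_{j-1}$, and by the averaged Lipschitz condition in Assumption~\ref{ass: lipschitz} their conditional second moment is bounded by $C\|\bsx_{j-1}-\bsx^*\|^2$. The standard SA $L^2$ analysis combined with a Robbins--Siegmund / iterated-logarithm argument yields an almost-sure rate $\|\bsx_j-\bsx^*\|^2 = \tilde{o}(j^{-a})$, so that the predictable quadratic variation of the residual martingale satisfies $\sum_{j=1}^t \E[\|\bsH^{-1}(\bsvareps_j-\Tilde{\bsvareps}_j)\|^2\mid \gF_{j-1}] = \tilde{O}(t^{1-a})$ almost surely. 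A martingale law of the iterated logarithm (e.g.\ Stout's theorem) then upgrades this to
\[
\Bigl\|\sum_{j=1}^t \bsH^{-1}(\bsvareps_j - \Tilde{\bsvareps}_j)\Bigr\| = o\bigl(t^{(1-a)/2}(\log t)^{1/2+\epsilon}\bigr) \quad \text{a.s.},
\]
which after dividing by $t$ delivers exactly the first rate $o\bigl(t^{-(1+a)/2}(\log t)^{1/2+\epsilon}\bigr)$.

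For the i.i.d.\ sum, Assumption~\ref{ass: noise} supplies $\E\|\bsG(\bsx^*,\xi)\|^{2p} < \infty$ for some $p>1$, and the target covariance $\bsS$ is positive definite, so I can invoke an off-the-shelf almost-sure invariance principle. In the scalar case, Strassen's (1967) theorem yields, on an enriched probability space, a coupling of $\sum_{j\leq t}\bsH^{-1}\Tilde{\bsvareps}_j$ with an i.i.d.\ Gaussian partial-sum process of covariance $\bsH^{-1}\bsS\bsH^{-1}$ up to an error of order $O\bigl(t^{1/2 - (p-1)/(4p)}(\log t)^{1 + 1/(4p)+\epsilon}\bigr)$. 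In the multivariate case, Philipp's (1986) extension provides the weaker rate $O\bigl(t^{1/2 - (p-1)/(50dp)}(\log t)^{1/(50dp)+\epsilon}\bigr)$. Dividing by $t$ produces the second rate in the statement.

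The main obstacle is the calibration step: one genuinely needs an \emph{almost-sure} (rather than $L^2$ or in-probability) rate on $\|\bsx_j-\bsx^*\|^2$, and a quantitative martingale LIL sharp enough that the residual is pushed strictly below the $t^{-1/2}$ scale with only a mild $(\log t)^{1/2+\epsilon}$ logarithmic blow-up. Both ingredients must be tight, since otherwise this error would dominate the invariance-principle rate and the decomposition would fail to improve on the classical CLT scale. Once the calibration is in place, the invariance-principle step is essentially a black-box appeal to Strassen/Philipp under the explicit $2p$-th moment hypothesis of Assumption~\ref{ass: noise}.
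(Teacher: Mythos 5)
Your decomposition into the i.i.d.\ sum $\frac{1}{t}\sum_j \bsH^{-1}\Tilde{\bsvareps}_j$ plus the calibration residual $\frac{1}{t}\sum_j \bsH^{-1}(\bsvareps_j-\Tilde{\bsvareps}_j)$ is exactly the one the paper uses, and your treatment of the i.i.d.\ block via Strassen/Philipp is also the paper's route. The gap is in the calibration step. You assert an almost-sure \emph{pointwise} rate $\|\bsx_j-\bsx^*\|^2 = \tilde{o}(j^{-a})$ and attribute it to ``standard SA $L^2$ analysis combined with Robbins--Siegmund,'' but neither of those delivers it: Lemma~\ref{lem: as l2 conv} only establishes $\E\|\bsx_j-\bsx^*\|^2 \lesssim \eta_j$, and Markov plus Borel--Cantelli applied to this $L^2$ bound (or even to $L^{2p}$ bounds) does \emph{not} upgrade it to an a.s.\ pointwise rate matching $j^{-a}$, since the resulting probability tails are not summable in $j$. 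Deriving such a pointwise rate would require a law of the iterated logarithm for the raw iterate $\bsx_t$ itself, which is substantially more work and is not ``standard.'' Your subsequent appeal to Stout's LIL also needs care: the residual increments $\bsH^{-1}(\bsvareps_j - \Tilde{\bsvareps}_j)$ are unbounded, so a version of the LIL under moment (not boundedness) conditions must be invoked and its hypotheses verified against Assumption~\ref{ass: noise}.

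The paper avoids both issues by never requiring an a.s.\ bound on the individual $\|\bsx_j-\bsx^*\|^2$. It bounds the \emph{unconditional} variance $\var\{\ell^\top\bsH^{-1}(\bsvareps_t-\Tilde{\bsvareps}_t)\} \leq C_\Gamma \eta_t$ directly from the $L^2$ rate (Lemma~\ref{lem: as l2 conv} and Lemma~\ref{lem: bound g L2}), notes that $\sum_t C_\Gamma \eta_t \big/\big(t^{1-a}(\log t)^{1+2\epsilon}\big) < \infty$, and then applies a Kolmogorov--Khinchin-type two-series theorem for martingales (Theorem 2.5.6 of Durrett, extended via the martingale version of Kolmogorov's maximal inequality) followed by Kronecker's lemma to get $\frac{1}{t^{(1-a)/2}(\log t)^{1/2+\epsilon}}\sum_{j\leq t}\ell^\top\bsH^{-1}(\bsvareps_j-\Tilde{\bsvareps}_j) \to 0$ a.s. Dividing by $t$ gives the claimed $o\big(t^{-(1+a)/2}(\log t)^{1/2+\epsilon}\big)$. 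This route needs only second-moment control and no LIL for the residual martingale at all. Your argument could be repaired by replacing the pointwise claim on $\|\bsx_j-\bsx^*\|^2$ with the averaged a.s.\ bound $\sum_{j\leq t}\|\bsx_j-\bsx^*\|^2 = \tilde{o}(t^{1-a})$ a.s.\ (which does follow from $\sum_j \E\|\bsx_j-\bsx^*\|^2/(j^{1-a}(\log j)^{1+\epsilon}) < \infty$, nonnegativity, and Kronecker) and then verifying the conditions of a martingale LIL for unbounded increments, but as written the calibration step does not go through.
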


\subsection{Proof of Proposition \ref{prop: as cov H S}}
\label{sec: prop: as cov H S}
\paragraph{Convergence rate for $\widehat{\bsH}_t$.} We decompose $\widehat{\bsH}_t - \bsH$ into the following terms:
\begin{align*}
    \widehat{\bsH}_t - \bsH &= \frac{1}{t} \sum_{j=0}^{t-1} \Bar{\bsH}(\bsx_j, \xi_{j+1}) - \bsH \\
    &=  \underbrace{\frac{1}{t} \sum_{j=0}^{t-1} \left( \Bar{\bsH}(\bsx^*, \xi_{j+1}) - \bsH \right)}_{=:\gT_{1,t}} + \underbrace{\frac{1}{t} \sum_{j=0}^{t-1} \left( \Bar{\bsH}(\bsx_j, \xi_{j+1}) - \Bar{\bsH}(\bsx^*, \xi_{j+1}) \right)}_{=:\gT_{2,t}}.
\end{align*}

The first term is a mean of i.i.d. random variables, so by Theorems 2.4.1 and 2.5.12 of \cite{durrett2019probability}, if $1\leq \Bar{p} < 2$ then $\|\gT_{1,t}\| = o\big(t^{\frac{1}{\Bar{p}}-1}\big)$ almost surely; and by Theorem 2.5.11 of \cite{durrett2019probability}, if $\Bar{p} \geq 2$ then $\|\gT_{1,t}\| = o\big(t^{-\frac{1}{2}}(\log t)^{\frac{1}{2} + \epsilon}\big)$ almost surely.

For the second term, note that for any $\epsilon>0$,
\begin{align*}
    \sum_{t=2}^\infty \frac{\E \|\Bar{\bsH}(\bsx_t, \xi_{t+1}) - \Bar{\bsH}(\bsx^*, \xi_{t+1})\|}{t^{\frac{2-a}{2}}(\log t)^{1+\epsilon}} &{\leq} \sum_{t=2}^\infty \frac{\E\Big\{ \big( \E(\|\Bar{\bsH}(\bsx_t, \xi_{t+1}) - \Bar{\bsH}(\bsx^*, \xi_{t+1})\|^2\mid \gF_t)\big)^{1/2}\Big\}}{t^{\frac{2-a}{2}}(\log t)^{1+\epsilon}} \\
    &\overset{(a)}{\leq} \sum_{t=2}^\infty \frac{L_{\Bar{H}}\E\|\bsx_t - \bsx^*\|}{t^{\frac{2-a}{2}}(\log t)^{1+\epsilon}} \leq \sum_{t=2}^\infty \frac{L_{\Bar{H}}\big(\E\|\bsx_t - \bsx^*\|^2\big)^{1/2}}{t^{\frac{2-a}{2}}(\log t)^{1+\epsilon}} \\
    &\overset{(b)}{\lesssim} \sum_{t=2}^{\infty} \frac{1}{t(\log t)^{1+\epsilon}} < \infty,
\end{align*}
where (a) uses Assumption \ref{ass: lipschitz H}, and (b) uses $\E\|\bsx_t - \bsx^*\|^2 \lesssim \eta_t \lesssim t^{-a}$ from Lemma \ref{lem: as l2 conv}. Hence we have with probability one,
\begin{align*}
    \sum_{t=2}^\infty \frac{\|\Bar{\bsH}(\bsx_t, \xi_{t+1}) - \Bar{\bsH}(\bsx^*, \xi_{t+1})\|}{t^{\frac{2-a}{2}}(\log t)^{1+\epsilon}} < \infty.
\end{align*}
By Kronecker's lemma,
\begin{align*}
    \frac{1}{t^{\frac{2-a}{2}}(\log t)^{1+\epsilon}} \sum_{j=0}^{t-1} \|\Bar{\bsH}(\bsx_j, \xi_{j+1}) - \Bar{\bsH}(\bsx^*, \xi_{j+1})\| \rightarrow 0 \quad \text{a.s.},
\end{align*}
and therefore $\|\gT_{2,t}\| = o\big(t^{-1}\cdot t^{\frac{2-a}{2}}(\log t)^{1+\epsilon}\big) = o\big(t^{-\frac{a}{2}}(\log t)^{1+\epsilon}\big)$.

Combining the above two terms yields the final almost-sure convergence rate of $\widehat{\bsH}_t$:
\begin{align*}
    \widehat{\bsH}_t - \bsH = o\left(t^{\frac{1}{\Bar{p}}-1}\indicator(1\leq \Bar{p} < 2) + t^{-\frac{1}{2}} (\log t)^{\frac{1}{2} + \epsilon} \indicator(\Bar{p}\geq 2) + t^{-\frac{a}{2}}(\log t)^{1+\epsilon} \right) = \Tilde{o}\left(t^{-\min\left\{1-\frac{1}{\Bar{p}}, \frac{a}{2}\right\}}\right).
\end{align*}

\paragraph{Convergence rate for $\widehat{\bsS}_t$.} We decompose
\begin{align*}
    \bsG(\bsx_j,\xi_{j+1}) = \underbrace{\bsG(\bsx^*, \xi_{j+1})}_{=: \bsu_{j+1}} + \underbrace{(\bsG(\bsx_{j},\xi_{j+1}) - \bsG(\bsx^*,\xi_{j+1}))}_{=: \bsv_{j+1}},
\end{align*}
so by Assumptions \ref{ass: noise} \ref{ass: noise 3} and \ref{ass: lipschitz}, we have $\E \|\bsu_{j+1}\|^{2p} \leq C_\varepsilon$ and $\E\|\bsv_{j+1}\|^2 \leq L_G^2 \E\|\bsx_{j} - \bsx^*\|^2$. We can write $\widehat{\bsS}_t - \bsS$ as
\begin{align*}
    \widehat{\bsS}_t - \bsS = \underbrace{\frac{1}{t}\sum_{j=1}^t (\bsu_j\bsu_j^\top - \bsS)}_{=:\gT_{1,t}} + \underbrace{\frac{1}{t}\sum_{j=1}^t \bsu_j \bsv_j^\top + \frac{1}{t}\sum_{j=1}^t \bsv_j \bsu_j^\top}_{=: \gT_{2,t}} + \underbrace{\frac{1}{t}\sum_{j=1}^t \bsv_j \bsv_j^\top}_{=: \gT_{3,t}}.
\end{align*}

The first term is again a mean of i.i.d. random variables with finite $p$-th moment, since
\begin{align*}
    \E\|\bsu_j \bsu_j^\top - \bsS\|^p \lesssim \E\|\bsu_j \bsu_j^\top\|^p = \E\|\bsu_j\|^{2p} < C_\varepsilon.
\end{align*}
In addition, by Assumption \ref{ass: step size rate}, we must have $p>2$. Similar to the previous analysis for $\widehat{\bsH}_t$, it is easy to derive $\|\gT_{1,t}\| = o\big( t^{-\frac{1}{2}}(\log t)^{\frac{1}{2}+\epsilon} \big)$.

For the third term, note that for any $\epsilon>0$,
\begin{align*}
    \sum_{t=2}^\infty \frac{\E\|\bsv_t\bsv_t^\top\|}{t^{1-a}(\log t)^{1+\epsilon}} &\overset{(a)}{\leq} \sum_{t=2}^\infty \frac{L_G^2 \E\|\bsx_{t-1} - \bsx^*\|^2}{t^{1-a}(\log t)^{1+\epsilon}} \overset{(b)}{\lesssim} \sum_{t=2}^\infty \frac{1}{t(\log t)^{1+\epsilon}} < \infty,
\end{align*}
where (a) uses $\E\|\bsv_t \bsv_t^\top \| = \E \|\bsv_t\|^2$ and (b) uses $\E\|\bsx_{t-1} - \bsx^*\|^2 \lesssim \eta_t \lesssim t^{-a}$ from Lemma \ref{lem: as l2 conv}. Thus with probability one,
\begin{align*}
    \sum_{t=2}^\infty \frac{\|\bsv_t\bsv_t^\top\|}{t^{1-a}(\log t)^{1+\epsilon}} < \infty,
\end{align*}
and by Kronecker's lemma, 
\begin{align*}
    \frac{1}{t^{1-a}(\log t)^{1+\epsilon}} \sum_{j=1}^t \|\bsv_j \bsv_j^\top \| \rightarrow 0 \quad \text{a.s.},
\end{align*}
which immediately implies $\|\gT_{3,t}\| = o\big(t^{-1}\cdot t^{1-a}(\log t)^{1+\epsilon}\big) = o\big( t^{-a}(\log t)^{1+\epsilon} \big)$.

For the second term, by Cauchy's inequality and the boundedness of $\E\|\bsu_t\|^2$ we have $\E\|\bsu_t \bsv_t^\top\| \leq \big(\E\|\bsu_t\|^2 \E\|\bsv_t\|^2 \big)^{1/2} \lesssim \big(\E\|\bsv_t\|^2\big)^{1/2}$. Thus, by a similar analysis for $\gT_{3,t}$ above, we can obtain $\|\gT_{2,t}\| = o\big( t^{-\frac{a}{2}}(\log t)^{1+\epsilon} \big)$.

Summing up $\gT_{1,t}, \gT_{2,t}, \gT_{3,t}$ yields the final rate of convergence for $\widehat{\bsS}_t$:
\begin{align*}
    \widehat{\bsS}_t - \bsS = o\left( t^{-\frac{1}{2}}(\log t)^{\frac{1}{2}+\epsilon} + t^{-\frac{a}{2}}(\log t)^{1+\epsilon} + t^{-\frac{1}{2}}(\log t)^{\frac{1}{2}+\epsilon} \right) = \Tilde{o}\left( t^{-\frac{a}{2}} \right).
\end{align*}


\subsection{Proof of Proposition \ref{prop: as coverge HSH}}
\label{sec: prop: as coverge HSH}
Note that by Proposition \ref{prop: as cov H S}, $\widehat{\bsH}_t \xrightarrow[]{a.s.} \bsH$ and $\widehat{\bsS}_t \xrightarrow[]{a.s.} \bsS$, and consequently, $\widehat{\bsH}_t$ is invertible for sufficiently large $t>0$ and $\widehat{\bsH}_t^{-1} \xrightarrow[]{a.s.} \bsH^{-1}$. This implies both $\|\widehat{\bsH}_t^{-1}\|$ and $\|\widehat{\bsS}_t\|$ are bounded with probability one. 
Therefore, we have
\begin{align*}
    \|\widehat{\bsH}^{-1}_t - \bsH^{-1}\| &\leq \|\bsH^{-1}\|\|\widehat{\bsH}_t - \bsH\|\|\widehat{\bsH}_t^{-1}\| = \Tilde{o}\left(t^{-\min\left\{1-\frac{1}{\Bar{p}}, \frac{a}{2}\right\}}\right),
\end{align*}
and
\begin{align*}
    &\quad\ \left\| \widehat{\bsH}^{-1}_t \widehat{\bsS}_t \widehat{\bsH}^{-1}_t - \bsH^{-1} \bsS \bsH^{-1} \right\| \\
    &= \left\| \bsH^{-1}\bsS (\widehat{\bsH}^{-1}_t - \bsH^{-1}) + \bsH^{-1}(\widehat{\bsS}_t - \bsS) \widehat{\bsH}^{-1}_t + (\widehat{\bsH}^{-1}_t - \bsH^{-1}) \widehat{\bsS}_t \widehat{\bsH}^{-1}_t \right\| \\
    &\leq \|\bsH^{-1}\|\|\bsS\| \|\widehat{\bsH}^{-1}_t - \bsH^{-1}\| + \|\bsH^{-1}\|\|\widehat{\bsS}_t - \bsS\| \|\widehat{\bsH}^{-1}_t\| + \|\widehat{\bsH}^{-1}_t - \bsH^{-1}\| \|\widehat{\bsS}_t\| \|\widehat{\bsH}^{-1}_t\| \\
    &= \Tilde{o}\left(t^{-\min\left\{1-\frac{1}{\Bar{p}}, \frac{a}{2}\right\}}\right) + \Tilde{o}\left( t^{-\frac{a}{2}} \right) + \Tilde{o}\left(t^{-\min\left\{1-\frac{1}{\Bar{p}}, \frac{a}{2}\right\}}\right) = \Tilde{o}\left(t^{-\min\left\{1-\frac{1}{\Bar{p}}, \frac{a}{2}\right\}}\right),
\end{align*}
which concludes the proof.
\section{Proofs Related to Confidence Sequences}
\label{app: proof cs}

\subsection{Proof of Proposition \ref{prop: general boundary gaussian}}

The proof of complete by separately ensuring time-uniform coverage for three confidence sequences, as accomplished in Proposition \ref{prop: lil bound}, Proposition \ref{prop: gauss mix} and Corollary \ref{coro: gauss mix}, and Proposition \ref{prop: lil en} below, respectively.

\subsection{LIL Boundary with Union Bounds}
\label{app: lil ub}

\begin{proposition}[LIL-UB boundary]
\label{prop: lil bound}
    Let $\bsG_1,\bsG_2,\dots$ be i.i.d. Gaussian random vectors with mean zero and covariance $\bsH^{-1}\bsS \bsH^{-1}$, and let $\bsM_t := \frac{1}{t}\sum_{j=1}^t \bsG_j$. Then for any $\alpha \in (0,1)$, it holds that
    \begin{align*}
        \PB \left( \exists t\geq 1\colon \left\| \big(\bsH^{-1}\bsS\bsH^{-1}\big)^{-1/2} \bsM_t \right\|_\infty \geq 1.7 \sqrt{\frac{ \log\log(2t) + 0.72 \log(10.4d/\alpha) }{t}} \right) \leq \alpha.
    \end{align*}
\end{proposition}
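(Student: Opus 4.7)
The plan is to reduce the multivariate claim to a coordinate-wise scalar LIL boundary via standardization plus a union bound, and then invoke the one-sided scalar LIL boundary of Howard et al.~(2021, Eq.~3.4) on each coordinate.

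First I would standardize: set $\bsZ_j := (\bsH^{-1}\bsS\bsH^{-1})^{-1/2}\bsG_j$, which by Assumption \ref{ass: noise} are well-defined (since $\bsH^{-1}\bsS\bsH^{-1}$ is symmetric positive definite) and form an i.i.d.\ sequence of $d$-dimensional standard normal vectors. Writing $\bsM_t' := (\bsH^{-1}\bsS\bsH^{-1})^{-1/2}\bsM_t = t^{-1}\sum_{j=1}^t \bsZ_j$, the event in question becomes
\begin{align*}
    \bigl\{\exists t\geq 1 \colon \|\bsM_t'\|_\infty \geq b_t(\alpha) \bigr\} \;=\; \bigcup_{i=1}^{d} \bigcup_{s\in\{\pm 1\}} \bigl\{\exists t\geq 1 \colon s\, [\bsM_t']_i \geq b_t(\alpha) \bigr\},
\end{align*}
where $b_t(\alpha) := 1.7\sqrt{(\log\log(2t) + 0.72\log(10.4 d/\alpha))/t}$. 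For each fixed $i$ and sign $s$, the scalar process $\{s[\bsZ_j]_i\}_{j\geq 1}$ is i.i.d.\ standard Gaussian, hence $1$-sub-Gaussian.

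Next I would apply the scalar one-sided LIL boundary of Howard et al.~(2021, Eq.~3.4), which asserts that for any i.i.d.\ $1$-sub-Gaussian sequence $\{Y_j\}$ and any $\alpha' \in (0,1)$,
\begin{align*}
    \PB\!\left( \exists t\geq 1 \colon \tfrac{1}{t}\sum_{j=1}^t Y_j \geq 1.7\sqrt{\tfrac{\log\log(2t) + 0.72\log(5.2/\alpha')}{t}} \right) \leq \alpha'.
\end{align*}
Choosing $\alpha' = \alpha/(2d)$ makes $5.2/\alpha' = 10.4 d/\alpha$, so each of the $2d$ one-sided events has probability at most $\alpha/(2d)$. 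A union bound over $i \in \{1,\dots,d\}$ and $s\in\{\pm 1\}$ then yields the claimed overall bound of $\alpha$.

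The only substantive step is the scalar one-sided LIL boundary, which is cited directly from Howard et al.~(2021, Eq.~3.4); everything else (standardization, reduction of $L^\infty$ to coordinate-wise absolute values, union bound with the factor-of-$2$ inflation for two-sidedness) is bookkeeping. The main thing to be careful about is the arithmetic of the constants, in particular that the factor $10.4 = 2\cdot 5.2$ in the stated boundary is exactly what one gets from dividing the scalar Howard--Ramdas--McAuliffe confidence level among $2d$ coordinate-sign pairs.
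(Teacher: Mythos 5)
Your proof is correct and takes essentially the same approach as the paper: standardize to $\gN(\0,\bsI)$, reduce the $L^\infty$ event to coordinate-wise events, and union-bound with the scalar LIL boundary of Howard et al.~(2021, Eq.~3.4). The paper first builds the two-sided scalar boundary (constant $10.4$) and unions over $d$ coordinates at level $\alpha/d$, whereas you union directly over $2d$ coordinate--sign pairs at level $\alpha/(2d)$ using the one-sided boundary (constant $5.2$); the two bookkeeping orders are identical and give the same constants.
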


\begin{proof}[Proof of Proposition \ref{prop: lil bound}]
We first state a lemma on the LIL boundary for Gaussians in the one-dimensional case.

\begin{lemma}
\label{lem: lil}
    Let $X_1,X_2,\dots$ be i.i.d. standard Gaussians, and let $S_t := \sum_{j=1}^t X_j$. Then for any $\alpha\in (0,1)$,
    \begin{align*}
        \PB \left( \exists t\geq 1\colon |S_t| \geq 1.7 \sqrt{t \left( \log\log(2t) + 0.72 \log \left(\frac{10.4}{\alpha}\right) \right)} \right) \leq \alpha.
    \end{align*}
\end{lemma}

The proof of Lemma \ref{lem: lil} is trivial by noticing (3.4) of \cite{howard2021time} and doubling the constant 5.2 to accommodate the two-sided version of the concentration inequality.

Now we let $\widetilde{\bsG}_t := \big(\bsH^{-1}\bsS\bsH^{-1}\big)^{-1/2} \bsG_t$ and $\widetilde{\bsM}_t := \frac{1}{t}\sum_{j=1}^t \widetilde{\bsG}_j$. Obviously, $\widetilde{\bsG}_t \overset{\text{i.i.d.}}{\sim} \gN(\0, \bsI)$, and thus we can use a union bound to ensure that with high probability, each coordinate of $\widetilde{\bsM}_t$ is within the confidence sequence:
\begin{align*}
    &\quad \ \PB \left( \exists t\geq 1\colon \left\| \big(\bsH^{-1}\bsS\bsH^{-1}\big)^{-1/2} \bsM_t \right\|_\infty \geq 1.7 \sqrt{\frac{ \log\log(2t) + 0.72 \log(10.4d/\alpha) }{t}} \right) \\
    &= \PB \left( \exists t\geq 1\colon \left\| \widetilde{\bsM}_t \right\|_\infty \geq 1.7 \sqrt{\frac{ \log\log(2t) + 0.72 \log(10.4d/\alpha) }{t}} \right) \\
    &=  \sum_{i=1}^{d} \PB \left( \exists t\geq 1\colon \left| [\widetilde{\bsM}_t]_{i} \right| \geq 1.7 \sqrt{\frac{ \log\log(2t) + 0.72 \log(10.4d/\alpha) }{t}} \right) \\
    &\leq \sum_{i=1}^{d} \PB \left( \exists t\geq 1\colon \left| \sum_{j=1}^t[\widetilde{\bsG}_j]_{i}\right| \geq 1.7 \sqrt{t \left( \log\log(2t) + 0.72 \log \left(\frac{10.4d}{\alpha}\right) \right)} \right) \\
    &\leq d \cdot \frac{\alpha}{d} = \alpha,
\end{align*}
which concludes the proof.
\end{proof}

Note that Lemma \ref{lem: lil} is a special case of Theorem 1 of \cite{howard2021time} by setting the boundary shape function $h(k) = (k+1)^s \zeta(s)$ ($\zeta(s)$ is the Riemann zeta function) with $s=1.4$, the geometric spacing of intrinsic time $\eta=2$, and the starting intrinsic time of a nontrivial boundary $m=1$ (see Section 3.1 of \cite{howard2021time} for detailed discussion). These hyperparameter choices will be reused in Appendix \ref{app: lil en} to facilitate fair comparison.

\subsection{Gaussian Mixture Boundary}
\label{app: gauss mix}

\begin{proposition}[GM boundary]
\label{prop: gauss mix}
    Let $\bsG_1,\bsG_2,\dots$ be i.i.d. Gaussian random vectors with mean zero and covariance $\bsH^{-1}\bsS \bsH^{-1}$, and let $\bsM_t := \frac{1}{t}\sum_{j=1}^t \bsG_j$. Then for any positive definite matrix $\bsSigma \in \SB_+^{d}$ and any $\alpha \in (0,1)$, it holds that
    \begin{align*}
        \PB\left( \exists t\geq 1\colon \|\bsM_t\|_{\{\bsH^{-1} \bsS \bsH^{-1} + \bsSigma^{-1}/t\}^{-1}} \geq \sqrt{\frac{\log\det\{t\bsH^{-1} \bsS \bsH^{-1} \bsSigma + \bsI \} + 2\log(1/\alpha)}{t} } \right) \leq \alpha.
    \end{align*}
\end{proposition}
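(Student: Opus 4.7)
The plan is the classical method of mixtures with a Gaussian mixing density. Write $\bsV := \bsH^{-1}\bsS\bsH^{-1}$ for the per-sample covariance and let $\bsS_t := \sum_{j=1}^t \bsG_j = t\bsM_t$. Since the $\bsG_j$ are i.i.d.\ mean-zero Gaussians with covariance $\bsV$, for every fixed $\bslambda \in \R^d$ the exponential tilt
\[
L_t(\bslambda) \;:=\; \exp\!\left(\bslambda^\top \bsS_t - \tfrac{t}{2}\, \bslambda^\top \bsV \bslambda\right)
\]
satisfies $\E[L_t(\bslambda)\mid \gF_{t-1}] = L_{t-1}(\bslambda)$ by the Gaussian moment generating function, so $\{L_t(\bslambda)\}_{t\geq 0}$ is a nonnegative martingale starting at $1$. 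Averaging against the prior $\bslambda \sim \gN(\0,\bsSigma)$ and invoking Fubini on the nonnegative integrand, the mixture
\[
M_t \;:=\; \int_{\R^d} L_t(\bslambda)\, \pi_{\bsSigma}(\bslambda)\, d\bslambda, \qquad \pi_{\bsSigma}(\bslambda) \;\propto\; \exp\!\left(-\tfrac12\,\bslambda^\top \bsSigma^{-1}\bslambda\right),
\]
is itself a nonnegative martingale with $M_0 = 1$.

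Next, I would evaluate $M_t$ in closed form by completing the square in $\bslambda$. Setting $\bsA_t := t\bsV + \bsSigma^{-1}$, the exponent in the combined integrand equals $-\tfrac12(\bslambda - \bsA_t^{-1}\bsS_t)^\top \bsA_t (\bslambda - \bsA_t^{-1}\bsS_t) + \tfrac12 \bsS_t^\top \bsA_t^{-1}\bsS_t$ up to the normalization of $\pi_{\bsSigma}$, so the Gaussian integral collapses to
\[
M_t \;=\; \det(\bsSigma\bsA_t)^{-1/2}\exp\!\left(\tfrac12 \bsS_t^\top \bsA_t^{-1}\bsS_t\right) \;=\; \det(\bsI + t\,\bsH^{-1}\bsS\bsH^{-1}\bsSigma)^{-1/2}\exp\!\left(\tfrac12 \bsS_t^\top \bsA_t^{-1}\bsS_t\right),
\]
where the last equality uses Sylvester's identity $\det(\bsI + PQ) = \det(\bsI + QP)$ with $P = \bsSigma$ and $Q = t\bsV$.

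Finally, I would apply Ville's inequality to the nonnegative martingale $M_t$: $\PB(\exists t \geq 1\colon M_t \geq 1/\alpha) \leq \alpha$. The event $\{M_t \geq 1/\alpha\}$ is equivalent to
\[
\bsS_t^\top \bsA_t^{-1} \bsS_t \;\geq\; 2\log(1/\alpha) + \log\det(\bsI + t\, \bsH^{-1}\bsS\bsH^{-1}\bsSigma).
\]
Substituting $\bsS_t = t\bsM_t$ and factoring $\bsA_t = t\,(\bsV + \bsSigma^{-1}/t)$ yields $t^2 \bsA_t^{-1} = t\,(\bsV + \bsSigma^{-1}/t)^{-1}$, so the left-hand side becomes $t\,\|\bsM_t\|^2_{(\bsV+\bsSigma^{-1}/t)^{-1}}$. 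Dividing through by $t$ and taking square roots gives precisely the boundary stated in the proposition. There is no serious obstacle here: the entire argument is a direct instance of the method of mixtures, and the only care required is bookkeeping of matrix inverses and determinants during the completion-of-squares step and in identifying $t^2\bsA_t^{-1}$ with $t\,(\bsV + \bsSigma^{-1}/t)^{-1}$ to match the norm on $\bsM_t$ appearing in the target inequality.
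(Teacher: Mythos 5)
Your proof is correct and follows essentially the same route as the paper: the exponential Gaussian martingale $L_t(\bslambda)$, mixing against the $\gN(\0,\bsSigma)$ prior, completing the square to obtain the closed form with $\bsA_t = t\bsV + \bsSigma^{-1}$ (the paper writes this as $\widetilde{\bsSigma}_t^{-1}$), and then Ville's inequality. Your use of Sylvester's identity to rewrite $\det(\bsSigma\bsA_t)$ as $\det(\bsI + t\bsV\bsSigma)$ and the factoring $t^2\bsA_t^{-1} = t(\bsV + \bsSigma^{-1}/t)^{-1}$ both match what the paper does implicitly, so there is nothing to add.
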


\begin{proof}[Proof of Proposition \ref{prop: gauss mix}]
Note that for any martingale $M_t(\lambda)$, we have that $\int M_t(\lambda)\rd F(\lambda)$ is also a martingale where $F$ is any probability distribution \citep{howard2020time,howard2021time}. We use the density of a $d$-dimensional Gaussian $\gN(\0,\bsSigma)$, to mix the following exponential martingale:
\begin{align*}
    M_t(\lambda) := \exp\left( \sum_{j=1}^t \langle \bslambda, \bsG_j \rangle - \frac{t}{2} \langle \bslambda, \bsH^{-1}\bsS \bsH^{-1} \bslambda \rangle \right).
\end{align*}
Let $\widetilde{\bsSigma}_t^{-1} := t \bsH^{-1}\bsS \bsH^{-1} + \bsSigma^{-1}$. Direct calculation yields
\begin{align*}
    \int M_t(\lambda) \rd F(\lambda) &= \frac{1}{(2\pi)^{d/2}|\det(\bsSigma)|^{1/2}}\int \exp\left( t \bslambda^\top \bsM_t - \frac{t}{2} \bslambda^\top \bsH^{-1}\bsS \bsH^{-1} \bslambda - \frac{1}{2}\bslambda^\top \bsSigma^{-1} \bslambda \right) \rd \bslambda \\
    &= \frac{1}{(2\pi)^{d/2}|\det(\bsSigma)|^{1/2}}\int \exp\left( t \bslambda^\top \bsM_t - \frac{1}{2} \bslambda^\top \widetilde{\bsSigma}_t^{-1} \bslambda \right) \rd \bslambda \\
    &= \frac{1}{(2\pi)^{d/2}|\det(\bsSigma)|^{1/2}}\int \exp\left( - \frac{1}{2} \left(\bslambda - t\widetilde{\bsSigma}_t \bsM_t \right)^\top \widetilde{\bsSigma}_t^{-1} \left(\bslambda - t\widetilde{\bsSigma}_t \bsM_t \right) + \frac{t^2}{2} \bsM_t^\top \widetilde{\bsSigma}_t \bsM_t \right) \rd \bslambda \\
    &= \frac{(2\pi)^{d/2}|\det(\widetilde{\bsSigma}_t)|^{1/2}}{(2\pi)^{d/2}|\det(\bsSigma)|^{1/2}} \exp \left( \frac{t^2}{2} \bsM_t^\top \widetilde{\bsSigma}_t \bsM_t \right) \\
    &= \frac{\exp\left( (t^2/2) \cdot \bsM_t^\top \{ t\bsH^{-1} \bsS \bsH^{-1} + \bsSigma^{-1} \}^{-1} \bsM_t \right)}{|\det(\bsSigma)|^{1/2}|\det\{ t\bsH^{-1} \bsS \bsH^{-1} + \bsSigma^{-1} \}|^{1/2}}.
\end{align*}
Since the above term in the last line is a nonnegative martingale, by Ville's inequality (which we present as Lemma \ref{lem: ville ineq}), we have
\begin{align}
    &\PB \left( \exists t\geq 1\colon \frac{\exp\left( (t^2/2) \cdot \bsM_t^\top \{ t\bsH^{-1} \bsS \bsH^{-1} + \bsSigma^{-1}\}^{-1} \bsM_t \right)}{|\det(\bsSigma)|^{1/2}|\det\{t\bsH^{-1} \bsS \bsH^{-1} + \bsSigma^{-1} \}|^{1/2}} \geq \frac{1}{\alpha} \right) \leq \alpha \quad \Longleftrightarrow \notag \\
    &\PB\left( \exists t\geq 1\colon \|\bsM_t\|_{\{\bsH^{-1} \bsS \bsH^{-1} + \bsSigma^{-1}/t\}^{-1}} \geq \sqrt{\frac{\log\det\{t\bsH^{-1} \bsS \bsH^{-1} \bsSigma + \bsI \} + 2\log(1/\alpha)}{t} } \right) \leq \alpha, \label{eq: gmm proof cs}
\end{align}
which concludes the proof.
\end{proof}

The boundary in Proposition \ref{prop: gauss mix} contains a hyperparameter $\bsSigma$. In Corollary \ref{coro: gauss mix} below we optimize over $\bsSigma$ to obtain a neater result.

\begin{corollary}[GM boundary]
\label{coro: gauss mix}
    Let $W_{-1}(x)$ be the lower branch of the Lambert $W$ function, i.e., the most negative real-valued solution in $z$ to $z e^z = x$, and let $\lambda^* := - W_{-1}( -\alpha^2/e ) - 1$.
    Then for fixed $t_0>0$, choosing $\bsSigma = (\lambda^*/t_0) (\bsH^{-1} \bsS \bsH^{-1})^{-1}$ in Proposition \ref{prop: gauss mix} minimizes the volume of its confidence region at time $t_0$, and the probabilistic guarantee becomes
    \begin{align*}
        \PB\left( \exists t\geq 1\colon \|\bsM_t\|_{(\bsH^{-1} \bsS \bsH^{-1})^{-1}} \geq \sqrt{\frac{\{1 + t_0/(\lambda^* t)\}\{d\log(1 + \lambda^*t/t_0) + 2\log(1/\alpha) \} }{t} } \right) \leq \alpha.
    \end{align*}
\end{corollary}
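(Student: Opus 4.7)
The plan is to verify two things: (i) that direct substitution of the prescribed $\bsSigma$ into Proposition \ref{prop: gauss mix} reduces algebraically to the claimed boundary, and (ii) that this choice of $\bsSigma$ is volume-optimal at the reference time $t_0$.

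\textbf{Substitution step.} Write $\bsV := \bsH^{-1}\bsS\bsH^{-1}$ for brevity. With $\bsSigma = (\lambda^*/t_0)\bsV^{-1}$ I obtain $\bsSigma^{-1} = (t_0/\lambda^*)\bsV$, hence $\bsV + \bsSigma^{-1}/t = (1 + t_0/(\lambda^* t))\bsV$ with inverse $(1+t_0/(\lambda^* t))^{-1}\bsV^{-1}$, and $t\bsV\bsSigma = (\lambda^* t/t_0)\bsI$ so $\log\det(t\bsV\bsSigma + \bsI) = d\log(1+\lambda^* t/t_0)$. Substituting these two reductions into the event of Proposition \ref{prop: gauss mix} and multiplying both sides of the inner inequality by $1 + t_0/(\lambda^* t)$ reproduces the stated boundary exactly.

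\textbf{Optimality step.} The confidence region at $t_0$ is an ellipsoid whose volume is proportional to $R(\bsSigma)^d \sqrt{\det(\bsV + \bsSigma^{-1}/t_0)}$, where $R(\bsSigma)^2 := (\log\det(t_0\bsV\bsSigma + \bsI) + 2\log(1/\alpha))/t_0$. Changing variables via $\bsU := t_0\bsV^{1/2}\bsSigma\bsV^{1/2}$, and invoking the similarity identity $\det(t_0\bsV\bsSigma + \bsI) = \det(\bsU + \bsI)$ together with the factorization $\bsV + \bsSigma^{-1}/t_0 = \bsV^{1/2}(\bsI + \bsU^{-1})\bsV^{1/2}$, reduces the problem (up to factors independent of $\bsU$) to minimizing
\begin{align*}
    \bigl(\log\det(\bsU+\bsI) + 2\log(1/\alpha)\bigr)^{d/2}\sqrt{\det(\bsI + \bsU^{-1})}
\end{align*}
over positive-definite $\bsU$. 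This objective depends only on the eigenvalues $\nu_1, \ldots, \nu_d$ of $\bsU$. Taking logarithms and imposing $\partial/\partial \nu_j = 0$ produces a $j$-independent stationarity condition $d\nu_j = \sum_i \log(1+\nu_i) + 2\log(1/\alpha)$, which forces $\nu_j \equiv \nu$ at any interior critical point; the common value then satisfies $\nu - \log(1+\nu) = (2/d)\log(1/\alpha)$. Rearranging gives $-(1+\nu)e^{-(1+\nu)} = -\alpha^{2/d}/e$, which is inverted by the lower branch of the Lambert $W$ function to yield $\nu = -W_{-1}(-\alpha^{2/d}/e) - 1$. The configuration $\bsU = \nu\bsI$ translates back to $\bsSigma = (\nu/t_0)\bsV^{-1}$, matching the stated form.

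The main obstacle is to certify that the symmetric interior critical point above is a global minimum rather than a saddle. I plan to handle this by noting that the objective blows up both as any $\nu_i \downarrow 0$ (through $\det(\bsI + \bsU^{-1})$) and as any $\nu_i \uparrow \infty$ (through $\log\det(\bsU + \bsI)$), so the minimum is attained in the interior of the positive orthant; combined with uniqueness of the symmetric critical point (via the strict monotonicity of $\nu \mapsto \nu - \log(1+\nu)$ on $\nu > 0$), this pins down the global minimum. A cleaner alternative is to establish Schur-concavity in $(\nu_1,\dots,\nu_d)$, or to apply an AM--GM-style rearrangement, to show directly that equalizing eigenvalues can only improve the objective.
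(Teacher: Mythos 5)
Your substitution step and your reduction of the volume objective to a function of the eigenvalues of a single positive-definite matrix are both correct and mirror what the paper does. The paper reparametrizes via $\bsA := t_0\bsH^{-1}\bsS\bsH^{-1}\bsSigma$ and then diagonalizes; your choice $\bsU := t_0\bsV^{1/2}\bsSigma\bsV^{1/2}$ is the symmetrized version and is marginally cleaner (it makes the eigendecomposition orthogonal from the start). Where you diverge is in the equalization step: the paper fixes $\prod_i(1+\lambda_i)$ and shows by a pairwise AM--GM rearrangement that $\prod_i\lambda_i$ is maximized when all eigenvalues coincide, whereas you differentiate the log-objective directly and observe that the first-order condition $d\nu_j = \sum_i\log(1+\nu_i) + 2\log(1/\alpha)$ forces all $\nu_j$ equal at any interior critical point. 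The fallback you sketch at the end (Schur-concavity / rearrangement) is exactly the paper's route. Both are valid; your Lagrangian route yields the scalar stationarity equation with no extra work, at the cost of having to argue global optimality, which you handle correctly via the blowup of the objective at the boundary of the positive orthant plus uniqueness of the interior critical point (the paper leaves this implicit in both its steps).

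There is, however, one point where your derivation and the paper's stated result disagree, and I believe you are right. Your scalar stationarity condition $\nu - \log(1+\nu) = (2/d)\log(1/\alpha)$ gives $\nu = -W_{-1}(-\alpha^{2/d}/e) - 1$; the paper instead asserts $\lambda^* = -W_{-1}(-\alpha^2/e) - 1$, which solves $\lambda - \log(1+\lambda) = 2\log(1/\alpha)$, i.e. the $d=1$ case of your equation. Differentiating $\left(\frac{1+\lambda}{\lambda}\right)\{d\log(1+\lambda) + 2\log(1/\alpha)\}$ and setting the derivative to zero indeed gives $d\lambda = d\log(1+\lambda) + 2\log(1/\alpha)$, so the minimizer depends on $d$ through the factor $2/d$, and the paper's $\lambda^*$ only minimizes the volume when $d=1$. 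To be clear about what is and is not affected: Proposition \ref{prop: gauss mix} gives a valid time-uniform boundary for \emph{any} fixed positive-definite $\bsSigma$, so the displayed coverage guarantee in the corollary is still correct for the paper's $\lambda^*$ (and the boundary formula you reproduce by substitution is also correct as stated); it is only the ``minimizes the volume at $t_0$'' claim that fails for $d>1$, for which your $\nu$ is the correct value.
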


\begin{proof}[Proof of Corollary \ref{coro: gauss mix}]
In this proof, we optimize over $\bsSigma$ in Proposition \ref{prop: gauss mix} to minimize the volume of the confidence region at a fixed time $t_0>0$. Note that the confidence region at time $t_0$ implied by \eqref{eq: gmm proof cs} is a $d$-dimensional ellipsoid, so by Lemma \ref{lem: ellipsoid}, its volume is 
\begin{align}
    \frac{2}{d}\frac{\pi^{d/2}}{\Gamma(d/2)} \frac{|\det(t_0\bsH^{-1}\bsS\bsH^{-1} + \bsSigma^{-1})|^{1/2} \{ \log \det (t_0\bsH^{-1}\bsS\bsH^{-1} \bsSigma + \bsI) + 2\log(1/\alpha) \}^{d/2} }{t_0^{d}}. \label{eq: volume}
\end{align}
For notational simplicity, denote $\bsA := t_0\bsH^{-1}\bsS\bsH^{-1} \bsSigma$. The numerator of the last term in \eqref{eq: volume} can be written as
\begin{align}
    &\quad\ |\det((\bsA + \bsI) \bsSigma^{-1})|^{1/2} \{ \log \det (\bsA + \bsI) + 2\log(1/\alpha) \}^{d/2} \notag \\
    &= |\det((\bsA + \bsI) \bsA^{-1}\cdot t_0\bsH^{-1}\bsS\bsH^{-1})|^{1/2} \{ \log \det (\bsA + \bsI) + 2\log(1/\alpha) \}^{d/2} \notag \\
    &= |\det(t_0\bsH^{-1}\bsS\bsH^{-1})|^{1/2} \cdot \frac{|\det(\bsA + \bsI)|^{1/2}}{|\det \bsA|^{1/2}} \{ \log \det (\bsA + \bsI) + 2\log(1/\alpha) \}^{d/2}. \label{eq: volume wrt A}
\end{align}
Let $\bsU \bsLambda \bsU^\top$ be the eigenvalue decomposition of $\bsA$ where $\bsLambda = \diag \{\lambda_1,\dots, \lambda_{d}\}$. Then the product of the last two terms in \eqref{eq: volume wrt A} can be written as
\begin{align}
    \frac{\sqrt{\prod_{i=1}^{d} (1+\lambda_i)}}{\sqrt{\prod_{i=1}^{d} \lambda_i}} \left\{ \log \prod_{i=1}^{d}(1+\lambda_i) + 2\log(1/\alpha) \right\}^{d/2}. \label{eq: volume wrt lambda_i}
\end{align}
We claim that when $\prod_{i=1}^{d} (1+\lambda_i)$ is fixed, $\prod_{i=1}^{d} \lambda_i$ attains its maximum at $\lambda_1 = \dots = \lambda_{d}$. To see this, note that for any $i\neq j$, holding $(1+\lambda_i)(1+\lambda_j) = c_{ij}$ constant, we have
\begin{align*}
    c_{ij} = 1 + \lambda_i + \lambda_j + \lambda_i\lambda_j \geq 1 + 2\sqrt{\lambda_i \lambda_j} + \lambda_i\lambda_j,
\end{align*}
and the equality holds if and only if $\lambda_i = \lambda_j$. So when $\lambda_i\neq \lambda_j$, we can always modify them to equal $\lambda_i' = \lambda_j' = \sqrt{c_{ij}} - 1$ to improve the value of $\prod_{i=1}^{d} \lambda_i$. 

Setting $\lambda_i$'s to be equal, \eqref{eq: volume wrt lambda_i} further simplifies to
\begin{align}
    \left( \frac{1+\lambda}{\lambda} \right)^{d/2} \left\{ d\log(1+\lambda) + 2\log(1/\alpha) \right\}^{d/2}. \label{eq: volume wrt lambda}
\end{align}
The minimum of \eqref{eq: volume wrt lambda} is achieved at $\lambda^* = - W_{-1}( -\alpha^2/e ) - 1$, where $W_{-1}(x)$ denotes the lower branch of the Lambert $W$ function, i.e., the most negative real-valued solution in $z$ to $z e^z = x$.

Now we turn back to the construction of $\bsSigma$. The optimal choice for $\bsLambda$ is $\lambda^* \bsI$ as discussed above. In addition, the choice of $\bsU$ does not affect the value in \eqref{eq: volume wrt A}, so without loss of generality we set it as $\bsI$. In this case, we obtain
\begin{align*}
    \bsSigma = \frac{1}{t_0} (\bsH^{-1} \bsS \bsH^{-1})^{-1} \bsA = \frac{\lambda^*}{t_0} (\bsH^{-1} \bsS \bsH^{-1})^{-1} = \frac{- W_{-1}( -\alpha^2/e ) - 1}{t_0} (\bsH^{-1} \bsS \bsH^{-1})^{-1},
\end{align*}
which concludes the proof.
\end{proof}

\subsection{LIL Boundary with $\varepsilon$-Nets}
\label{app: lil en}

\begin{proposition}[LIL-EN boundary]
\label{prop: lil en}
    Let $\bsG_1,\bsG_2,\dots$ be i.i.d. Gaussian random vectors with mean zero and covariance $\bsH^{-1}\bsS \bsH^{-1}$, and let $\bsM_t := \frac{1}{t}\sum_{j=1}^t \bsG_j$. Then for any $\epsilon\in (0,1)$ and any $\alpha \in (0,1)$, it holds that
    \begin{align*}
        \PB \left( \exists t\geq 1\colon \left\| \big(\bsH^{-1}\bsS\bsH^{-1}\big)^{-1/2} \bsM_t \right\|_2 \geq \frac{2}{1-\epsilon} \sqrt{\frac{1.4\log\log(2\Bar{\kappa}t) + \Bar{L}_{d,\epsilon,\alpha}}{t}} \right) \leq \alpha,
    \end{align*}
    where $\Bar{\kappa}$ is the conditional number of $\bsH^{-1}\bsS \bsH^{-1}$, and
    \begin{align*}
        \Bar{L}_{d,\epsilon,\alpha} = \log\left( \frac{5.2C_d}{\alpha}\right) + (d-1)\log\left(\frac{3\sqrt{\Bar{\kappa}}}{\epsilon}\right),\quad C_d = \frac{d 2^d \Gamma(\frac{d+1}{2})}{\pi^{\frac{d-1}{2}}}.
    \end{align*}
\end{proposition}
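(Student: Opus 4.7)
The plan is to reduce the multivariate, time-uniform statement to the scalar LIL boundary of Lemma~\ref{lem: lil} via an $\varepsilon$-net on the unit sphere. Writing $\bsV := \bsH^{-1}\bsS\bsH^{-1}$ and whitening through $\widetilde{\bsG}_j := \bsV^{-1/2}\bsG_j \overset{\mathrm{iid}}{\sim} \gN(\mathbf{0}, \bsI)$, one has $\bsV^{-1/2}\bsM_t = t^{-1}\sum_{j=1}^t \widetilde{\bsG}_j$, and $\|\bsV^{-1/2}\bsM_t\|_2 = \sup_{\bsu \in S^{d-1}}\langle \bsu, \bsV^{-1/2}\bsM_t\rangle$. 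The task reduces to uniformly controlling this supremum in both $\bsu$ and $t$.

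First I would fix an $\varepsilon$-net $\gN_\epsilon \subset S^{d-1}$ of cardinality at most $(C_d/d)(3/\epsilon)^{d-1}$, the explicit constant $C_d = d 2^d\Gamma((d+1)/2)/\pi^{(d-1)/2}$ coming from a sharp volumetric covering bound for the sphere. For each $\bsu \in \gN_\epsilon$, the scalar process $\{\sum_{j=1}^t \langle \bsu, \widetilde{\bsG}_j\rangle\}_{t\geq 1}$ is a random walk of i.i.d.\ $\gN(0,1)$ increments, so Lemma~\ref{lem: lil} applied at level $\alpha/|\gN_\epsilon|$ yields a time-uniform envelope of $\log\log(2t)$ shape. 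A union bound over $\gN_\epsilon$ produces the additive $(d-1)\log(3/\epsilon)$ term inside the square-root logarithm, and the standard net-to-sphere inequality $\sup_{S^{d-1}}\langle \bsu, \bsw\rangle \leq (1-\epsilon)^{-1}\sup_{\gN_\epsilon}\langle \bsu, \bsw\rangle$ lifts control back to the full sphere, explaining the $(1-\epsilon)^{-1}$ prefactor. The additional factor of $2$ accounts for passing from one-sided scalar concentration at each net point to a two-sided control of the Euclidean norm.

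The condition-number factors appearing in $\log\log(2t\kappa(\bsV))$ and inside $L_{d,\epsilon,\alpha}(\bsV)$ arise from the refined formulation of \cite{whitehouse2023timeuniform}, which applies the scalar LIL directly to $\langle \bsu, \bsG_j\rangle$ without pre-whitening: that increment has variance $\bsu^{\top}\bsV\bsu \in [\lambda_{\min}(\bsV), \lambda_{\max}(\bsV)]$, so uniformizing the intrinsic-time log-log argument across directions costs an extra $\log\kappa(\bsV)$ term, while renormalizing the covering to account for the worst-case eigenvalue ratio inflates the effective net radius by $\sqrt{\kappa(\bsV)}$. To match the hyperparameter choices used for $C_{\bsV}^{\text{LIL-UB}}(t)$ in Appendix~\ref{app: lil ub} (shape parameter $s = 1.4$, intrinsic-time spacing $\eta = 2$, starting time $m = 1$), I would simply import those parameters into Corollary~4.3 of \cite{whitehouse2023timeuniform}.

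The main obstacle is constant bookkeeping: one must verify that the specialization of the general self-normalized framework reproduces precisely the coefficients $1.4$, $5.2C_d$, and the $\sqrt{\kappa(\bsV)}$ inside the net-radius term, rather than appealing to the reference as a black box. Once that is handled, the probabilistic guarantee follows immediately from Ville's inequality applied to the exponential supermartingale at each net point, together with the union bound; everything else is a routine specialization to i.i.d.\ Gaussian increments.
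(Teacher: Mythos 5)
Your proposal conflates two genuinely different routes and, in the end, commits to neither precisely enough to prove the stated boundary. The pre-whitening route you describe in the first half — working with $\widetilde{\bsG}_j = \bsV^{-1/2}\bsG_j \sim \gN(\mathbf{0},\bsI)$, taking an $\epsilon$-net of $\SB^{d-1}$, applying Lemma~\ref{lem: lil} at level $\alpha/|\gN_\epsilon|$, and lifting back by $(1-\epsilon)^{-1}$ — is legitimate and would yield a valid time-uniform confidence sequence, but \emph{it does not produce the boundary in the proposition}: the whitened process is isotropic, so no $\bar\kappa$ dependence can arise, and the coefficients you would get ($1.7$, $0.72$, no $\bar\kappa$) differ from the stated $2/(1-\epsilon)$, $1.4\log\log(2\bar\kappa t)$, and $(d-1)\log(3\sqrt{\bar\kappa}/\epsilon)$. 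You would then still have to supply a domination argument showing the whitened envelope sits inside the stated one, which you do not attempt. You correctly sense this tension in your third paragraph, but the attribution of the factor $2$ to ``one-sided to two-sided'' is also wrong: in the paper's (and Whitehouse et al.'s) derivation the $2$ is $\sqrt{2\eta}$ with the geometric-spacing parameter $\eta = 2$, not a symmetrization factor.

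The paper's actual proof never whitens. It applies the \emph{self-normalized} scalar LIL (Theorem~3.1 of Whitehouse et al., restated as Lemma~\ref{lem: lil en uni}) to each projection $\langle\nu,\bsS_t\rangle$ using its own intrinsic time $V_t = \langle\nu,\bsV_t\nu\rangle$ and floor $\rho = \gamma_{\min}(\bsV)$; the ratio $\eta\langle\nu,\bsV_t\nu\rangle/\gamma_{\min}(\bsV) \leq \eta\bar\kappa t$ is what produces $\log\log(\eta\bar\kappa t)$. The net is taken at radius $\epsilon/\sqrt{\bar\kappa}$ rather than $\epsilon$: when converting the net supremum back to $\|\bsV_t^{-1/2}\bsS_t\|_2$, the map $\nu \mapsto \bsV_t^{1/2}\nu/\|\bsV_t^{1/2}\nu\|_2$ distorts distances by at most $\sqrt{\bar\kappa}$, so one must pay the $\sqrt{\bar\kappa}$ in the net radius to recover the usual $(1-\epsilon)^{-1}$ multiplicative slack. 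The covering cardinality is then bounded by $C_d(3\sqrt{\bar\kappa}/\epsilon)^{d-1}$ (not $(C_d/d)(3/\epsilon)^{d-1}$ as you wrote), which after a union bound yields the $(d-1)\log(3\sqrt{\bar\kappa}/\epsilon)$ term. If you want a correct proof, either reproduce that self-normalized argument as in Lemma~\ref{lem: lil en multi}, or carry out your whitened net argument fully and add a separate comparison step showing its (different, and in fact tighter) envelope is dominated by the stated one for all $t,\alpha,\epsilon,\bar\kappa$.
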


\begin{proof}[Proof of Proposition \ref{prop: lil en}]
    The proof follows the same spirit of Theorem 4.1 of \cite{whitehouse2023timeuniform} with slight modification. We start from a univariate boundary result.

    \begin{lemma}
    \label{lem: lil en uni}
        Let $G_1,G_2,\dots$ be i.i.d. Gaussian random variables with mean zero and covariance $V$, and let $S_t := \sum_{j=1}^t G_j, V_t := tV$. Let $\eta>1, \rho>0, s>1$ be some fixed constants. Then for any $\alpha \in (0,1)$, it holds that
        \begin{align*}
            \PB \left( \exists t\geq 1\colon S_t \geq \sqrt{2\eta (V_t \lor \rho) \left\{ s\log\log\left( \frac{\eta (V_t \lor \rho)}{\rho} \right) + \log \left( \frac{\zeta(s)}{\alpha \log^s \eta} \right) \right\}} \right) \leq \alpha.
        \end{align*}
    \end{lemma}
    Lemma \ref{lem: lil en uni} follows from Theorem 3.1 of \cite{whitehouse2023timeuniform} by letting $\psi(x) = x^2/2$ and $h(k) = (k+1)^s \zeta(s)$ where $\zeta(s)$ is the Riemann zeta function.

    Next, we use the $\varepsilon$-net arguments to obtain a time-uniform boundary for the normalized multivariate process.

    \begin{lemma}
    \label{lem: lil en multi}
        Let $\bsG_1,\bsG_2,\dots$ be i.i.d. Gaussian random vectors with mean zero and covariance $\bsV$, and let $\bsS_t := \sum_{j=1}^t \bsG_j, \bsV_t := t\bsV$. Let $\eta>1, s>1, \epsilon\in (0,1)$ be some fixed constants, $\Bar{\kappa}:=\kappa(\bsV)$ be the conditional number of $\bsV$, and $N_0 := N(\SB^{d-1}, \epsilon/\sqrt{\Bar{\kappa}}, \|\cdot\|_2)$ be the covering number of the unit sphere. Then for any $\alpha \in (0,1)$, it holds that
        \begin{align*}
            \PB \left( \exists t\geq 1\colon \left\| \bsV_t^{-1/2} \bsS_t \right\|_2 \geq \frac{1}{1-\epsilon} \sqrt{2\eta \left\{ s\log\log(\eta \Bar{\kappa} t) + \log\left( \frac{N_0 \zeta(s)}{\alpha \log^s\eta} \right) \right\}} \right) \leq \alpha.
        \end{align*}
    \end{lemma}
    \begin{proof}[Proof of Lemma \ref{lem: lil en multi}]
        Let $K_0$ be the corresponding minimal $\frac{\epsilon}{\sqrt{\Bar{\kappa}}}$-cover of the unit sphere $\SB^{d-1}$ with $|K_0|=N_0$. For any fixed $\nu \in \SB^{d-1}$, by Lemma \ref{lem: lil en uni} with $S_t = \langle \nu, \bsS_t \rangle$, $V_t = \langle \nu, \bsV_t \nu \rangle$ and $\rho = \gamma_{\min}(\bsV)$, we have
        \begin{align*}
            \PB \left( \exists t\geq 1\colon \langle \nu, \bsS_t \rangle \geq \sqrt{2\eta \langle \nu, \bsV_t \nu \rangle \left\{ s\log\log\left( \frac{\eta \langle \nu, \bsV_t \nu \rangle}{\gamma_{\min}(\bsV)} \right) + \log \left( \frac{N_0 \zeta(s)}{\alpha \log^s\eta} \right) \right\}} \right) \leq \frac{\alpha}{N_0}.
        \end{align*}
        A straightforward union bound over all $\nu\in K_0$ yields
        \begin{align}
            & \PB \left( \exists t\geq 1, \nu \in K_0 \colon \langle \nu, \bsS_t \rangle \geq \sqrt{2\eta \langle \nu, \bsV_t \nu \rangle \left\{ s\log\log(\eta \Bar{\kappa} t) + \log \left( \frac{N_0 \zeta(s)}{\alpha \log^s \eta} \right) \right\}} \right) \notag \\
            \leq\ & \PB \left( \exists t\geq 1, \nu \in K_0 \colon \langle \nu, \bsS_t \rangle \geq \sqrt{2\eta \langle \nu, \bsV_t \nu \rangle \left\{ s\log\log \left( \frac{\eta \langle \nu, \bsV_t\nu \rangle }{\gamma_{\min}(\bsV)} \right) + \log \left( \frac{N_0 \zeta(s)}{\alpha \log^s \eta} \right) \right\}} \right) \notag \\
            \leq\ & \alpha. \label{eq: lil en union bd}
        \end{align}
        Now we bound $\left\| \bsV_t^{-1/2} \bsS_t \right\|_2$. Let $K_0(t):= \left\{\bsV_t^{1/2}\nu / \|\bsV_t^{1/2}\nu \|_2 \colon \nu \in K_0 \right\} \subset \SB^{d-1}$ and $\pi_t \colon \SB^{d-1} \rightarrow K_0(t)$ be the $L^2$ projection onto the finite set $K_0(t)$. For any $t\geq 1$,
        \begin{align*}
            \left\| \bsV_t^{-1/2} \bsS_t \right\|_2 &= \sup_{\omega \in \SB^{d-1}} \left\langle \omega, \bsV_t^{-1/2} \bsS_t \right\rangle = \sup_{\omega \in \SB^{d-1}} \left\{ \left\langle \omega - \pi_t(\omega) , \bsV_t^{-1/2} \bsS_t \right\rangle + \left\langle \pi_t(\omega), \bsV_t^{-1/2} \bsS_t \right\rangle \right\} \\
            &\leq \sup_{\omega \in \SB^{d-1}} \left\| \omega - \pi_t(\omega) \right\|_2 \left\| \bsV_t^{-1/2} \bsS_t \right\|_2 + \sup_{\omega \in K_0(t)} \left\langle \omega, \bsV_t^{-1/2} \bsS_t \right\rangle \\
            &\leq \frac{\epsilon}{\sqrt{\Bar{\kappa}}} \sqrt{\Bar{\kappa}} \left\| \bsV_t^{-1/2} \bsS_t \right\|_2 + \sup_{\nu \in K_0} \left\langle \frac{\bsV_t^{1/2} \nu}{\| \bsV_t^{1/2} \nu \|_2}, \bsV_t^{-1/2} \bsS_t \right\rangle \\
            &= \epsilon \left\| \bsV_t^{-1/2} \bsS_t \right\|_2 + \sup_{\nu \in K_0} \frac{ \langle \nu, \bsS_t \rangle}{\sqrt{\langle \nu, \bsV_t \nu \rangle}},
        \end{align*}
        where the second inequality uses Lemma 7.1 of \cite{whitehouse2023timeuniform} and the definition of $K_0(t)$. Combining \eqref{eq: lil en union bd}, we have with probability at least $1-\alpha$, for any $t\geq 1$,
        \begin{align*}
            \left\| \bsV_t^{-1/2} \bsS_t \right\|_2 &\leq \frac{1}{1-\epsilon} \sup_{\nu \in K_0} \frac{ \langle \nu, \bsS_t \rangle}{\sqrt{\langle \nu, \bsV_t \nu \rangle}} \\
            &\leq \frac{1}{1-\epsilon} \sqrt{2\eta \left\{ s\log\log (\eta \Bar{\kappa} t) + \log \left( \frac{N_0 \zeta(s)}{\alpha \log^s \eta} \right) \right\}},
        \end{align*}
        which concludes the proof.
    \end{proof}

    In our case, $\bsS_t = t \bsM_t$ and $\bsV = \bsH^{-1}\bsS \bsH^{-1}$. By Lemma 4.2 of \cite{whitehouse2023timeuniform}, $N_0$ can be bounded by $N_0 \leq C_d (3\sqrt{\Bar{\kappa}} / \epsilon)^{d-1}$, where the constant $C_d = d2^d \Gamma\big(\frac{d+1}{2}\big) / \pi^{\frac{d-1}{2}}$. By letting $\eta =2$ and $s=1.4$ in Lemma \ref{lem: lil en multi}, which are also the same hyperparameter choices as in Appendix \ref{app: lil ub}, we derive the final boundary as
    \begin{align*}
        \left\| (\bsH^{-1} \bsS \bsH^{-1})^{-1/2} \bsM_t \right\|_2 \leq \frac{2}{1-\epsilon} \sqrt{ \frac{1.4 \log\log(2 \Bar{\kappa} t) + \log\left( \frac{5.2 C_d}{\alpha} \right) + (d-1) \log\left( \frac{3\sqrt{\Bar{\kappa}}}{\epsilon} \right)}{t}}.
    \end{align*}

\end{proof}

\subsection{Proof of Theorem \ref{thm: asymp t1 control}}

    Write $\Bar{\bsx}_t - \bsx^* \in C_{\widehat{\bsH}_t^{-1}\widehat{\bsS}_t \widehat{\bsH}_t^{-1}}^{\clubsuit}(t)$ as $\left\| (\widehat{\bsH}_t^{-1}\widehat{\bsS}_t \widehat{\bsH}_t^{-1})^{-1/2} (\Bar{\bsx}_t - \bsx^*) \right\|_{\mathfrak{p}} \leq \operatorname{Bnd}^\clubsuit_{\widehat{\bsH}_t^{-1}\widehat{\bsS}_t \widehat{\bsH}_t^{-1}}(t)$ where $\mathfrak{p} \in \{2,\infty\}$ according to Table \ref{tab: cs}. By Theorem \ref{thm: strong approx} and Assumption \ref{ass: step size rate}, there exists $\bsM_t = \frac{1}{t}\sum_{j=1}^t \bsG_j$ where $\bsG_1,\bsG_2,\dots$ are i.i.d. Gaussians with mean zero and covariance $\bsH^{-1}\bsS \bsH^{-1}$ such that $\Bar{\bsx}_t - \bsx^* = \bsM_t + \tilde{o}(t^{-1/2})$. In addition, by Proposition \ref{prop: as coverge HSH}, $\widehat{\bsH}_t^{-1}\widehat{\bsS}_t \widehat{\bsH}_t^{-1} = \bsH^{-1} \bsS \bsH^{-1} + o(1)$. These together imply
    \begin{align}
    \label{eq: type1 control 1}
        \frac{\left\| (\widehat{\bsH}_t^{-1}\widehat{\bsS}_t \widehat{\bsH}_t^{-1})^{-1/2} (\Bar{\bsx}_t - \bsx^*) \right\|_{\mathfrak{p}} }{ \operatorname{Bnd}^\clubsuit_{\widehat{\bsH}_t^{-1}\widehat{\bsS}_t \widehat{\bsH}_t^{-1}}(t)} = \frac{\left\| ({\bsH}^{-1}{\bsS} {\bsH}^{-1})^{-1/2} \bsM_t \right\|_{\mathfrak{p}} }{ \operatorname{Bnd}^\clubsuit_{{\bsH}^{-1}{\bsS} {\bsH}^{-1}}(t)}\{ 1 + o(1) \}.
    \end{align}
    We explicitly bound the above $o(1)$ term by a positive decreasing random variable $\gE_t \searrow 0$. Denote the events
    \begin{align*}
        \gA_m &:= \left\{ \exists t\geq m\colon \frac{\left\| ({\bsH}^{-1}{\bsS} {\bsH}^{-1})^{-1/2} \bsM_t \right\|_{\mathfrak{p}} }{ \operatorname{Bnd}^\clubsuit_{{\bsH}^{-1}{\bsS} {\bsH}^{-1}}(t)} \geq 1 \right\}, \\
        \widehat{\gA}_m &:= \left\{ \exists t\geq m\colon \frac{\left\| (\widehat{\bsH}_t^{-1}\widehat{\bsS}_t \widehat{\bsH}_t^{-1})^{-1/2} (\Bar{\bsx}_t - \bsx^*) \right\|_{\mathfrak{p}} }{ \operatorname{Bnd}^\clubsuit_{\widehat{\bsH}_t^{-1}\widehat{\bsS}_t \widehat{\bsH}_t^{-1}}(t)} \geq 1 - \gE_t \right\}.
    \end{align*}
    According to \eqref{eq: type1 control 1}, $\gA_m \subseteq \widehat{\gA}_m$, and
    \begin{align}
    \label{eq: small event}
        \lim_{m\rightarrow \infty} \PB(\widehat{\gA}_m \backslash \gA_m) &\leq \lim_{m \rightarrow \infty} \PB\left( \exists t\geq m\colon \frac{\left\| ({\bsH}^{-1}{\bsS} {\bsH}^{-1})^{-1/2} \bsM_t \right\|_{\mathfrak{p}} }{ \operatorname{Bnd}^\clubsuit_{{\bsH}^{-1}{\bsS} {\bsH}^{-1}}(t)} \geq \frac{1 - \gE_t}{1 + \gE_t} \right).
    \end{align}
    Our goal is to prove that $\PB(\widehat{\gA}_m \backslash \gA_m) \rightarrow 0$, so that the probability of events with plug-in estimators can be bounded by the probability of events with known covariances.
    
    To proceed, we need a law of the iterated logarithm for $({\bsH}^{-1}{\bsS} {\bsH}^{-1})^{-1/2} \bsM_t$ from the following lemma.
    \begin{lemma}
    \label{lem: multi lil gauss}
        Let $\bsG_1,\bsG_2,\dots$ be i.i.d. Gaussian random vectors with mean zero and covariance $\bsI$, and let $\bsS_t:=\sum_{j=1}^t \bsG_j$. Then for $\mathfrak{p} \in \{2,\infty\}$,
        \begin{align*}
            \limsup_{t\rightarrow \infty} \frac{\|\bsS_t\|_\mathfrak{p}}{\sqrt{2t\log\log t}} = 1.
        \end{align*}
    \end{lemma}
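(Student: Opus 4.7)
The plan is to reduce both cases to the classical one-dimensional Hartman-Wintner law of the iterated logarithm: for $Y_1, Y_2, \ldots$ i.i.d.\ standard Gaussians and $T_n = \sum_{j=1}^n Y_j$, one has $\limsup_{n} |T_n|/\sqrt{2n\log\log n} = 1$ almost surely.

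For the $\|\cdot\|_\infty$ case, each coordinate process $[\bsS_t]_i = \sum_{j=1}^t [\bsG_j]_i$ is itself a sum of i.i.d.\ standard Gaussians, so applying the 1D LIL to each of the $d$ coordinates and intersecting the resulting full-measure events yields a single full-measure event on which $\limsup_t |[\bsS_t]_i|/\sqrt{2t\log\log t} = 1$ simultaneously for every $i\in\{1,\ldots,d\}$. A routine elementary fact---namely that for any finite family $\{a_t^{(i)}\}_{i=1}^d$ one has $\limsup_t \max_i a_t^{(i)} = \max_i \limsup_t a_t^{(i)}$---then gives $\limsup_t \|\bsS_t\|_\infty/\sqrt{2t\log\log t} = 1$.

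For the $\|\cdot\|_2$ case, the lower bound follows immediately from $\|\bsS_t\|_2 \geq |[\bsS_t]_1|$ and the 1D LIL applied to the first coordinate. For the upper bound, I would use an $\varepsilon$-net argument on the unit sphere $\SB^{d-1}$. For each $n \in \sN$, fix a finite Euclidean $(1/n)$-net $K_n \subset \SB^{d-1}$; the standard discretization inequality (of the same type used in the proof of Lemma \ref{lem: lil en multi}) gives $\|\bsS_t\|_2 \leq (1 - 1/n)^{-1} \max_{v \in K_n} \langle v, \bsS_t\rangle$ uniformly in $t$. For each fixed $v \in \SB^{d-1}$, $\langle v, \bsS_t\rangle$ is a sum of i.i.d.\ standard Gaussians, so by the 1D LIL $\limsup_t \langle v, \bsS_t\rangle/\sqrt{2t\log\log t} = 1$ on a full-measure event. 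Intersecting over the countable set $\bigcup_{n\geq 1} K_n$ and combining with the net inequality yields a full-measure event on which $\limsup_t \|\bsS_t\|_2/\sqrt{2t\log\log t} \leq (1 - 1/n)^{-1}$ for every $n$; sending $n \to \infty$ delivers the upper bound $1$.

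The main obstacle is the upper bound in the $\|\cdot\|_2$ case: the naive inequality $\|\cdot\|_2 \leq \sqrt{d}\,\|\cdot\|_\infty$ would lose a dimensional factor $\sqrt{d}$, so a direction-by-direction argument via a countable $\varepsilon$-net (rather than a simple norm comparison) is essential to eliminate this loss and obtain the sharp constant $1$.
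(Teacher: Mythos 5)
Your proof is correct. The $\|\cdot\|_\infty$ half coincides with the paper's argument: apply the one-dimensional LIL coordinatewise, intersect the $d$ full-measure events, and use that $\limsup_t$ and a maximum over finitely many indices commute. The $\|\cdot\|_2$ half, however, is a genuinely different route. The paper handles $\mathfrak{p}=2$ by invoking Theorem~3 of \cite{berning1979multivariate}, a law of the iterated logarithm in Hilbert spaces, which asserts that the cluster set of $\bsS_t/\sqrt{2t\log\log t}$ is the full unit sphere $\SB^{d-1}$ almost surely; the claim then follows immediately. You instead avoid any external multivariate LIL: the lower bound comes from the single coordinate $[\bsS_t]_1$, and the upper bound comes from a countable hierarchy of $(1/n)$-nets $K_n \subset \SB^{d-1}$, the standard discretization bound $\|\bsS_t\|_2 \le (1-1/n)^{-1}\max_{v\in K_n}\langle v,\bsS_t\rangle$, the observation that $\langle v,\bsS_t\rangle$ is itself a sum of i.i.d.\ standard Gaussians for each fixed unit vector $v$, and a countable intersection of full-measure events before letting $n\to\infty$. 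This is self-contained (it uses only the classical 1D LIL and elementary covering arguments already in the paper's toolkit, cf.\ the proof of Lemma~\ref{lem: lil en multi}) at the cost of being a bit longer; the paper's citation-based proof is shorter but imports a heavier result. Both are sound, and your approach correctly recovers the sharp constant~$1$ without the dimensional loss that a naive $\|\cdot\|_2 \le \sqrt{d}\,\|\cdot\|_\infty$ comparison would incur.
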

    \begin{proof}[Proof of Lemma \ref{lem: multi lil gauss}]
        For $\mathfrak{p}=\infty$, we use the fact that each coordinate of $\bsS_t$ is a sum of standard Gaussian variables to obtain
        \begin{align*}
            \limsup_{t\rightarrow \infty} \frac{\|\bsS_t\|_\infty}{\sqrt{2t\log\log t}} &= \limsup_{t\rightarrow \infty} \max_{1\leq i\leq d} \frac{|[\bsS_t]_i|}{\sqrt{2t\log\log t}} \\
            &= \max_{1\leq i\leq d}  \limsup_{t\rightarrow \infty} \frac{|[\bsS_t]_i|}{\sqrt{2t\log\log t}} \\
            &= \max_{1\leq i\leq d} 1 = 1.
        \end{align*}
        For $\mathfrak{p}=2$, we use Theorem 3 of \cite{berning1979multivariate}, a law of the iterated logarithm in Hilbert spaces, to obtain that the limit set of $\frac{\bsS_t}{\sqrt{2t\log\log t}}$ is $\SB^{d-1}$ almost surely, which implies
        \begin{align*}
            \limsup_{t\rightarrow \infty} \frac{\|\bsS_t\|_2}{\sqrt{2t\log\log t}} = 1 \quad \text{a.s.}
        \end{align*}
    \end{proof}
    Back to the main proof, letting $\bsS_t = t (\bsH^{-1} \bsS \bsH^{-1})^{-1/2} \bsM_t$ in Lemma \ref{lem: multi lil gauss}, there exists a positive decreasing random variable $\gE'_t \searrow 0$ such that
    \begin{align*}
        \frac{\left\| ({\bsH}^{-1}{\bsS} {\bsH}^{-1})^{-1/2} \bsM_t \right\|_{\mathfrak{p}} }{\sqrt{\frac{2\log\log t}{t}}} \leq 1 + \gE'_t.
    \end{align*}
    On the other hand, it is clear from the definitions of three confidence sequence in Table \ref{tab: cs} that there exist universal constants $c<1$ and $T_0 > 0$ such that for any $\clubsuit \in \{\text{LIL-UB}, \text{GM}, \text{LIL-EN}\}$ and any $t\geq T_0$, we have $\sqrt{\frac{2\log\log t}{t}} \leq c \cdot \operatorname{Bnd}^{\clubsuit}_{\bsH^{-1}\bsS \bsH^{-1}}(t) $. Therefore, the limit probability in \eqref{eq: small event} can further bounded by
    \begin{align}
        \lim_{m\rightarrow \infty} \PB(\widehat{\gA}_m \backslash \gA_m) &\leq \lim_{m \rightarrow \infty} \PB\left( \exists t\geq m\colon \frac{\left\| ({\bsH}^{-1}{\bsS} {\bsH}^{-1})^{-1/2} \bsM_t \right\|_{\mathfrak{p}} }{ \operatorname{Bnd}^\clubsuit_{{\bsH}^{-1}{\bsS} {\bsH}^{-1}}(t)} \geq \frac{1 - \gE_t}{1 + \gE_t} \right) \notag \\
        &\leq \lim_{m \rightarrow \infty} \PB\left( \exists t\geq m\colon \frac{\left\| ({\bsH}^{-1}{\bsS} {\bsH}^{-1})^{-1/2} \bsM_t \right\|_{\mathfrak{p}} }{ \sqrt{\frac{2\log\log t}{t}}} \geq \frac{1}{c} \frac{1 - \gE_t}{1 + \gE_t} \right) \notag \\
        &\leq \lim_{m \rightarrow \infty} \PB\left( \exists t\geq m\colon 1 + \gE'_t \geq \frac{1}{c} \frac{1 - \gE_t}{1 + \gE_t} \right) \notag \\
        &= \lim_{m \rightarrow \infty} \PB\left( \exists t\geq m\colon c \geq  \frac{1}{1 + \gE'_t} \frac{1 - \gE_t}{1 + \gE_t} \right) \notag \\
        &\leq \lim_{m \rightarrow \infty} \PB\left( c \geq  \frac{1}{1 + \gE'_m} \frac{1 - \gE_m}{1 + \gE_m} \right)  = 0, \label{eq: Am tilde Am}
    \end{align}
    where the last inequality holds because $\frac{1}{1 + \gE'_t} \frac{1 - \gE_t}{1 + \gE_t} \nearrow 1$ is increasing by the definitions of $\gE_t$ and $\gE'_t$. Finally, we conclude the proof by
    \begin{align*}
        &\quad\ \lim_{m\rightarrow \infty} \PB \left( \exists t\geq m\colon \Bar{\bsx}_t - \bsx^* \not\in C^\clubsuit_{\widehat{\bsH}^{-1}_t \widehat{\bsS}_t \widehat{\bsH}^{-1}_t}(t) \right) \\
        &= \lim_{m\rightarrow \infty} \PB \left( \exists t\geq m\colon \frac{\left\| (\widehat{\bsH}^{-1}_t \widehat{\bsS}_t \widehat{\bsH}^{-1}_t)^{-1/2} (\Bar{\bsx}_t - \bsx^*) \right\|_\mathfrak{p}}{\operatorname{Bnd}^\clubsuit_{\widehat{\bsH}^{-1}_t \widehat{\bsS}_t \widehat{\bsH}^{-1}_t}(t)} \geq 1  \right) \\
        &\overset{(a)}{\leq} \lim_{m\rightarrow \infty} \PB(\widehat{\gA}_m) \overset{(b)}{=} \lim_{m\rightarrow \infty} \PB(\gA_m) \overset{(c)}{\leq} \PB\left( \exists t\geq 1\colon \bsM_t \not\in C^\clubsuit_{\bsH^{-1} \bsS \bsH^{-1}}(t) \right) \overset{(d)}{\leq} \alpha,
    \end{align*}
    where $(a)$ holds because $\gE_t > 0$, $(b)$ follows from \eqref{eq: Am tilde Am}, $(c)$ holds because $\gA_m$'s are decreasing events, and $(d)$ follows from Proposition \ref{prop: general boundary gaussian}.
\section{Auxiliary Lemmas and Proofs}

\subsection{Proof of Lemma \ref{lem: as clt approx}}
\label{sec: lem: as clt approx}
Denote $\bsDelta_t := \bsx_t - \bsx^*$, $\bsr_t:=\bsH(\bsx_t - \bsx^*) - \bsg(\bsx_t)$ and $\bsvareps_{t+1} := \bsg(\bsx_t) - \bsG(\bsx_t,\xi_{t+1})$, as well as the averaged iterates $\Bar{\bsx}_t := \frac{1}{t}\sum_{s=1}^t \bsx_s$ and $\Bar{\bsDelta}_t := \frac{1}{t} \sum_{s=1}^t \bsDelta_s$. We decompose the recursion \eqref{eq: sa} as
\begin{align*}
    \bsDelta_{t+1} &= \bsDelta_t - \eta_t \bsG(\bsx_t,\xi_{t+1}) \\
    &= \bsDelta_t - \eta_t \bsH(\bsx_t - \bsx^*) - \eta_t \{\bsg(\bsx_t) - \bsH(\bsx_t - \bsx^*)\} - \eta_t \{\bsG(\bsx_t,\xi_{t+1}) - \bsg(\bsx_t)\} \\
    &= (\bsI - \eta_t \bsH) \bsDelta_t + \eta_t \bsr_t + \eta_t \bsvareps_{t+1}.
\end{align*}
Further denote $\bsB_t := (\bsI - \eta_t \bsH)$ and $\bsA_j^t := \sum_{s=j}^t \left( \prod_{i=j+1}^s \bsB_i \right) \eta_j$ for any $j\leq t$. Recurring the above equation and summing up $\Delta_t$'s yield
\begin{align*}
    \sum_{s=0}^t \bsDelta_{s+1} &= \sum_{s=0}^t \left\{ \left( \prod_{j=0}^{s} (\bsI - \eta_j \bsH) \right) \bsDelta_0 + \sum_{j=0}^{s} \left( \prod_{i=j+1}^{s} (\bsI - \eta_i \bsH) \right) \eta_j \bsr_j + \sum_{j=0}^{s} \left( \prod_{i=j+1}^{s} (\bsI - \eta_i \bsH) \right) \eta_j \bsvareps_{j+1} \right\} \\
    &= \sum_{s=0}^t \left( \prod_{j=0}^s \bsB_j \right) \bsDelta_0 + \sum_{s=0}^t \sum_{j=0}^s \left(\prod_{i=j+1}^s \bsB_i \right) \eta_j \bsr_j + \sum_{s=0}^t \sum_{j=0}^s \left(\prod_{i=j+1}^s \bsB_i \right) \eta_j \bsvareps_{j+1} \\
    &= \sum_{s=0}^t \left( \prod_{j=0}^s \bsB_j \right) \bsDelta_0 + \sum_{j=0}^t \sum_{s=j}^t \left(\prod_{i=j+1}^s \bsB_i \right) \eta_j \bsr_j + \sum_{j=0}^t \sum_{s=j}^t \left(\prod_{i=j+1}^s \bsB_i \right) \eta_j \bsvareps_{j+1} \\
    &= \bsA_0^t \bsB_0 \bsDelta_0 + \sum_{j=0}^t \bsA_j^t \bsr_j + \sum_{j=0}^t \bsA_j^t \bsvareps_{j+1} \\
    &= \bsA_0^t \bsB_0 \bsDelta_0 + \sum_{j=0}^t \bsA_j^t \bsr_j + \sum_{j=0}^t (\bsA_j^t - \bsH^{-1}) \bsvareps_{j+1} + \sum_{j=0}^t \bsH^{-1} \bsvareps_{j+1}.
\end{align*}
This further implies
\begin{align}
\label{eq: decompose}
    \Bar{\bsx}_{t} - \bsx^* = \Bar{\bsDelta}_{t} = \frac{1}{t}\sum_{s=1}^t \bsDelta_s = \underbrace{\frac{1}{t} \bsA_0^{t-1} \bsB_0 \bsDelta_0}_{=:\gT_1} + \underbrace{\frac{1}{t} \sum_{j=0}^{t-1} \bsA_j^{t-1} \bsr_j}_{=: \gT_2} + \underbrace{\frac{1}{t} \sum_{j=0}^{t-1} (\bsA_j^{t-1} - \bsH^{-1}) \bsvareps_{j+1}}_{=: \gT_3} + \underbrace{\frac{1}{t} \sum_{j=0}^{t-1} \bsH^{-1} \bsvareps_{j+1}}_{=: \gT_0}.
\end{align}
For notational simplicity, we denote the four terms in \eqref{eq: decompose} as $\gT_0$, $\gT_1$, $\gT_2$, $\gT_3$. Note that $\gT_0$ is the noise term that contributes to the ``well-behaved'' randomness, so our goal is to establish the almost-sure convergence rates for the negligible terms $\gT_1$, $\gT_2$, $\gT_3$. According to the discussions below, we will finally arrive at the following approximation result: for arbitrarily small $\epsilon>0$,
\begin{align*}
    \Bar{\bsx}_t - \bsx^* &= \frac{1}{t} \sum_{j=1}^{t} \bsH^{-1} \bsvareps_{j} + O\left(\underbrace{t^{-1}}_{\gT_1} + \underbrace{t^{\frac{1}{2p} - \frac{2-a}{2}}(\log t)^{\frac{1}{2p}+\epsilon}}_{\gT_3}\right) + o\left(\underbrace{t^{-\frac{a(1+\lambda)}{2}}(\log t)^{1+\epsilon}}_{\gT_2}\right) \quad \text{a.s.}\\
    &= \frac{1}{t} \sum_{j=1}^{t} \bsH^{-1} \bsvareps_{j} + o\left( t^{-1}(\log t)^\epsilon + t^{-\frac{a(1+\lambda)}{2}}(\log t)^{1+\epsilon} + t^{\frac{1}{2p} - \frac{2-a}{2}}(\log t)^{\frac{1}{2p}+\epsilon}\right) \quad \text{a.s.},
\end{align*}
where the last equality holds because 
$\epsilon>0$ is arbitrary.

\paragraph{Almost-sure convergence of $\gT_1$.} According to Lemma \ref{lem: bounded matrix}, $\bsA_0^{t-1}$ is uniformly bounded by $\|\bsA_0^{t-1}\| \leq C_0$, which implies $\|\gT_1\| \leq \frac{1}{t} C_0 \|\bsB_0 \bsDelta_0\|$. Since $\|\bsB_0 \bsDelta_0\|$ is a random variable that does not vary with time, it follows that $\|\gT_1\| = O(t^{-1})$ almost surely.

\paragraph{Almost-sure convergence of $\gT_2$.} According to Lemma \ref{lem: bounded matrix}, each $\bsA_j^{t-1}$ is uniformly bounded by $\|\bsA_j^{t-1}\| \leq C_0$, which implies $\|\gT_2\| \leq \frac{1}{t} C_0 \sum_{j=0}^{t-1} \|\bsr_j\|$. Then by the following lemma, we obtain an almost-sure convergence rate $\|\gT_2\| = o\left(t^{-\frac{a(1+\lambda)}{2}}(\log t)^{1+\epsilon}\right)$ for any small constant $\epsilon>0$.

\begin{lemma}
\label{lem: as T2}
    Under Assumptions \ref{ass: lyapunov}-\ref{ass: noise}, it holds that $\sum_{j=0}^{t-1} \|\bsr_j\| = o\left(t^{1-\frac{a(1+\lambda)}{2}}(\log t)^{1+\epsilon}\right)$ almost surely for any small constant $\epsilon>0$.
\end{lemma}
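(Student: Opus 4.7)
The plan is to reduce the claim to a sum of the form $\sum_{j}\|\bsx_j-\bsx^*\|^{1+\lambda}$, bound its expectation via Jensen plus the $L^2$ rate in Lemma \ref{lem: as l2 conv}, and then convert this to an almost-sure statement by a Fubini-plus-Kronecker argument.

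The first step is to upgrade the local linearization bound in Assumption \ref{ass: local linear} to a \emph{global} pointwise estimate
\[
    \|\bsr_j\|\leq M\,\|\bsx_j-\bsx^*\|^{1+\lambda}\quad\text{for all }j\geq 0,
\]
for a deterministic constant $M>0$. On the event $\{\|\bsx_j-\bsx^*\|\leq\delta_H\}$ this is immediate from Assumption \ref{ass: local linear} with $M=L_H$. On the event $\{\|\bsx_j-\bsx^*\|>\delta_H\}$, I would combine the trivial bound $\|\bsr_j\|\leq\|\bsH\|\|\bsx_j-\bsx^*\|+\|\bsg(\bsx_j)\|$ with the growth estimate $\|\bsg(\bsx_j)\|\leq\sqrt{L_\varepsilon(1+\|\bsx_j-\bsx^*\|^2)}\leq\sqrt{L_\varepsilon}(1+\|\bsx_j-\bsx^*\|)$ from Assumption \ref{ass: noise}\eqref{ass: noise 1}, and then absorb the linear and constant terms into $\|\bsx_j-\bsx^*\|^{1+\lambda}$ using $1\leq\delta_H^{-(1+\lambda)}\|\bsx_j-\bsx^*\|^{1+\lambda}$ and $\|\bsx_j-\bsx^*\|\leq\delta_H^{-\lambda}\|\bsx_j-\bsx^*\|^{1+\lambda}$ on this regime.

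Next, since $(1+\lambda)/2\leq 1$, Jensen's inequality and Lemma \ref{lem: as l2 conv} give
\[
    \E\|\bsx_j-\bsx^*\|^{1+\lambda}\leq\bigl(\E\|\bsx_j-\bsx^*\|^{2}\bigr)^{(1+\lambda)/2}\lesssim j^{-a(1+\lambda)/2}.
\]
Because Assumption \ref{ass: step size} gives $a<1$ and $\lambda\leq 1$, the exponent $a(1+\lambda)/2$ is strictly less than $1$, so the sequence $b_j:=j^{1-a(1+\lambda)/2}(\log j)^{1+\epsilon}$ is positive, strictly increasing, and tends to infinity. Applying the above bound term-by-term,
\[
    \sum_{j\geq 2}\frac{\E\|\bsx_j-\bsx^*\|^{1+\lambda}}{b_j}\lesssim\sum_{j\geq 2}\frac{1}{j(\log j)^{1+\epsilon}}<\infty,
\]
so by Tonelli's theorem the random series $\sum_{j\geq 2}\|\bsx_j-\bsx^*\|^{1+\lambda}/b_j$ is finite almost surely.

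Kronecker's lemma then yields $b_t^{-1}\sum_{j=1}^{t}\|\bsx_j-\bsx^*\|^{1+\lambda}\to 0$ almost surely, and together with the pointwise bound from the first step (and absorbing the finite random contribution of the $j=0$ term) this gives $\sum_{j=0}^{t-1}\|\bsr_j\|=o\bigl(t^{1-a(1+\lambda)/2}(\log t)^{1+\epsilon}\bigr)$ almost surely, as claimed. The main obstacle is the first step: upgrading the \emph{local} linearization in Assumption \ref{ass: local linear} to a globally valid $(1+\lambda)$-power bound on $\|\bsr_j\|$; once this is in hand, the remainder is a routine Fubini-plus-Kronecker construction with the rate $b_j$ calibrated exactly so that the expected summability gives the advertised $(\log t)^{1+\epsilon}$ factor.
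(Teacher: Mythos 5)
Your proposal is correct and takes essentially the same route as the paper's proof: a pointwise $(1+\lambda)$-power bound on $\|\bsr_j\|$ obtained by combining the local linearity estimate with the growth bound from Assumption \ref{ass: noise}\eqref{ass: noise 1} off the ball $\{\|\bsx_j-\bsx^*\|\leq\delta_H\}$, followed by Jensen against the $L^2$ rate of Lemma \ref{lem: as l2 conv}, a Tonelli/Fubini step to pass from expected to almost-sure summability of the normalized series, and Kronecker's lemma to finish.
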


\paragraph{Almost-sure convergence of $\gT_3$.} We have
\begin{align*}
    \E\big(\| \gT_3 \|^{2p}\big) &= \frac{1}{t^{2p}} \E\left[ \left\| \sum_{j=0}^{t-1} (\bsA_j^{t-1} - \bsH^{-1}) \bsvareps_{j+1} \right\|^{2p} \right] \\
    &\leq \frac{C_{2p}}{t^{2p}} \E \left| \sum_{j=0}^{t-1} \|\bsA_j^{t-1} - \bsH^{-1}\|^2 \|\bsvareps_{j+1}\|^2 \right|^p \\
    &= \frac{C_{2p}}{t^{2p}} \sum_{j_1,\dots,j_p = 0}^{t-1} \left( \prod_{i=1}^p \|\bsA_{j_i}^{t-1} - \bsH^{-1}\|^2 \right) \E\left( \prod_{i=1}^p \|\bsvareps_{j_i+1}\|^2 \right),
\end{align*}
where the inequality above is due to Burkholder's inequality (e.g., Theorem 2.10 of \cite{hall2014martingale}) which we restate in Lemma \ref{lem: burkholder} for reference, and $C_{2p}$ is a universal constant that depends only on $p$. Note that $\E\big( \prod_{i=1}^p \|\bsvareps_{j_i+1}\|^2 \big)$ is bounded since $\E\big(\|\bsvareps_{j+1}\|^{2p}\big)$ is bounded by Assumption \ref{ass: noise} \ref{ass: noise 3}. In addition, by Lemma \ref{lem: bounded matrix},
\begin{align*}
    \sum_{j=0}^{t-1} \| \bsA_j^{t-1} - \bsH^{-1} \|^2 \leq (C_0 + \|\bsH^{-1}\|) \sum_{j=0}^{t-1} \| \bsA_j^{t-1} - \bsH^{-1} \|.
\end{align*}
According to Lemma \ref{lem: convergence matrix}, $\sum_{j=0}^{t-1} \| \bsA_j^{t-1} - \bsH^{-1} \| = O(t^a)$, so we obtain
\begin{align*}
    \E\big(\| \gT_3 \|^{2p}\big) &\lesssim \frac{1}{t^{2p}} \left( \sum_{j=0}^{t-1} \|\bsA_j^{t-1} - \bsH^{-1}\|^2 \right)^p \lesssim \frac{1}{t^{2p}} \left( \sum_{j=0}^{t-1} \|\bsA_j^{t-1} - \bsH^{-1}\| \right)^p = O(t^{(a-2)p}).
\end{align*}
Now fix an arbitrarily small quantity $\epsilon>0$ and a constant $C>0$, and denote more precisely $\gT_{3,t}$ as the quantity $\gT_3$ at time $t$. By Markov's inequality,
\begin{align*}
    \PB\left(t^{\frac{2-a}{2} - \frac{1}{2p}}(\log t)^{-\frac{1}{2p} - \epsilon}\|\gT_{3,t}\| \geq C\right) &\leq C^{-2p} t^{(2-a)p - 1} (\log t)^{-1-2p\epsilon} \E\big(\| \gT_3 \|^{2p}\big) \lesssim C^{-2p} t^{-1}(\log t)^{-1 - 2p\epsilon},
\end{align*}
and since $\sum_{t=2}^\infty t^{-1}(\log t)^{-1 - 2p\epsilon} < \infty$, by the Borel-Cantelli lemma, we obtain
\begin{align*}
    \PB\left(t^{\frac{2-a}{2} - \frac{1}{2p}}(\log t)^{-\frac{1}{2p} - \epsilon}\|\gT_{3,t}\| \geq C \quad \text{i.o. }\right) = 0 \quad \Longrightarrow \quad \|\gT_{3,t}\| = O\left(t^{\frac{1}{2p} - \frac{2-a}{2}}(\log t)^{\frac{1}{2p}+\epsilon}\right) \quad \text{a.s.}
\end{align*}
A similar but more refined analysis can yield $\|\gT_{3}\| = \|\gT_{3,t}\| = O\left(t^{\frac{1}{2p} - \frac{2-a}{2}}(\log t)^{\frac{1}{2p}}(\log\log t)^{\frac{1}{2p} + \epsilon}\right)$ almost surely.

\subsection{Proof of Lemma \ref{lem: strong approx}}
\label{sec: lem: strong approx}
Recall that $\bsvareps_t = \bsg(\bsx_{t-1}) - \bsG(\bsx_{t-1},\xi_t)$, $\bsS(\bsx) = \var(\bsG(\bsx,\xi))$ and $\bsS=\bsS(\bsx^*)$, so we have $\var(\bsH^{-1}\bsvareps_t\mid \gF_{t-1}) = \bsH^{-1}\bsS(\bsx_{t-1})\bsH^{-1} \xrightarrow[]{a.s.} \bsH^{-1} \bsS \bsH^{-1}$. Our goal is to establish an almost-sure invariance principle (ASIP) for $\frac{1}{t}\sum_{j=1}^t \bsH^{-1} \bsvareps_j$, i.e., to prove existence of a Brownian motion on the same (potentially enriched) probability space as the data points $\{\xi_t\}_{t\geq 1}$.

In the one-dimensional case where $d=1$, such an ASIP result was established by \cite{strassen1967almost} via the famous Skorokhod embedding theorem. We present one of the main theorems in \cite{strassen1967almost} in Lemma \ref{lem: asip}. However, extending it to the multi-dimensional case can be difficult as \cite{monrad1991problem} showed that one can not embed a general $\R^d$-valued martingale in an $\R^d$-valued Gaussian process, unless additional assumptions related to the limiting behavior of the covariance matrix are required \citep{morrow1982almost,philipp1986note}. Although those assumptions are likely to hold since the (conditional) covariance matrix $\bsH^{-1}\bsS(\bsx_{t-1})\bsH^{-1}$ in our problem converges to $\bsH^{-1}\bsS\bsH^{-1}$ almost surely, directly applying ASIPs in \cite{morrow1982almost,philipp1986note} does not leverage the specific variance structure of $\frac{1}{t}\sum_{j=1}^t \bsH^{-1} \bsvareps_j$ in our problem and may only lead to a coarse almost-sure convergence rate. Instead, here we first establish strong approximation to an $\R^d$-valued martingale of i.i.d. sums using the $L^2$ convergence property of the iterates $\{\bsx_t\}_{t\geq 0}$, and then apply Lemma \ref{lem: asip hilbert}, an ASIP theorem in Hilbert spaces, to establish strong approximation to an $\R^d$-valued Brownian motion.

Denote $\Tilde{\bsvareps}_t := \bsg(\bsx^*) - \bsG(\bsx^*,\xi_t) = - \bsG(\bsx^*,\xi_t)$, $\bsGamma_t := \frac{1}{t}\sum_{j=1}^t \bsH^{-1} \bsvareps_j$ and $\widetilde{\bsGamma}_t :=  \frac{1}{t}\sum_{j=1}^t \bsH^{-1} \Tilde{\bsvareps}_j$. Note that $\{t\bsGamma_t\}_{t\geq 0}$ and $\{t\widetilde{\bsGamma}_t\}_{t\geq 0}$ are both martingales. We first prove $\bsGamma_t - \widetilde{\bsGamma}_t$ shrinks to zero almost surely, and then apply the one-dimensional ASIP, Lemma \ref{lem: asip}, to $\widetilde{\bsGamma}_t$. The proof is finally complete by combing the two steps.

Note that
\begin{align}
\label{eq: gamma diff}
    \bsGamma_t - \widetilde{\bsGamma}_t = \frac{1}{t} \sum_{j=1}^t \bsH^{-1} (\bsvareps_j - \Tilde{\bsvareps}_j) = \frac{1}{t} \sum_{j=1}^t \bsH^{-1} (\bsg(\bsx_{j-1}) - \bsG(\bsx_{j-1}, \xi_j) + \bsG(\bsx^*,\xi_j)).
\end{align}
The covariance matrix of each term on the right-hand side of \eqref{eq: gamma diff} is dominated by
\begin{align}
    &\quad\ \|\bsH^{-1}\|^2 \cdot \E\|\bsg(\bsx_{j-1}) - \bsG(\bsx_{j-1}, \xi_j) + \bsG(\bsx^*,\xi_j)\|^2 \notag \\
    &\leq \|\bsH^{-1}\|^2 \left( \E\|\bsg(\bsx_{j-1}) - \bsG(\bsx_{j-1}, \xi_j) + \bsG(\bsx^*,\xi_j)\|^2 \right) \notag \\
    &\leq 2 \|\bsH^{-1}\|^2 \left( \E\|\bsg(\bsx_{j-1})\|^2 + \E\|\bsG(\bsx_{j-1}, \xi_j) - \bsG(\bsx^*,\xi_j)\|^2 \right) \notag \\
    &\overset{(a)}{\leq} 2 \|\bsH^{-1}\|^2 \left( C_G \E\|\bsx_{j-1} - \bsx^*\|^2 + L_G^2 \E\|\bsx_{j-1} - \bsx^*\|^2 \right) \notag \\
    &\overset{(b)}{\leq} 2\widetilde{C}_0  (C_G + L_G^2) \|\bsH^{-1}\|^2 \eta_{j-1}, \label{eq: bias cov bound rate}
\end{align}
where (a) uses Lemma \ref{lem: bound g L2} and Assumption \ref{ass: lipschitz}, and (b) uses Lemma \ref{lem: as l2 conv}.
\begin{lemma}
    \label{lem: bound g L2}
    Under Assumptions \ref{ass: local linear} and \ref{ass: noise}, there exists a constant $C_G>0$ such that for all $t\geq 0$, $\E\|\bsg(\bsx_t)\|^2 \leq C_G \E \|\bsx_t - \bsx^*\|^2$.
\end{lemma}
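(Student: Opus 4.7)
The plan is to establish the bound $\|\bsg(\bsx_t)\|^2 \leq C_G \|\bsx_t - \bsx^*\|^2$ pointwise (almost surely), and then take expectations. Since the two available hypotheses give information in complementary regimes, a two-region case analysis based on whether $\bsx_t$ lies within or outside the local linearization ball $\{\bsx \colon \|\bsx - \bsx^*\| \leq \delta_H\}$ is the natural approach.

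In the near region $\|\bsx_t - \bsx^*\| \leq \delta_H$, Assumption \ref{ass: local linear} together with the triangle inequality gives
\begin{align*}
    \|\bsg(\bsx_t)\| \leq \|\bsH(\bsx_t - \bsx^*)\| + L_H \|\bsx_t - \bsx^*\|^{1+\lambda} \leq \bigl(\|\bsH\| + L_H \delta_H^{\lambda}\bigr)\, \|\bsx_t - \bsx^*\|,
\end{align*}
where the second inequality uses $\|\bsx_t - \bsx^*\|^{\lambda} \leq \delta_H^{\lambda}$. Squaring gives a linear-in-$\|\bsx_t - \bsx^*\|^2$ bound with constant $C_1 := (\|\bsH\| + L_H\delta_H^{\lambda})^2$.

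In the far region $\|\bsx_t - \bsx^*\| > \delta_H$, local linearity no longer applies, so I would instead invoke Assumption \ref{ass: noise}\eqref{ass: noise 1}, which yields
\begin{align*}
    \|\bsg(\bsx_t)\|^2 \leq L_\varepsilon\bigl(1 + \|\bsx_t - \bsx^*\|^2\bigr) \leq L_\varepsilon\bigl(\delta_H^{-2} + 1\bigr) \|\bsx_t - \bsx^*\|^2,
\end{align*}
where the last step absorbs the constant $1$ by noting $1 < \|\bsx_t - \bsx^*\|^2/\delta_H^2$ in this region. Set $C_2 := L_\varepsilon(\delta_H^{-2} + 1)$. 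Taking $C_G := \max\{C_1, C_2\}$ unifies the two cases into $\|\bsg(\bsx_t)\|^2 \leq C_G \|\bsx_t - \bsx^*\|^2$ almost surely for every $t \geq 0$, and taking expectations delivers the claim.

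There is no serious obstacle here; the only subtlety is being careful that Assumption \ref{ass: noise}\eqref{ass: noise 1} is stated for $t \geq 1$ in terms of $\bsx_{t-1}$, but by re-indexing this covers every iterate $\bsx_t$ with $t \geq 0$. The essential insight is simply that the additive $1$ in the noise bound can be absorbed into the quadratic term precisely because we only need that bound outside a neighborhood of $\bsx^*$, while inside the neighborhood the local linearity supplies the desired linear control directly.
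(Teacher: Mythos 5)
Your proof is correct and follows essentially the same two-region case analysis as the paper: local linearity near $\bsx^*$ and the noise growth bound from Assumption~\ref{ass: noise}\eqref{ass: noise 1} far from $\bsx^*$, with $C_G$ as the max of the two constants. The only cosmetic difference is that you factor $\|\bsx_t-\bsx^*\|$ before squaring, whereas the paper squares first and applies $(a+b)^2\le 2a^2+2b^2$, giving a marginally looser constant; both are perfectly valid.
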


For notational simplicity, denote $C_\Gamma := 2\widetilde{C}_0  (C_G + L_G^2) \|\bsH^{-1}\|^2$. \eqref{eq: bias cov bound rate} implies for any vector $\ell\in \R^d$ with $\|\ell\|^2 = 1$, $\var\big\{\ell^\top \bsH^{-1} (\bsvareps_t - \Tilde{\bsvareps}_t)\big\} \leq C_\Gamma \eta_t$. Therefore, for arbitrarily small $\epsilon>0$ we have 
\begin{align*}
    \sum_{t=2}^\infty \var \left\{ \frac{\ell^\top \bsH^{-1} (\bsvareps_t - \Tilde{\bsvareps}_t)}{t^{\frac{1-a}{2}}(\log t)^{\frac{1}{2} + \epsilon}} \right\} \leq \sum_{t=2}^\infty \frac{C_\Gamma \eta_t}{t^{1-a} (\log t)^{1+2\epsilon}} = \sum_{t=2}^\infty \frac{C_\Gamma \eta_0}{t (\log t)^{1+2\epsilon}} < \infty.
\end{align*}
By Theorem 2.5.6 of \cite{durrett2019probability} \footnote{The original theorem is stated for a sum of independent variables; however, it can be easily extended for a martingale sequence, since the main tool used for proof is Kolmogorov's maximal inequality which is valid for martingales.}, with probability one it holds that
\begin{align*}
    \sum_{t=2}^\infty \frac{\ell^\top \bsH^{-1} (\bsvareps_t - \Tilde{\bsvareps}_t)}{t^{\frac{1-a}{2}}(\log t)^{\frac{1}{2} + \epsilon}} < \infty.
\end{align*}
By Kronecker's lemma, we obtain with probability one,
\begin{align}
\label{eq: strong approx proof 1}
    \frac{1}{t^{\frac{1-a}{2}}(\log t)^{\frac{1}{2} + \epsilon}} \sum_{j=1}^t \ell^\top \bsH^{-1} (\bsvareps_j - \Tilde{\bsvareps}_j) \rightarrow 0 \quad \Longrightarrow \quad \bsGamma_t - \widetilde{\bsGamma}_t = o \left(t^{-\frac{1+a}{2}}(\log t)^{\frac{1}{2}+\epsilon} \right).
\end{align}

Next, we approximate $\{\widetilde{\bsGamma}_t\}_{t\geq 0}$ with a Brownian motion using Lemma \ref{lem: asip hilbert}. Taking $\HB = \R^d$ and $\bsX_n = \bsH^{-1} \Tilde{\bsvareps}_n$ in Lemma \ref{lem: asip hilbert}, we immediately have that $\bsA_n = n (\bsH^{-1} \bsS \bsH^{-1})$ and $V_n = n\cdot \tr(\bsH^{-1} \bsS \bsH^{-1})$. Hence, we can let $\bsA = \bsH^{-1} \bsS \bsH^{-1} / \tr(\bsH^{-1} \bsS \bsH^{-1})$ so that \eqref{eq: asip hirbert 2} is automatically satisfied with $q=1$. Choose $f(t) = t^{\frac{1}{p}}(\log t)^{\frac{1}{p}+\epsilon}$ for an arbitrarily small $\epsilon>0$ so the requirements for $f$ are met. In addition, by Assumption \ref{ass: noise} \ref{ass: noise 3}, 
\begin{align*}
    &\quad \ \sum_{n=1}^\infty \E\left[ \frac{\|\bsH^{-1} \Tilde{\bsvareps}_n\|^2 \indicator\{\|\bsH^{-1} \Tilde{\bsvareps}_n\|^2 > f(n\cdot \tr(\bsH^{-1} \bsS \bsH^{-1}))\}}{f(n\cdot \tr(\bsH^{-1} \bsS \bsH^{-1}))} \right] \\
    &\leq \sum_{n=1}^\infty \E\left[ \frac{\|\bsH^{-1} \Tilde{\bsvareps}_n\|^{2p} \indicator\{\|\bsH^{-1} \Tilde{\bsvareps}_n\|^2 > f(n\cdot \tr(\bsH^{-1} \bsS \bsH^{-1}))\}}{f(n\cdot \tr(\bsH^{-1} \bsS \bsH^{-1}))^p} \right] \\
    &\leq \sum_{n=1}^\infty \frac{ \E\|\bsH^{-1} \Tilde{\bsvareps}_n\|^{2p}}{f(n\cdot \tr(\bsH^{-1} \bsS \bsH^{-1}))^p} \lesssim \sum_{n=1}^\infty \frac{1}{n (\log n)^{1 + p\epsilon}} < \infty,
\end{align*}
which verifies \eqref{eq: asip hirbert 1}. Therefore, there exists a $d$-dimensional Brownian motion $\bsB = (\bsB_t)_{t\geq 0}$ in the same (potentially enriched) probability space with $\bsB_t - \bsB_s \sim \gN(\0, (t-s)\bsA)$ such that for arbitrarily small $\epsilon>0$,
\begin{align}
\label{eq: after apply asip}
    \sum_{i=1}^n \bsH^{-1} \Tilde{\bsvareps}_i - \bsB(n \cdot \tr(\bsH^{-1} \bsS \bsH^{-1})) = O\big( n^{\frac{1}{2} - \frac{p-1}{50dp}} (\log n)^{\frac{1}{50dp} + \epsilon} \big) \quad \text{a.s.}
\end{align}
Letting $\bsG_n = \bsB(n \cdot \tr(\bsH^{-1} \bsS \bsH^{-1})) - \bsB((n-1) \cdot \tr(\bsH^{-1} \bsS \bsH^{-1}))$, we have that $\bsG_n$'s are i.i.d. centered Gaussian with covariance $\bsH^{-1} \bsS \bsH^{-1}$. Since $\epsilon>0$ is chosen arbitrarily, \eqref{eq: after apply asip} implies
\begin{align}
\label{eq: strong approx proof 2}
    \widetilde{\bsGamma}_t - \frac{1}{t}\sum_{j=1}^t \bsG_j = o\big( t^{-\frac{1}{2} - \frac{p-1}{50dp}} (\log t)^{\frac{1}{50dp} + \epsilon} \big) \quad \text{a.s.}
\end{align}
Combining \eqref{eq: strong approx proof 1} and \eqref{eq: strong approx proof 2} concludes the proof. Note that when $d=1$, with the same choice of $f(t) = t^{\frac{1}{p}}(\log t)^{\frac{1}{p}+\epsilon}$, we can apply Lemma \ref{lem: asip} to obtain a slightly better convergence rate $o\big( t^{-\frac{1}{2} - \frac{p-1}{4p}} (\log t)^{1 + \frac{1}{4p} + \epsilon} \big)$.

\subsection{Proof of Lemma \ref{lem: as T2}}
\label{sec: lem: as T2}
This proof mainly follows Part 4 in the proof of \cite{polyak1992acceleration}'s Theorem 2.
To proceed, we first need to establish the almost-sure and $L^2$ convergence of $\bsx_t$ as follows.

\begin{lemma}[Almost-sure and $L^2$ convergence]
\label{lem: as l2 conv}
    Under Assumptions \ref{ass: lyapunov}, \ref{ass: step size} and \ref{ass: noise}, $\bsx_t \rightarrow \bsx^*$ almost surely. In addition, there exists a universal constant $\widetilde{C}_0>0$ such that for any $t\geq 0$, $\E\|\bsx_t -\bsx^*\|^2 \leq \widetilde{C}_0 \eta_t$.
\end{lemma}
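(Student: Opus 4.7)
The plan is to set up a single Lyapunov drift inequality for $V(\bsx_t-\bsx^*)$ and then harvest both the almost-sure convergence and the $L^2$ rate from it. Starting from \eqref{eq: sa} and applying the $L_V$-Lipschitzness of $\nabla V$ (Assumption \ref{ass: lyapunov}~\eqref{ass: lyapunov 3}) in a one-step Taylor expansion of $V$ around $\bsx_t-\bsx^*$, one obtains
\[
V(\bsx_{t+1}-\bsx^*) \;\leq\; V(\bsx_t-\bsx^*) - \eta_t\,\nabla V(\bsx_t-\bsx^*)^\top \bsG(\bsx_t,\xi_{t+1}) + \tfrac{L_V}{2}\eta_t^2 \|\bsG(\bsx_t,\xi_{t+1})\|^2.
\]
Taking $\E[\cdot\mid\gF_t]$, using $\E[\bsG(\bsx_t,\xi_{t+1})\mid\gF_t]=\bsg(\bsx_t)$, the dissipativity in Assumption \ref{ass: lyapunov}~\eqref{ass: lyapunov 5}, and the second-moment bound in Assumption \ref{ass: noise}~\eqref{ass: noise 1} (combined with Assumption \ref{ass: lyapunov}~\eqref{ass: lyapunov 2} to pass from $\|\bsx_t-\bsx^*\|^2$ to $V(\bsx_t-\bsx^*)$), I would reach a drift inequality of the form
\[
\E[V(\bsx_{t+1}-\bsx^*)\mid\gF_t] \;\leq\; (1-\lambda_V\eta_t + C_1\eta_t^2)\,V(\bsx_t-\bsx^*) + C_2\eta_t^2 \qquad \text{a.s.}\tag{$\star$}
\]
valid whenever $\|\bsx_t-\bsx^*\|\leq \delta_V$.

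For the almost-sure claim, $(\star)$ exhibits $V(\bsx_t-\bsx^*)$ as a nonnegative almost supermartingale: Robbins--Siegmund's theorem gives that $V(\bsx_t-\bsx^*)$ converges a.s. to some finite limit and $\sum_t \eta_t V(\bsx_t-\bsx^*)<\infty$. Since $\sum_t \eta_t=\infty$ by Assumption \ref{ass: step size}, the limit must be $0$, and then Assumption \ref{ass: lyapunov}~\eqref{ass: lyapunov 2} forces $\bsx_t\to\bsx^*$ a.s.

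For the $L^2$ bound, let $v_t:=\E V(\bsx_t-\bsx^*)$; taking expectations in $(\star)$ yields the deterministic recursion $v_{t+1}\leq (1-\tfrac{\lambda_V}{2}\eta_t)v_t + C_2\eta_t^2$ for all $t$ large enough (absorbing $C_1\eta_t^2$ into the contraction). I would then prove $v_t\leq K\eta_t$ by induction on $t$: the inductive step reduces to checking
\[
K(\eta_t-\eta_{t+1}) - \tfrac{\lambda_V K}{2}\eta_t^2 + C_2 \eta_t^2 \;\leq\; 0,
\]
which holds for sufficiently large $K$ because the mean value theorem gives $\eta_t-\eta_{t+1}\leq a\eta_0 t^{-a-1}$, and since $a<1$ (Assumption \ref{ass: step size}) this is of order $o(\eta_t^2)$; the remaining $(C_2-\lambda_V K/2)\eta_t^2$ term is made nonpositive by choosing $K\geq 2C_2/\lambda_V$. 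Once $v_t\leq K\eta_t$, Assumption \ref{ass: lyapunov}~\eqref{ass: lyapunov 2} gives $\E\|\bsx_t-\bsx^*\|^2\leq v_t/\mu_V\leq \widetilde C_0 \eta_t$, after possibly enlarging $\widetilde C_0$ to absorb finitely many small $t$.

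The main obstacle is the \emph{local} nature of Assumption \ref{ass: lyapunov}: conditions \eqref{ass: lyapunov 2}, \eqref{ass: lyapunov 4}, \eqref{ass: lyapunov 5} only hold in the ball of radius $\delta_V$ around $\bsx^*$, so $(\star)$ a priori only holds on the event $\{\|\bsx_t-\bsx^*\|\leq\delta_V\}$. I would handle this by a standard localization/stopping-time argument: define $\tau_K:=\inf\{t:\|\bsx_t-\bsx^*\|>K\}$ for $K<\delta_V$, apply $(\star)$ and the Robbins--Siegmund argument to the stopped process $\bsx_{t\wedge\tau_K}$ to get almost-sure boundedness of $V$ along the stopped sequence, then send $K\uparrow\delta_V$ and invoke the noise bound in Assumption \ref{ass: noise}~\eqref{ass: noise 1} to show $\PB(\tau_K=\infty)\to 1$. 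Once $\bsx_t\to\bsx^*$ a.s.\ is known, the inequality $(\star)$ holds for all $t$ beyond a (random) time, and the induction for the $L^2$ rate is absorbed into the constant $\widetilde C_0$ via the uniform bound $\E V(\bsx_t-\bsx^*)\leq L_\varepsilon(1+\E\|\bsx_t-\bsx^*\|^2)$ for small $t$.
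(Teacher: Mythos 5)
Your proof follows the same overall strategy as the paper: establish the one-step Lyapunov drift inequality (your $(\star)$ is the paper's \eqref{eq: V iterate}), apply the Robbins--Siegmund theorem plus $\sum_t \eta_t = \infty$ to conclude $V(\bsx_t-\bsx^*)\to 0$ a.s., and then extract the $L^2$ rate from the expectation of the same recursion. The only substantive difference in the $L^2$ step is cosmetic: you prove $\E V(\bsx_t-\bsx^*)\leq K\eta_t$ by a direct induction, using $\eta_t-\eta_{t+1} = O(t^{-a-1}) = o(\eta_t^2)$ because $a<1$, whereas the paper divides through by $\eta_{t-1}$ and invokes Lemma~\ref{lem: su and zhu} (Lemma~20 of \cite{su2023higrad}) to bound $\sup_t \E V_t/\eta_{t-1}$. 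Those two routes are the same argument packaged differently; your induction check is correct.

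Where I would push back is on the localization fix. You correctly observe that Assumption~\ref{ass: lyapunov}\eqref{ass: lyapunov 2} and \eqref{ass: lyapunov 5} are only stated on the ball $\{\|\bsx-\bsx^*\|\leq\delta_V\}$, which the paper's own proof silently ignores. But the stopping-time remedy you sketch does not actually close the gap: stopping at $\tau_K=\inf\{t:\|\bsx_t-\bsx^*\|>K\}$ and applying Robbins--Siegmund to $\bsx_{t\wedge\tau_K}$ only controls the trajectory \emph{up to exit}; sending $K\uparrow\delta_V$ gives at best $\PB(\tau_K=\infty)\to \PB(\tau_{\delta_V}=\infty)$, and nothing in Assumption~\ref{ass: noise}\eqref{ass: noise 1} forces this limit to be one. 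If $\delta_V$ were genuinely finite, the noise can eject the iterate from the $\delta_V$-ball with positive probability for any fixed $\bsx_0$, and once outside the ball the drift condition \eqref{ass: lyapunov 5} is unavailable; purely local conditions cannot deliver unconditional global a.s.\ convergence. The intended reading---consistent with Assumption~3.1 of \cite{polyak1992acceleration}, which the paper explicitly mirrors, and with the paper's own use of \eqref{eq: V iterate} at every step---is that the drift and quadratic-minorant conditions hold along the entire trajectory (effectively $\delta_V$ large or the conditions global), in which case the localization machinery is unnecessary. So your concern identifies a genuine imprecision in how Assumption~\ref{ass: lyapunov} is stated, but the proposed patch is itself incomplete, and the cleanest resolution is to read the drift conditions as global.
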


Recall that $\bsr_t = \bsH(\bsx_t - \bsx^*) - \bsg(\bsx_t)$. According to Assumptions \ref{ass: local linear} and \ref{ass: noise} \ref{ass: noise 1}, we have
\begin{align*}
    \|\bsr_t \| &= \|\bsH(\bsx_t - \bsx^*) - \bsg(\bsx_t)\| \\
    &\leq \left\{ \begin{aligned}
        & L_H \|\bsx_t - \bsx^*\|^{1+\lambda} &&\text{if } \|\bsx_t - \bsx^*\| \leq \delta_H \\
        & \|\bsH\| \|\bsx_t - \bsx^*\| + \sqrt{L_\varepsilon (1 + \|\bsx_{t} - \bsx^*\|^2)} &&\text{if } \|\bsx_t - \bsx^*\| > \delta_H
    \end{aligned} \right. \\
    &\leq \max\left\{ L_H, \frac{\|\bsH\| + \sqrt{L_\varepsilon \left( 1 + \frac{1}{\delta_H^2} \right)}}{\delta_H^\lambda} \right\} \|\bsx_t - \bsx^*\|^{1+\lambda} \\
    &=: L_R \|\bsx_t - \bsx^*\|^{1+\lambda}.
\end{align*}
Let $\epsilon>0$ be an arbitrarily small constant. Using the $L^2$ convergence result in Lemma \ref{lem: as l2 conv}, we have
\begin{align*}
    \sum_{t=2}^\infty \frac{\E\|\bsr_t\|}{t^{1-\frac{a(1+\lambda)}{2}}(\log t)^{1+\epsilon}} &\leq L_R \sum_{t=2}^\infty \frac{\E\|\bsx_t - \bsx^*\|^{1+\lambda}}{t^{1-\frac{a(1+\lambda)}{2}}(\log t)^{1+\epsilon}} \leq L_R \sum_{t=2}^\infty \frac{\left(\E\|\bsx_t - \bsx^*\|^{2}\right)^{\frac{1+\lambda}{2}}}{t^{1-\frac{a(1+\lambda)}{2}}(\log t)^{1+\epsilon}} \\
    &\lesssim \sum_{t=2}^{\infty} t^{-1}(\log t)^{-1-\epsilon} < \infty,
\end{align*}
and thus with probability one,
\begin{align*}
    \sum_{t=2}^\infty \frac{\|\bsr_t\|}{t^{1-\frac{a(1+\lambda)}{2}}(\log t)^{1+\epsilon}} < \infty.
\end{align*}
Hence, by Kronecker's lemma,
\begin{align*}
    \frac{1}{t^{1-\frac{a(1+\lambda)}{2}}(\log t)^{1+\epsilon}}\sum_{j=0}^{t-1} \|\bsr_j\| \rightarrow 0 \quad \text{a.s.} \quad \Longleftrightarrow \quad \sum_{j=0}^{t-1} \|\bsr_j\| = o\left(t^{1-\frac{a(1+\lambda)}{2}}(\log t)^{1+\epsilon}\right) \quad \text{a.s.}
\end{align*}
The convergence rate can be slightly improved to $\sum_{j=0}^{t-1} \|\bsr_j\| = o\left(t^{1-\frac{a(1+\lambda)}{2}}(\log t)(\log\log t)^{1+\epsilon}\right)$ almost surely.

\subsection{Proof of Lemma \ref{lem: bound g L2}}

When $\|\bsx_t  - \bsx^*\| \leq \delta_H$, by Assumption \ref{ass: local linear} we have
\begin{align*}
    \|\bsg(\bsx_t)\|^2 &\leq \left( \|\bsH\| \|\bsx_t - \bsx^*\| + L_H \|\bsx_t - \bsx^*\|^{1+\lambda} \right)^2 \\
    &\leq 2\|\bsH\|^2 \|\bsx_t - \bsx^*\|^2 + 2L_H^2 \|\bsx_t - \bsx^*\|^{2(1+\lambda)} \\
    &\leq (2\|\bsH\|^2 + 2L_H^2 \delta_H^{2\lambda}) \|\bsx_t - \bsx^*\|^2.
\end{align*}
When $\|\bsx_t  - \bsx^*\| > \delta_H$, by Assumption \ref{ass: noise} \ref{ass: noise 1} we have
\begin{align*}
    \|\bsg(\bsx_t)\|^2 &\leq L_\varepsilon (1 + \|\bsx_t - \bsx^*\|^2) \leq L_\varepsilon \left(1 + \frac{1}{\delta_H^2} \right) \|\bsx_t - \bsx^*\|^2.
\end{align*}
Combining the above two cases, if we let $C_G:= \max\left\{ 2\|\bsH\|^2 + 2L_H^2 \delta_H^{2\lambda}, L_\varepsilon \left(1 + \frac{1}{\delta_H^2} \right) \right\}$, then we get $\|\bsg(\bsx_t)\|^2 \leq C_G \|\bsx_t - \bsx^*\|^2$ and taking expectation concludes the proof.

\subsection{Proof of Lemma \ref{lem: as l2 conv}}

In this proof, we establish the almost-sure and $L^2$ convergence rates of $\bsx_t$.

\paragraph{Almost-sure convergence.} Recall that $\bsDelta_t = \bsx_t - \bsx^*$. Due to Assumption \ref{ass: lyapunov} \ref{ass: lyapunov 3}, the increment of the function $V_t := V(\bsDelta_t)$ on one step of \eqref{eq: sa} is given by
\begin{align*}
    V_t \leq V_{t-1} - \eta_{t-1}  \nabla V_{t-1}^\top \bsG(\bsx_{t-1}, \xi_t) + \frac{L_V}{2} \eta_{t-1}^2 \|\bsG(\bsx_{t-1}, \xi_t)\|^2.
\end{align*}
Taking conditional expectation with respect to $\gF_{t-1}$, and leveraging Assumption \ref{ass: noise} \ref{ass: noise 1} and Assumption \ref{ass: lyapunov} \ref{ass: lyapunov 2}\ref{ass: lyapunov 5}, we have
\begin{align}
    \E(V_t \mid \gF_{t-1}) &\leq V_{t-1} - \eta_{t-1}  \nabla V_{t-1}^\top \bsg(\bsx_{t-1}) + \frac{L_V}{2}\eta_{t-1}^2 \cdot 2\left( \E(\|\bsvareps_t\|^2\mid \gF_{t-1}) + \|\bsg(\bsx_{t-1})\|^2 \right) \notag \\
    &\leq V_{t-1} + \eta_{t-1}^2 L_V L_\varepsilon (1 + \|\bsDelta_{t-1}\|^2) - \eta_{t-1} \lambda_V V_{t-1} \notag \\
    &\leq V_{t-1}\left(1 + \frac{\eta_{t-1}^2 L_V L_\varepsilon}{\mu_V} \right) + \eta_{t-1}^2 L_V L_\varepsilon - \eta_{t-1} \lambda_V V_{t-1}. \label{eq: V iterate}
\end{align}
Using Lemma \ref{lem: robbins siegmund}, the Robbins-Siegmund theorem \citep{robbins1971convergence}, $V_t$ converges to a nonnegative random variable $V_\infty$ with probability one, and
\begin{align}
\label{eq: robbins result}
    \lambda_V \sum_{t=0}^\infty \eta_t V_t <\infty.
\end{align}
If $\PB(V_\infty > 0) > 0$, then the left-hand side of \eqref{eq: robbins result} would be infinite with positive probability due to the fact that $\sum_{t=0}^\infty \eta_t = \infty$ under Assumption \ref{ass: step size}, which leads to a contradiction. Therefore, we must have $V_\infty = 0$ almost surely. The almost-sure convergence of $\bsx_t$ follows from
\begin{align*}
    \|\bsx_t - \bsx^*\|^2 = \|\bsDelta_t\|^2 \leq \frac{V_t}{\mu_V} \rightarrow 0 \quad \text{a.s.}
\end{align*}

\paragraph{$L^2$ convergence.} This part follows the similar argument in \cite{su2023higrad,li2022statistical}. Taking expectation on both sides of \eqref{eq: V iterate}, 
\begin{align*}
    \frac{\E V_t}{\eta_{t-1}} \leq \frac{\eta_{t-2} \left( 1 - \eta_{t-1}\lambda_V + \frac{\eta_{t-1}^2 L_V L_\varepsilon}{\mu_V} \right)}{\eta_{t-1}} \frac{\E V_{t-1}}{\eta_{t-2}} + \eta_{t-1} L_V L_\varepsilon.
\end{align*}
Because $\eta_t\rightarrow 0$, for sufficiently large $t$, we have $\eta_{t-1} \leq \frac{\lambda_V \mu_V}{2L_VL_\varepsilon}$, and hence
\begin{align*}
    \frac{\E V_t}{\eta_{t-1}} \leq \frac{\eta_{t-2} \left( 1 - \frac{\eta_{t-1}\lambda_V}{2} \right)}{\eta_{t-1}} \frac{\E V_{t-1}}{\eta_{t-2}} + \eta_{t-1} L_V L_\varepsilon.
\end{align*}
By Lemma \ref{lem: su and zhu}, there exists some $C>0$ such that
\begin{align*}
    \sup_{t\geq 1} \frac{\E V_t}{\eta_{t-1}} < C,
\end{align*}
which immediately implies that
\begin{align*}
    \E\|\bsx_t - \bsx^*\|^2 \leq \frac{\E V_t}{\mu_V} \leq \frac{C}{\mu_V} \eta_{t-1} = \frac{C}{\mu_V} (1 + o(\eta_t)) \eta_{t} \leq \widetilde{C}_0 \eta_t.
\end{align*}

\subsection{Other Lemmas}

\begin{lemma}[Lemma 1 of \cite{polyak1992acceleration}]
\label{lem: bounded matrix}
    Let $\bsB_t := \bsI - \eta_t \bsH$ where $\bsH$ is the matrix defined in Assumption \ref{ass: local linear}, and let $\bsA_j^t := \sum_{s=j}^t \left( \prod_{i=j+1}^s \bsB_i \right) \eta_j$ for any $j\leq t$. Then under Assumption \ref{ass: step size}, there exists a universal constant $C_0>0$ such that for any $j\leq t$, $\|\bsA_j^n\| \leq C_0$. Furthermore, it holds that $\lim_{t\rightarrow \infty} \frac{1}{t} \sum_{j=0}^{t-1} \|\bsA_j^t - \bsH^{-1}\| = 0$.
\end{lemma}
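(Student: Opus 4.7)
The plan is to exploit the spectral condition $\Real \lambda_i(\bsH)>0$ from Assumption \ref{ass: local linear} to obtain geometric decay of the partial products $P_j^s := \prod_{i=j+1}^s \bsB_i$ in a carefully chosen norm, and then estimate $\bsA_j^t$ by a discrete Riemann-sum argument. First, I would equip $\R^d$ with an inner product $\langle \cdot, \cdot \rangle_*$ making $\bsH$ coercive in the sense $\Real \langle \bsH \bsy, \bsy \rangle_* \geq c\|\bsy\|_*^2$ for some $c>0$; such a metric exists (e.g., induced by $\int_0^\infty e^{-u\bsH^\top} e^{-u\bsH} du$) precisely because every eigenvalue of $\bsH$ has positive real part. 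In the associated operator norm $\|\cdot\|_*$ one has $\|\bsB_i\|_*^2 = \|\bsI - \eta_i \bsH\|_*^2 \leq 1 - 2c\eta_i + O(\eta_i^2)$, hence for all $i$ with $\eta_i$ small enough (i.e.\ $i \geq i_0$) one gets $\|\bsB_i\|_* \leq 1 - c' \eta_i$, and therefore $\|P_j^s\|_* \leq C_1 \exp\!\big(-c' \sum_{i=j+1}^s \eta_i\big)$ for $j \geq i_0$. The finitely many early terms are bounded by a constant by direct inspection.

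Second, for the uniform bound $\|\bsA_j^t\| \leq C_0$, combine this decay with the step size $\eta_i = \eta_0 i^{-a}$:
$$\|\bsA_j^t\|_* \;\leq\; \eta_j \sum_{s=j}^t C_1 \exp\!\Big(-c' \sum_{i=j+1}^s \eta_i\Big).$$
Since $\sum_{i=j+1}^s \eta_i$ grows with $s$ and $a<1$, the effective width of the sum is $O(\eta_j^{-1}) = O(j^a)$; on that window $\eta_i$ is comparable to $\eta_j$, so comparing with $\int_0^\infty \exp(-c'\eta_j u)\,du = (c'\eta_j)^{-1}$ one obtains a bound independent of $j$ and $t$. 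Passing back to the Euclidean norm via equivalence of norms yields $\|\bsA_j^t\| \leq C_0$.

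Third, for $\frac{1}{t}\sum_{j=0}^{t-1}\|\bsA_j^t - \bsH^{-1}\| \to 0$, I would use the identity $\eta_{s+1}\bsH\, P_j^s = P_j^s - P_j^{s+1}$, which after telescoping gives $\bsH \sum_{s=j}^{t-1} \eta_{s+1} P_j^s = \bsI - P_j^t$, hence
$$\bsA_j^t - \bsH^{-1} \;=\; \eta_j P_j^t \;+\; \sum_{s=j}^{t-1} (\eta_j - \eta_{s+1}) P_j^s \;-\; \bsH^{-1} P_j^t.$$
The first and third terms decay like $\exp(-c'\sum_{i=j+1}^t \eta_i)$, which goes to $0$ as $t-j\to \infty$; their Cesàro average in $j$ therefore vanishes because the ``bad'' indices $j$ with $t-j = o(t^a)$ form an $o(t)$ set. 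For the middle term, use a mean-value estimate $|\eta_j - \eta_{s+1}| \lesssim a \eta_j (s-j)/j$ on the effective window $s-j = O(j^a)$, together with geometric decay of $P_j^s$, to conclude the middle term is $O(\eta_j/j^{1-a}) = O(j^{-1})$ uniformly in $t$; averaging in $j$ then gives $O((\log t)/t) = o(1)$.

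The main obstacle is the third step, specifically handling the middle error term $\sum_{s=j}^{t-1}(\eta_j - \eta_{s+1}) P_j^s$: one needs a quantitative estimate on the ``local variation'' of the step size on the exponential decay window, and to verify that the induced bound is summable in $j/t$ on average. Once this is in hand, the two claims fall out; the remaining ingredients (coercive norm, geometric decay, Riemann comparison) are standard once the spectral assumption on $\bsH$ is invoked.
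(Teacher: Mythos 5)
The manuscript does not prove this lemma---it is cited directly as Lemma~1 of \cite{polyak1992acceleration}---so there is no in-paper argument to compare against; your proposal is a reconstruction of the Polyak--Juditsky proof. The route you take is the standard one and is sound: a Lyapunov inner product making $\bsH$ coercive, giving geometric decay $\|P_j^s\|\leq C_1\exp\big(-c'\sum_{i=j+1}^s\eta_i\big)$; a Riemann-sum comparison against $\int_0^\infty e^{-c'\eta_j u}\,du$ for the uniform bound on $\bsA_j^t$; and the telescoping identity $\bsH\sum_{s=j}^{t-1}\eta_{s+1}P_j^s=\bsI-P_j^t$ together with a Ces\`aro argument for the limit.

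One quantitative slip: in step three you claim the middle error $\sum_{s=j}^{t-1}(\eta_j-\eta_{s+1})P_j^s$ is $O(\eta_j/j^{1-a})=O(j^{-1})$. Carrying your own mean-value estimate $|\eta_j-\eta_{s+1}|\lesssim\eta_j(s-j)/j$ through the exponential-decay window actually gives
\[
\Big\|\sum_{s=j}^{t-1}(\eta_j-\eta_{s+1})P_j^s\Big\|\;\lesssim\;\frac{\eta_j}{j}\sum_{s\geq j}(s+1-j)\,e^{-c'\sum_{i=j+1}^s\eta_i}\;\approx\;\frac{\eta_j}{j}\cdot\frac{1}{(c'\eta_j)^2}\;=\;O\!\left(j^{a-1}\right),
\]
not $O(j^{-1})$; equivalently, the maximum increment $O(\eta_j j^{a-1})$ on the window times the $O(j^a)$ window length gives the same $O(j^{a-1})$. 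Since $a<1$ this still vanishes, the bad set of indices $j$ where $P_j^t$ is not yet small still has size $O(t^a)=o(t)$, and the Ces\`aro average is $O(t^{a-1})\to 0$, so the lemma's conclusion is unaffected. The corrected rate $O(j^{a-1})$ is also the one consistent with Lemma~\ref{lem: convergence matrix} (which gives $\sum_{j<t}\|\bsA_j^t-\bsH^{-1}\|=O(t^a)$); your $O(j^{-1})$ would spuriously suggest a logarithmic total.
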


\begin{lemma}[Lemma 2 of \cite{zhu2021constructing}]
\label{lem: convergence matrix}
With the same notation as in Lemma \ref{lem: bounded matrix}, under Assumption \ref{ass: step size}, it holds that $\sum_{j=0}^{t-1} \|\bsA_j^t - \bsH^{-1}\| = O(t^a)$ where $a$ is defined in Assumption \ref{ass: step size}.
\end{lemma}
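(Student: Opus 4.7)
The plan is to derive a closed-form Abel-summation identity for $\bsA_j^t$, split $\bsA_j^t - \bsH^{-1}$ into three explicit pieces, and bound the contribution of each to $\sum_{j=0}^{t-1}\|\bsA_j^t - \bsH^{-1}\|$ separately. Throughout, abbreviate $\bsP_{j,s}:=\prod_{i=j+1}^s \bsB_i$ (so $\bsP_{j,j}=\bsI$) and observe that each $\bsP_{j,s}$ is a polynomial in $\bsH$ and therefore commutes with $\bsH^{-1}$. From $\bsP_{j,s+1}=\bsP_{j,s}(\bsI-\eta_{s+1}\bsH)$ one has $\bsP_{j,s}\bsH=(\bsP_{j,s}-\bsP_{j,s+1})/\eta_{s+1}$, and summation by parts applied to $\eta_j\sum_{s=j}^t\bsP_{j,s}\bsH$ yields
\begin{equation*}
\bsA_j^t\bsH \;=\; \frac{\eta_j}{\eta_{j+1}}\bsI \;-\; \frac{\eta_j}{\eta_{t+1}}\bsP_{j,t+1} \;+\; \eta_j\sum_{s=j+1}^t \bsP_{j,s}\!\left(\frac{1}{\eta_{s+1}}-\frac{1}{\eta_s}\right).
\end{equation*}
Subtracting $\bsI$ and multiplying on the left by $\bsH^{-1}$ gives a three-term decomposition of $\bsA_j^t-\bsH^{-1}$ that I will control piece by piece.

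The central analytic input is a geometric decay bound $\|\bsP_{j,s}\|\le K\exp(-c\sum_{i=j+1}^s\eta_i)$ for constants $K,c>0$. Since Assumption \ref{ass: local linear} gives $\Real\,\lambda_i(\bsH)>0$, a Lyapunov inner product (the one whose Gram matrix $\bsQ$ satisfies $\bsH^\top\bsQ+\bsQ\bsH\succ 0$) induces a norm in which $\|\bsI-\eta_i\bsH\|\le 1-c\eta_i$ for all sufficiently large $i$; taking the product over $i\in\{j{+}1,\dots,s\}$ and returning to the Euclidean norm via norm equivalence gives the claimed bound. Combined with $\eta_i=\eta_0 i^{-a}$ and the integral test, $\sum_{i=j+1}^s\eta_i\ge c_0(s^{1-a}-j^{1-a})/(1-a)$, so effectively $\|\bsP_{j,s}\|\lesssim \exp(-c_1(s^{1-a}-j^{1-a}))$.

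It now remains to sum the three pieces. The first piece contributes $\|\bsH^{-1}\|\cdot|\eta_j/\eta_{j+1}-1|=O(1/j)$, summing to $O(\log t)=O(t^a)$. For the third piece, since $1/\eta_{s+1}-1/\eta_s\asymp s^{a-1}/\eta_0$, swapping the order of summation gives
\begin{equation*}
\sum_{j=0}^{t-1}\eta_j\!\!\sum_{s=j+1}^t\!\|\bsP_{j,s}\|\!\left(\frac{1}{\eta_{s+1}}-\frac{1}{\eta_s}\right) \;\lesssim\; \sum_{s=1}^t s^{a-1}\!\int_0^{s}\! x^{-a}\exp\!\bigl(-c_1(s^{1-a}-x^{1-a})\bigr)dx,
\end{equation*}
and the substitution $y=(s^{1-a}-x^{1-a})/(1-a)$ converts the inner integral into $\int_0^{\cdot}\exp(-c'y)\,dy=O(1)$, leaving $\sum_{s=1}^t s^{a-1}=O(t^a)$. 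For the second piece, $\|(\eta_j/\eta_{t+1})\bsP_{j,t+1}\|\lesssim (t/j)^a\exp(-c_1(t^{1-a}-j^{1-a}))$; the contribution from $j\le t/2$ is exponentially small, while for $j>t/2$ one substitutes $j=t-u$, uses $(t-u)^{1-a}\approx t^{1-a}-(1-a)t^{-a}u$, and sums the resulting near-geometric series to obtain $O(1/\eta_t)=O(t^a)$. Adding the three contributions yields $\sum_{j=0}^{t-1}\|\bsA_j^t-\bsH^{-1}\|=O(t^a)$.

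The main obstacle is the bookkeeping in this last step: one must carefully match the polynomial factors $(t/j)^a$ or $s^{a-1}$ against the decay of $\|\bsP_{j,s}\|$, which is fastest in the ``bulk'' regime ($s\gg j$) but trivial on the ``diagonal'' ($s\approx j$). Once the change of variable $y=(s^{1-a}-x^{1-a})/(1-a)$ is identified, the diagonal contribution (responsible for the sharp $O(t^a)$ rate) and the off-diagonal tail (exponentially small) both fall out in parallel, and the claimed bound follows.
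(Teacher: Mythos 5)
The paper does not prove this lemma: it is imported verbatim as Lemma~2 of \cite{zhu2021constructing}, and no internal argument is given, so there is nothing to compare against line-by-line. Your proposal is a correct, self-contained proof. The three ingredients all check out: (i) the summation-by-parts identity
$\bsA_j^t\bsH = (\eta_j/\eta_{j+1})\bsI - (\eta_j/\eta_{t+1})\bsP_{j,t+1} + \eta_j\sum_{s=j+1}^t \bsP_{j,s}\bigl(1/\eta_{s+1}-1/\eta_s\bigr)$ (with your $\bsP_{j,s}:=\prod_{i=j+1}^s\bsB_i$), combined with the fact that $\bsA_j^t$ is a polynomial in $\bsH$ and hence commutes with $\bsH^{-1}$; (ii) the decay bound $\|\bsP_{j,s}\|\lesssim\exp\bigl(-c_1(s^{1-a}-j^{1-a})\bigr)$, which follows from a Lyapunov inner product for the Hurwitz matrix $\bsH$ and norm equivalence; and (iii) the piece-by-piece sums. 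Concretely, piece~1 sums to $O(\log t)=O(t^a)$; piece~3, after swapping the order of summation and substituting $y=(s^{1-a}-x^{1-a})/(1-a)$, collapses to $\sum_{s\le t}s^{a-1}\cdot O(1)=O(t^a)$; and piece~2 is super-polynomially small for $j\le t/2$, while for $j>t/2$ the concavity estimate $t^{1-a}-j^{1-a}\ge(1-a)t^{-a}(t-j)$ turns it into a near-geometric series with ratio $\exp\bigl(-c_1(1-a)t^{-a}\bigr)$ summing to $O(t^a)$. The only minor thing to flag is the $j=0$ boundary: the finitely many small-$j$ terms are individually bounded by Lemma~\ref{lem: bounded matrix} and contribute $O(1)$, so the convention for $\eta_0$ does not affect the rate. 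To my knowledge your argument is the same in spirit as the one in the cited reference (exponential decay of the matrix products plus an Abel/integral comparison), so this is a valid reconstruction rather than a new approach.
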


\begin{lemma}[Lemma 20 of \cite{su2023higrad}]
    \label{lem: su and zhu}
    Let $c_1, c_2 > 0$ be arbitrary positive constants. Under Assumption \ref{ass: step size}, if $B_t>0$ satisfies for all $t\geq 1$,
    \begin{align*}
        B_t \leq \frac{\eta_{t-1}(1-c_1\eta_t)}{\eta_t} B_{t-1} + c_2 \eta_t,
    \end{align*}
    then $\sup_{t} B_t < \infty$.
\end{lemma}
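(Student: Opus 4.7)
The plan is to first reduce the given recursion to the standard contractive form $B_t \leq (1-c_3\eta_t) B_{t-1} + c_2\eta_t$ for all sufficiently large $t$, and then run a two-case argument showing that $B_t$ is trapped below $\max(B_{t-1}, 2c_2/c_1)$.

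As the first step I would Taylor-expand the coefficient $\alpha_t := \eta_{t-1}(1-c_1\eta_t)/\eta_t$. Using $\eta_{t-1}/\eta_t = (t/(t-1))^a = 1 + a/t + O(t^{-2})$ yields
\[ \alpha_t = 1 - c_1\eta_t + \frac{a}{t} + O\bigl(t^{-(1+a)}\bigr) + O\bigl(t^{-2}\bigr). \]
Under Assumption \ref{ass: step size} we have $a < 1$, so $1/t = o(\eta_t)$ and the entire remainder is $o(\eta_t)$. Consequently $\alpha_t = 1 - c_1\eta_t + o(\eta_t)$, and there exists $T_0$ such that $\alpha_t \leq 1 - (c_1/2)\eta_t$ and $\eta_t \leq 2/c_1$ simultaneously hold for all $t \geq T_0$.

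For the second step I would perform a two-case induction on $t \geq T_0$ with hypothesis $B_t \leq \max(B_{T_0-1}, M)$, where $M := 2c_2/c_1$. In the case $B_{t-1} \leq M$, using $1 - (c_1/2)\eta_t \in [0,1]$ and positivity of $B_{t-1}$,
\[ B_t \leq \bigl(1 - (c_1/2)\eta_t\bigr) M + c_2\eta_t = M - c_2\eta_t + c_2\eta_t = M. \]
In the case $B_{t-1} > M$ we have $(c_1/2)B_{t-1} > c_2$, so
\[ B_t \leq B_{t-1} - (c_1/2)\eta_t B_{t-1} + c_2\eta_t < B_{t-1}. \]
Either way the induction hypothesis is preserved, and combining with the (finite) initial segment gives $\sup_t B_t \leq \max\{B_0, \ldots, B_{T_0-1}, M\} < \infty$.

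The only genuine subtlety lies in the first step: I need $a/t$ to be dominated by $c_1\eta_t$ asymptotically, and this is exactly where the upper bound $a<1$ from Assumption \ref{ass: step size} is used essentially. If $a \geq 1$, then $1/t \gtrsim \eta_t$, the positive correction $a/t$ arising from $\eta_{t-1}/\eta_t > 1$ could overwhelm the damping $-c_1\eta_t$, and $\alpha_t$ might remain above $1$ indefinitely, invalidating the contraction argument.
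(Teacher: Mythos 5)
Your proof is correct. The key observation — that under Assumption~\ref{ass: step size} with $a<1$ the expansion $\eta_{t-1}/\eta_t = 1 + a/t + O(t^{-2})$ has its positive correction $a/t$ dominated by the damping $c_1\eta_t \sim t^{-a}$, so that $\alpha_t \leq 1 - (c_1/2)\eta_t \in [0,1]$ for all $t$ beyond some $T_0$ — is exactly what is needed, and the two-case induction with threshold $M = 2c_2/c_1$ then cleanly traps $B_t$. Your closing remark correctly identifies that only the upper bound $a<1$ (not the lower bound $a>1/2$) from Assumption~\ref{ass: step size} is used here. Note that the paper itself does not supply a proof of this lemma; it is cited verbatim as Lemma~20 of \cite{su2023higrad}, so your self-contained argument is a useful addition rather than a duplication, and it matches the standard approach one would expect for such a contraction-plus-threshold recursion.
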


\begin{lemma}[Robbins-Siegmund, Theorem 1 of \cite{robbins1971convergence}]
    \label{lem: robbins siegmund}
    Let $\{D_t,\beta_t,\alpha_t,\zeta_t\}_{t\geq 0}$ be nonnegative random variables that are adapted to a filtration $\{\gG_t\}_{t\geq 0}$, satisfying for all $t\geq 0$,
    \begin{align*}
        \E(D_{t+1}\mid \gG_t) \leq (1 + \beta_t) D_t + \alpha_t - \zeta_t,
    \end{align*}
    and both $\sum_{t} \beta_t < \infty$ and $\sum_t \alpha_t < \infty$ almost surely. Then, with probability one, $D_m$ converges to a nonnegative random variable $D_\infty$, and $\sum_{t} \zeta_t < \infty$.
\end{lemma}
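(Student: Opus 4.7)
The plan is to renormalize the recursion so that the $(1+\beta_t)$ factor disappears, then recognize a nonnegative supermartingale up to a summable drift, and finally apply the supermartingale convergence theorem via a localization argument. First, I would set $U_t := \prod_{s=0}^{t-1}(1+\beta_s)^{-1}$, which is $\gG_{t-1}$-measurable, nonincreasing, and by virtue of $\sum_t \beta_t < \infty$ almost surely, converges to a strictly positive limit $U_\infty$. Multiplying the given inequality by $U_{t+1}$ (which is $\gG_t$-measurable) and defining $D_t' := U_t D_t$, $\alpha_t' := U_{t+1}\alpha_t$, $\zeta_t' := U_{t+1}\zeta_t$, I would obtain
\[
    \E(D_{t+1}'\mid \gG_t) \leq D_t' + \alpha_t' - \zeta_t',
\]
and since $U_{t+1}\leq 1$ (after a trivial reduction to the case $\beta_0\geq 0$), the sequence $\{\alpha_t'\}$ is still summable almost surely.

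Next, I would consider the process $M_t := D_t' + \sum_{s<t}\zeta_s' - \sum_{s<t}\alpha_s'$, which the display above shows to be a supermartingale with respect to $\{\gG_t\}$. It is not a priori integrable or bounded below uniformly, so the main obstacle is handling the random, potentially unbounded sum $\sum_{s<t}\alpha_s'$. I would overcome this by localization: for each integer $K\geq 1$, define the stopping time
\[
    \tau_K := \inf\Bigl\{t\geq 0 \colon \sum_{s<t}\alpha_s' > K\Bigr\},
\]
which is a $\{\gG_t\}$-stopping time because $\alpha_t'$ is adapted. The stopped process $M_{t\wedge \tau_K}$ is then a supermartingale bounded below by $-K$, hence converges almost surely to a finite limit by the standard supermartingale convergence theorem. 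Since $\sum_t \alpha_t' <\infty$ almost surely, $\PB(\tau_K = \infty)\to 1$ as $K\to\infty$, so pasting over the events $\{\tau_K=\infty\}$ yields almost-sure convergence of $M_t$ itself.

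Finally, from the convergence of $M_t$ together with $\sum_t \alpha_t' < \infty$ almost surely, the auxiliary process $D_t' + \sum_{s<t}\zeta_s'$ converges almost surely. Because $D_t'\geq 0$ and $\sum_{s<t}\zeta_s'$ is monotone nondecreasing and nonnegative, each summand must converge separately: $D_t' \to D_\infty'$ almost surely for some finite nonnegative $D_\infty'$, and $\sum_t \zeta_t' < \infty$ almost surely. Dividing by $U_t\to U_\infty >0$ recovers $D_t \to D_\infty := D_\infty'/U_\infty$, and since $U_{s+1}\geq U_\infty>0$ the summability transfers from $\zeta_t'$ to $\zeta_t$, concluding the proof. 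The principal technical difficulty is the localization step making precise why the non-uniform lower bound $-\sum_{s<t}\alpha_s'$ does not obstruct applying supermartingale convergence; everything else is bookkeeping around the renormalization.
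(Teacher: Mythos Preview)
The paper does not supply its own proof of this lemma; it is quoted as a black box from \cite{robbins1971convergence} and invoked in the proof of Lemma~\ref{lem: as l2 conv}. Your outline reproduces the classical argument---renormalize away the $(1+\beta_t)$ factor, identify a supermartingale with a random lower bound, localize via stopping times to apply the supermartingale convergence theorem, then separate the two nonnegative summands---so there is nothing substantive to compare.

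One small imprecision: with your definition $\tau_K = \inf\{t:\sum_{s<t}\alpha_s' > K\}$, the stopped process $M_{t\wedge\tau_K}$ is \emph{not} bounded below by $-K$ at $t=\tau_K$, because $\sum_{s<\tau_K}\alpha_s'$ may overshoot $K$ by an arbitrary amount. The standard fix is to take $\tau_K = \inf\{t:\sum_{s\le t}\alpha_s' > K\}$ instead; then for every $t\le\tau_K$ one has $\sum_{s<t}\alpha_s'\le K$ and the lower bound holds. (Alternatively, your $\tau_K$ is predictable since $\sum_{s<t}\alpha_s'$ is $\gG_{t-1}$-measurable, so one can stop at $\tau_K-1$.) Also, the parenthetical about ``reducing to $\beta_0\ge 0$'' is unnecessary: all $\beta_t$ are nonnegative by hypothesis, so $U_t\le 1$ automatically. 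These are cosmetic; the argument is sound.
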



\begin{lemma}[Almost-sure invariance principle, Theorem 4.4 of \cite{strassen1967almost}]
\label{lem: asip}
    Let $X_1,X_2,\dots$ be random variables such that $\E(X_n\mid\gF_{n-1}) = 0$ and $\E(X_n^2\mid \gF_{n-1}) < \infty$, where $\{\gF_n\}_{n\geq 0}$ is the natural filtration. Let $S_n = \sum_{i=1}^n X_i$ and $V_n = \sum_{i=1}^n \E(X_i^2\mid \gF_{i-1})$, and let $f\colon \R_+ \rightarrow \R_+$ be a positive nondecreasing function such that $f(t) \rightarrow \infty$ as $t\rightarrow \infty$ and $t^{-1} f(t)$ is nonincreasing. Suppose that $V_n \rightarrow \infty$ almost surely and
    \begin{align*}
        \sum_{n=1}^\infty \frac{\E\big[ X_n^2\indicator(X_n^2 > f(V_n))\mid \gF_{n-1} \big]}{f(V_n)} < \infty.
    \end{align*}
    Let $S$ be the random function on $\R_+\cup \{0\}$ obtained by interpolating $S_n$ at $V_n$ in such a way that $S(0)=0$ and $S$ is constant in each $[V_n, V_{n+1})$ (or alternatively, is linear in each $[V_n, V_{n+1}]$). Then there is a standard Brownian motion $B = \{B(t)\}_{t\geq 0}$ in the same (potentially enriched) probability space such that
    \begin{align*}
        S(t) = B(t) + o\big((t f(t))^{1/4}\log t\big)  \quad \text{a.s.} 
    \end{align*}
\end{lemma}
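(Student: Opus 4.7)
The plan is to follow Strassen's route via the Skorokhod embedding, since the hypotheses on $\{X_n\}$ amount to the martingale property plus a controlled-tail second-moment condition through $f$. First, on a (possibly enriched) probability space, build inductively a standard Brownian motion $B$ and stopping times $0 = \tau_0 \leq \tau_1 \leq \cdots$ in its filtration $\{\gG_t\}_{t\geq 0}$, such that $B(\tau_n) = S_n$ almost surely, each waiting time $\sigma_n := \tau_n - \tau_{n-1}$ is measurable with respect to $\gG_{\tau_{n-1}} \vee \gF_{n-1}$, and $\E(\sigma_n \mid \gG_{\tau_{n-1}}, \gF_{n-1}) = \E(X_n^2 \mid \gF_{n-1})$. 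The classical conditional Dubins--Skorokhod construction (applied step-by-step to the conditional law of $X_n$ given $\gF_{n-1}$) delivers this, possibly after adjoining an auxiliary source of randomness to the sample space.

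Second, control the discrepancy $|\tau_n - V_n|$ via a truncation $X_n = X_n' + X_n''$ with $X_n' := X_n \indicator(X_n^2 \leq f(V_n))$ and a corresponding split $\sigma_n = \sigma_n' + \sigma_n''$. For the embedded truncated part, $\sigma_n'$ has fourth moment of order $f(V_n)\,\E(X_n'^2 \mid \gF_{n-1})$, so the martingale $M_n := \sum_{i\leq n}(\sigma_i' - \E(\sigma_i' \mid \gF_{i-1}))$ has quadratic variation of order $f(V_n) V_n$; Burkholder's inequality together with a Borel--Cantelli argument, analogous to the treatment of $\gT_3$ in Appendix \ref{sec: lem: as clt approx}, then yields $|M_n| = o\bigl(\sqrt{f(V_n) V_n}\,\log V_n\bigr)$ a.s. For the tail part, the hypothesis $\sum_n \E(X_n''^2 \mid \gF_{n-1})/f(V_n) < \infty$ combined with $\E(\sigma_n'' \mid \gF_{n-1}) = \E(X_n''^2 \mid \gF_{n-1})$ and Kronecker's lemma gives $\sum_{i=1}^n \sigma_i'' = o(f(V_n))$ a.s. Adding the two contributions produces $|\tau_n - V_n| = o\bigl(\sqrt{f(V_n) V_n}\,\log V_n\bigr)$ a.s.

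Third, convert the time-scale deviation into a path deviation using L\'evy's modulus of continuity of Brownian motion:
\[
\sup_{\substack{s,t \leq T\\ |s-t|\leq h}} |B(s) - B(t)| = O\bigl(\sqrt{h\,\log(T/h)}\bigr) \quad \text{a.s.}
\]
Plugging in $T \asymp V_n$ and $h = |\tau_n - V_n|$ gives $|S_n - B(V_n)| = |B(\tau_n) - B(V_n)| = o\bigl((V_n f(V_n))^{1/4}\,\log V_n\bigr)$ almost surely. Since $S(t) = S_n$ on $[V_n, V_{n+1})$ by construction, the extra error $|B(t) - B(V_n)|$ for interior $t$ is dominated by the same modulus applied with $h \leq V_{n+1} - V_n$, whose size is controlled by $\E(X_{n+1}^2 \mid \gF_n)$ and is of strictly smaller order than the rate already obtained. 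Replacing $V_n$ by $t$ using the nondecreasing nature of $f$ and of $t^{-1}f(t)$ gives the stated bound $S(t) = B(t) + o\bigl((t f(t))^{1/4}\,\log t\bigr)$.

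The main obstacle is step two: even though $X_n$ has only a conditional second moment, the Skorokhod waiting times $\sigma_n$ can have very heavy tails, so naive moment-based concentration for $\tau_n - V_n$ breaks down. The truncation threshold $f(V_n)$ must be calibrated precisely so that the fourth-moment Burkholder estimate on the truncated part and the deterministic Kronecker summability on the tail part balance; it is no accident that the same $f$ governs both the hypothesis and the final rate, and the compatibility conditions on $f$ (nondecreasing, and $t^{-1}f(t)$ nonincreasing) are exactly what is needed to make this calibration propagate uniformly along the trajectory.
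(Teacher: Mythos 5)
This lemma is imported verbatim as Theorem~4.4 of \cite{strassen1967almost}; the paper gives no proof of it, so there is no internal argument to compare against. Your sketch does follow the architecture of Strassen's original proof (Skorokhod embedding, control of $\tau_n - V_n$, L\'evy modulus of continuity), which is the right route, but the central estimate in your second step has a genuine gap.

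You write that one can split $X_n = X_n' + X_n''$ by the predictable truncation at $f(V_n)$ and then split the Skorokhod waiting time ``correspondingly'' as $\sigma_n = \sigma_n' + \sigma_n''$. That decomposition is not available: the embedding produces a single stopping time $\sigma_n$ for the full increment $X_n$, and it does not decompose additively according to a truncation of $X_n$. Moreover, after the truncation $X_n'$ is no longer a martingale difference ($\E(X_n'\mid\gF_{n-1}) = -\E(X_n''\mid\gF_{n-1}) \neq 0$ in general), so even if one embedded $X_n'$ separately, the embedded path would not be a martingale and $B(\tau_n') \neq S_n'$ with $S_n'=\sum_{i\le n} X_i'$. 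A second, more minor slip: you say ``$\sigma_n'$ has fourth moment of order $f(V_n)\,\E(X_n'^2\mid\gF_{n-1})$,'' but the bound you then feed into Burkholder is a \emph{second} moment estimate, $\E(\sigma_n'^2 \mid\gF_{n-1}) \lesssim \E(X_n'^4\mid\gF_{n-1}) \le f(V_n)\,\E(X_n'^2\mid\gF_{n-1})$.

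What Strassen actually does at this point is not to decompose $\sigma_n$ but to transfer the truncated-second-moment hypothesis on $X_n$ into a tail estimate on the single stopping time $\sigma_n$: the Skorokhod embedding satisfies a comparison of the form $\E[\sigma_n \indicator(\sigma_n > ca)\mid\gF_{n-1}] \lesssim \E[X_n^2 \indicator(X_n^2 > a)\mid\gF_{n-1}]$ (and related higher-moment estimates on the truncated part $\sigma_n\wedge a$), which makes the summability hypothesis usable directly on $\{\sigma_n\}$ without ever truncating $X_n$. With that comparison in hand, a Borel--Cantelli argument plus Kronecker's lemma gives $\tau_n - V_n = o(\sqrt{f(V_n)V_n}\,\log V_n)$, and your third step (L\'evy modulus, interpolation over $[V_n,V_{n+1})$, and replacing $V_n$ by $t$ via the monotonicity hypotheses on $f$ and $t^{-1}f(t)$) is then fine. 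As written, though, your step two asserts the key estimate rather than deriving it, so the proof does not close.
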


\begin{lemma}[Almost-sure invariance principle in Hilbert spaces, Theorem 2 of \cite{philipp1986note}]
\label{lem: asip hilbert}
    Let $\{X_n, \gF_n\}_{n\geq 1}$ be a martingale difference sequence with values in a separable Hilbert space $\HB$ of dimension $d\leq \infty$. Define the conditional covariance operator $\sigma_n\colon \HB \rightarrow \HB$ as $\sigma_n(u) := \E(\langle u, X_n \rangle X_n \mid \gF_{n-1})$, its trace as $\tr(\sigma_n) := \E(\|X_n\|^2\mid \gF_{n-1})$, the cumulative operator as $A_n:= \sum_{i=1}^n \sigma_i$, and its trace as $V_n := \sum_{i=1}^n \E(\|X_i\|^2\mid \gF_{i-1})$.
    Let $f \colon \R_+ \rightarrow \R_+$ be a positive nondecreasing function such that $f(t) \rightarrow \infty$ as $t\rightarrow\infty$ and $t^{-1} (\log t)^\alpha f(t)$ is nonincreasing for some $\alpha > 50d$. (If $d=\infty$ the last condition should hold for all large $\alpha$.) Suppose that $V_n \rightarrow \infty$ almost surely and
    \begin{align}
    \label{eq: asip hirbert 1}
        \sum_{n=1}^\infty \E\left[\frac{ \|X_n\|^2\indicator(\|X_n\|^2 > f(V_n))}{f(V_n)} \right] < \infty.
    \end{align}
    Moreover, suppose that there exists a covariance operator $A\colon \HB \rightarrow \HB$ and a constant $0<q\leq 1$ such that
    \begin{align}
    \label{eq: asip hirbert 2}
        \E \left[ \sup_{n\geq 1} \left( \frac{\|A_n - V_n A\|}{f(V_n)} \right)^q \right] < \infty,
    \end{align}
    where the (semi)norm $\|\cdot\|$ on a linear operator $T\colon \HB \rightarrow \HB$ is defined by $\|T\| := \sup_{u\in H, \|u\|=1} |\langle u, T(u) \rangle|$.
    Then there exists a Brownian motion $B = \{B(t)\}_{t\geq 0}$ in the same (potentially enriched) probability space with values in $\HB$ and its covariance operator as $A$ such that with probability 1,
    \begin{align*}
        \left\| \sum_{n=1}^\infty X_n \indicator(V_n \leq t) - B(t) \right\| = \left\{ \begin{aligned}
            &O\big(t^{\frac{1}{2} - \frac{q}{50d}} f(t)^{\frac{q}{50d}} \big)  && \text{if } d<\infty \\
            &o\big((t\log\log t)^{\frac{1}{2}} \big) && \text{if } d = \infty
        \end{aligned} \right. . 
    \end{align*}
\end{lemma}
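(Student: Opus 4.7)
The plan is to follow the block-decomposition strong-approximation strategy that Strassen, Skorokhod, and Berkes-Philipp developed for sums of independent vectors and then adapt it to martingale differences. Concretely, I would partition the intrinsic-time axis into dyadic blocks $J_k := [V_{n_k}, V_{n_{k+1}})$ with $n_{k+1}-n_k$ chosen so that within-block coupling errors are geometrically summable (roughly $n_{k+1}-n_k \asymp 2^k$, modulated by $f$). In each block I would first truncate each $X_n$ at level $\sqrt{f(V_n)}$; the hypothesis \eqref{eq: asip hirbert 1}, combined with the Borel-Cantelli lemma and the martingale structure, shows that the tail contributions from the truncated parts are summable almost surely, so only the truncated (hence bounded) martingale block-sums need to be approximated.

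Next, within each block I would couple the truncated block-sum to a Gaussian vector with the matching conditional covariance $A_{n_{k+1}} - A_{n_k}$. In the finite-dimensional case $d<\infty$ the appropriate tool is a multidimensional Skorokhod-type embedding for martingales (Strassen-Dudley/Berkes-Philipp), which produces an $L^q$-controlled coupling whose cost on a block of length $\ell$ scales like $\ell^{1/2-c/d}f(\ell)^{c/d}$ for a small universal constant $c$ arising from the embedding constants; the factor $50d$ in the announced rate reflects the precise constants of the Berkes-Philipp coupling theorem. After this first coupling I would use hypothesis \eqref{eq: asip hirbert 2} to replace the conditional-covariance Gaussian by an increment of a single Brownian motion with covariance operator $A$, incurring an additional error controlled by $\|A_{n_{k+1}} - A_{n_k} - (V_{n_{k+1}}-V_{n_k}) A\|^{q/2}$. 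Telescoping over $k$, balancing the two error sources, and using the monotonicity assumption on $t^{-1}(\log t)^\alpha f(t)$ to absorb the logarithms, would yield the claimed $O\bigl(t^{1/2-q/(50d)}f(t)^{q/(50d)}\bigr)$ bound for the piecewise-constant cadlag process $t\mapsto \sum_{n\geq 1} X_n \indicator(V_n\leq t)$.

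The main obstacle is the coordination of the two couplings inside each block: Skorokhod/Berkes-Philipp embeddings produce Gaussians with the \emph{conditional} covariance $A_{n_{k+1}}-A_{n_k}$, whereas one needs increments with the \emph{deterministic} covariance $(V_{n_{k+1}}-V_{n_k})A$. Philipp's device is to enrich the probability space block-by-block, sampling the Skorokhod stopping times and an independent Gaussian of the required deterministic covariance conditionally on $\gF_{n_k}$, and bounding their joint discrepancy by the operator norm in \eqref{eq: asip hirbert 2}; the recursive construction has to be carried out with enough independence to keep the resulting process Markovian with respect to an extension of $\gF_n$. For the infinite-dimensional case $d=\infty$, I would apply the finite-dimensional version on the span $\HB_m$ of the first $m$ eigendirections of $A$ with $m=m(t)\to\infty$ very slowly, then control the residual on $\HB_m^\perp$ through a law-of-the-iterated-logarithm argument for Hilbert-space martingales using \eqref{eq: asip hirbert 2}; because the finite-dimensional rate degenerates as $m\to\infty$, one only recovers the weaker $o\bigl((t\log\log t)^{1/2}\bigr)$ bound, which is exactly the rate given by the compact LIL of Kuelbs in $\HB$.
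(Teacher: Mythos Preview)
The paper does not prove this lemma at all: it is simply quoted verbatim as Theorem~2 of \cite{philipp1986note} and placed in the ``Other Lemmas'' subsection alongside Kronecker's lemma, Burkholder's inequality, and Ville's inequality, all of which are stated without proof as standard tools from the literature. So there is no ``paper's own proof'' to compare against; the authors treat this ASIP as a black box and only invoke its conclusion in the proof of Lemma~\ref{lem: strong approx}.

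Your sketch is a reasonable outline of the original Berkes--Philipp/Philipp proof strategy (dyadic blocking in intrinsic time, truncation via \eqref{eq: asip hirbert 1}, block-wise Skorokhod-type coupling to Gaussians with conditional covariance, then a second coupling via \eqref{eq: asip hirbert 2} to replace conditional by deterministic covariance, and finite-dimensional projection for $d=\infty$). If your assignment was to reproduce Philipp's proof rather than the paper's, this is the right architecture, though the details of the multidimensional embedding and the precise bookkeeping that yields the constant $50d$ are substantial and would need to be filled in from \cite{philipp1986note} directly.
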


\begin{lemma}[Kronecker's lemma]
\label{lem: kronecker}
Let $\{a_n\}_{n\geq 1}$ be a sequence with a convergent sum $\sum_{n=1}^\infty a_n < \infty$. Then, for any increasing sequence $\{b_n\}_{n\geq 1}$ such that $b_n>0$ and $b_n\rightarrow \infty$, it holds that
\begin{align*}
    \lim_{n\rightarrow \infty} \frac{1}{b_n} \sum_{i=1}^n b_ia_i = 0.
\end{align*}
\end{lemma}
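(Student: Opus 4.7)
The plan is to use Abel summation (summation by parts) to transform the weighted sum into a form where the convergence of the partial sums of $a_i$ can be exploited directly. Let $S_n := \sum_{i=1}^n a_i$ and $S_\infty := \sum_{i=1}^\infty a_i$, which is finite by hypothesis, so $S_n \to S_\infty$.

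Writing $a_i = S_i - S_{i-1}$ (with $S_0 := 0$) and summing by parts, I would derive
\begin{equation*}
\sum_{i=1}^n b_i a_i \;=\; \sum_{i=1}^n b_i (S_i - S_{i-1}) \;=\; b_n S_n \;-\; \sum_{i=1}^{n-1} S_i\,(b_{i+1} - b_i).
\end{equation*}
Dividing by $b_n$ yields
\begin{equation*}
\frac{1}{b_n}\sum_{i=1}^n b_i a_i \;=\; S_n \;-\; \frac{1}{b_n}\sum_{i=1}^{n-1} S_i\,(b_{i+1} - b_i).
\end{equation*}
Since $S_n \to S_\infty$, it suffices to show that the weighted Cesàro-type average $\frac{1}{b_n}\sum_{i=1}^{n-1} S_i (b_{i+1} - b_i)$ converges to $S_\infty$ as well, because then the two terms cancel in the limit.

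To establish this, I would note that the nonnegative weights $w_i := b_{i+1} - b_i$ (nonnegative because $b_n$ is increasing) satisfy $\sum_{i=1}^{n-1} w_i = b_n - b_1$, so $\frac{1}{b_n}\sum_{i=1}^{n-1} w_i \to 1$. Then, given $\varepsilon > 0$, I would pick $N$ large enough so that $|S_i - S_\infty| < \varepsilon$ for all $i \geq N$, and split
\begin{equation*}
\frac{1}{b_n}\sum_{i=1}^{n-1} (S_i - S_\infty)(b_{i+1}-b_i) \;=\; \frac{1}{b_n}\sum_{i=1}^{N-1} (S_i - S_\infty)(b_{i+1}-b_i) \;+\; \frac{1}{b_n}\sum_{i=N}^{n-1} (S_i - S_\infty)(b_{i+1}-b_i).
\end{equation*}
The first piece is a fixed quantity divided by $b_n \to \infty$, hence vanishes. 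The second is bounded in absolute value by $\varepsilon \cdot (b_n - b_N)/b_n \leq \varepsilon$. Letting $n \to \infty$ and then $\varepsilon \to 0$ gives the claim.

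The only mildly delicate point is the handling of the initial segment of the sum, which does not benefit from the tail bound on $S_i - S_\infty$; this is handled by observing that $b_n \to \infty$ dominates any fixed constant from the first $N$ terms. No further subtleties arise, and the result follows by combining the two limits.
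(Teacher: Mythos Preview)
Your proof is correct and is the standard Abel summation argument for Kronecker's lemma. The paper itself does not supply a proof of this statement; it is listed among the auxiliary lemmas (Lemma~\ref{lem: kronecker}) without proof, being a classical result invoked as a tool elsewhere in the paper. So there is nothing to compare against, and your argument stands on its own.
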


\begin{lemma}[Burkholder's inequality]
\label{lem: burkholder}
    If $S_n = \sum_{i=1}^n X_i$ is a martingale and $1<p<\infty$, then there exist constants $c_p$ and $C_p$ depending only on $p$ such that
    \begin{align*}
        c_p \E\left|\sum_{i=1}^n X_i^2\right|^{p/2} \leq \E|S_n|^p \leq C_p \E\left| \sum_{i=1}^n X_i^2 \right|^{p/2}.
    \end{align*}
\end{lemma}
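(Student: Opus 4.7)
The plan is to prove Burkholder's inequality via the classical good-$\lambda$ method of Burkholder (1966), which reduces the problem to comparing the martingale maximal function $S_n^\ast := \max_{k\leq n}|S_k|$ with the square function $Q_n := \big(\sum_{i=1}^n X_i^2\big)^{1/2}$, and then invoking Doob's maximal $L^p$ inequality to pass from $S_n^\ast$ back to $S_n$. By a standard truncation/localization argument (stopping at the first time $S_n^\ast \vee Q_n$ exceeds $N$ and letting $N\to\infty$ via monotone convergence) one may assume both quantities are bounded, so all $L^p$ norms below are finite and manipulations are legitimate.

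The core step is to establish a pair of good-$\lambda$ inequalities. For any $\beta>1$ and $\delta>0$, and all $\lambda>0$,
\begin{align*}
\Pr\big(S_n^\ast > \beta\lambda,\ Q_n \leq \delta\lambda\big) &\leq \frac{\delta^2}{(\beta-1)^2}\,\Pr\big(S_n^\ast > \lambda\big), \\
\Pr\big(Q_n > \beta\lambda,\ S_n^\ast \leq \delta\lambda\big) &\leq \frac{\delta^2}{(\beta-1)^2}\,\Pr\big(Q_n > \lambda\big).
\end{align*}
I would prove the first by defining stopping times $\tau := \inf\{k:|S_k|>\lambda\}$ and $\sigma := \inf\{k: Q_k>\delta\lambda\}$, considering the stopped difference martingale $M_k := S_{k\wedge\sigma} - S_{k\wedge\tau\wedge\sigma}$, and applying the $L^2$ orthogonality of martingale differences together with Chebyshev on $\{\tau\leq n<\sigma\}$ where $M$ must exceed $(\beta-1)\lambda$ for $S_n^\ast$ to exceed $\beta\lambda$. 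The second inequality is symmetric, with the roles of the two stopping times swapped and Doob's $L^2$ inequality applied to the appropriate stopped martingale.

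Next I would integrate these inequalities against $p\lambda^{p-1}\,d\lambda$ using the layer-cake formula $\E Y^p = p\int_0^\infty \lambda^{p-1}\Pr(Y>\lambda)\,d\lambda$. Splitting $\{S_n^\ast>\beta\lambda\}$ into $\{Q_n>\delta\lambda\}$ and its complement yields
\[
\E(S_n^\ast)^p \leq \frac{\beta^p}{\delta^p}\E Q_n^p + \beta^p\,\frac{\delta^2}{(\beta-1)^2}\E(S_n^\ast)^p.
\]
Choosing $\beta$ and $\delta$ so that $\beta^p\delta^2/(\beta-1)^2 < 1$ (e.g.\ $\beta = 2$, $\delta$ small depending on $p$) and rearranging gives $\E(S_n^\ast)^p \leq C_p' \,\E Q_n^p$. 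The reverse direction follows identically from the second good-$\lambda$ inequality. Finally, Doob's maximal inequality gives $\E|S_n|^p \leq \E(S_n^\ast)^p \leq (p/(p-1))^p\,\E|S_n|^p$, which converts the bounds on $S_n^\ast$ into bounds on $S_n$, yielding the claimed constants $c_p, C_p$.

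The main obstacle is the bookkeeping in the good-$\lambda$ step: one must carefully verify that on the event $\{\tau\leq n<\sigma,\ S_n^\ast>\beta\lambda\}$ the auxiliary stopped martingale indeed deviates by at least $(\beta-1)\lambda$, and that its conditional quadratic variation is at most $\delta^2\lambda^2$ so Chebyshev's inequality gives the $\delta^2/(\beta-1)^2$ factor. The other subtlety is that the argument as written requires a priori finiteness of $\E(S_n^\ast)^p$ (to rearrange); this is handled by the localization mentioned at the outset, after which monotone/Fatou convergence produces the unconditional inequality with constants $c_p,C_p$ depending only on $p$.
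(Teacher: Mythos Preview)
The paper does not prove this lemma at all: it merely restates Burkholder's inequality and cites Theorem~2.10 of Hall and Heyde for the proof. Your proposal is therefore not comparable to any argument in the paper; rather, you have supplied a classical proof where the paper simply quotes the literature.

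That said, your outline via the good-$\lambda$ method is the standard route and is essentially correct. One point to tighten: in both good-$\lambda$ inequalities you implicitly assume that $|S_\tau|$ (respectively $Q_\tau$) is close to $\lambda$ at the first crossing time, but a single large increment $X_\tau$ could make the overshoot arbitrarily large. The usual fix is either to work with the predictable version of the stopping times, or to include the maximal increment $\max_i|X_i|$ on the right-hand side and then absorb it back using $\max_i|X_i|\le Q_n$ and $\max_i|X_i|\le 2S_n^\ast$. This does not change the structure of your argument, only the constants and the precise form of the Chebyshev step. With that caveat handled, the rest of your plan (layer-cake integration, choice of $\beta,\delta$, Doob's $L^p$ inequality to pass between $S_n$ and $S_n^\ast$, and localization for a priori finiteness) is sound.
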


\begin{lemma}[Ville's inequality]
\label{lem: ville ineq}
Let $\{S_t\}_{n\geq 0}$ be a nonnegative supermartingale. Then for any $a>0$,
\begin{align*}
    \PB\left( \sup_{n\geq 0} S_n \geq a \right) \leq \frac{\E[S_0]}{a}.
\end{align*}
\end{lemma}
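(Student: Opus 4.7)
The plan is to reduce the time-uniform bound to a standard bound on bounded stopping times via the optional stopping theorem for supermartingales, and then pass to the limit. First I would fix an auxiliary level $a' \in (0,a)$ and introduce the first-passage time
\[
\tau = \inf\{ n\geq 0 : S_n \geq a' \},
\]
with the convention $\inf\emptyset = \infty$. For every fixed $N\in \mathbb{N}$, the truncated stopping time $\tau \wedge N$ is bounded, so the optional stopping theorem applied to the nonnegative supermartingale $\{S_n\}$ yields $\E[S_{\tau\wedge N}] \leq \E[S_0]$.

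Next, I would decompose $S_{\tau\wedge N} = S_{\tau}\indicator(\tau\leq N) + S_N\indicator(\tau>N)$ and drop the second, nonnegative term. On the event $\{\tau\leq N\}$, the definition of $\tau$ forces $S_\tau \geq a'$, so
\[
a'\,\PB(\tau\leq N) \;\leq\; \E[S_{\tau}\indicator(\tau\leq N)] \;\leq\; \E[S_{\tau\wedge N}] \;\leq\; \E[S_0].
\]
Rearranging gives $\PB(\tau\leq N)\leq \E[S_0]/a'$ uniformly in $N$.

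I would then let $N\to \infty$: since $\{\tau\leq N\}\uparrow \{\tau<\infty\}$, monotone continuity of probability yields $\PB(\tau<\infty)\leq \E[S_0]/a'$. Finally, because $a>a'$, any realization with $\sup_{n\geq 0}S_n\geq a$ must in particular have $\sup_{n\geq 0}S_n > a'$, so some $S_n$ strictly exceeds $a'$ and hence $\tau<\infty$. This gives $\{\sup_{n\geq 0}S_n\geq a\}\subseteq \{\tau<\infty\}$, whence $\PB(\sup_{n\geq 0}S_n\geq a)\leq \E[S_0]/a'$. Letting $a'\uparrow a$ concludes the proof.

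The only subtlety, rather than a real obstacle, is the last approximation step: one cannot directly equate $\{\sup_n S_n\geq a\}$ with $\{\inf\{n:S_n\geq a\}<\infty\}$, since the supremum may be attained only in the limit without any individual iterate reaching $a$. The cushion $a'<a$ is precisely what sidesteps this, and is standard in the classical proof.
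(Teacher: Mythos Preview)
Your proof is correct and is essentially the standard argument for Ville's inequality. Note that the paper itself does not supply a proof of this lemma: it is listed among the auxiliary lemmas as a classical result and invoked without argument (in the derivation of the Gaussian mixture boundary). Your treatment of the subtlety---using a cushion $a'<a$ so that $\{\sup_n S_n\geq a\}\subseteq\{\tau<\infty\}$ holds even when the supremum is attained only in the limit---is the clean way to close the gap, and nothing further is needed.
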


\begin{lemma}[Volume of an ellipsoid]
\label{lem: ellipsoid}
Let $B = \left\{ x\in \R^d\colon \sum_{i=1}^d x_i^2/a_i^2 \leq 1 \right\}$ be a $d$-dimensional ellipsoid. Then its volume is given by
\begin{align*}
    \Vol(B) = \frac{2}{d} \frac{\pi^{d/2}}{\Gamma(d/2)} \prod_{i=1}^n a_i.
\end{align*}
\end{lemma}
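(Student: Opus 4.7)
The plan is to reduce the ellipsoid volume to the unit ball volume via a linear change of variables, and then to compute the unit ball volume via the Gaussian-integral trick.

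First, I would apply the substitution $y_i = x_i/a_i$ for $i=1,\dots,d$. This diffeomorphism maps $B$ bijectively onto the unit ball $\mathbb{B}^d := \{y\in\R^d : \|y\|_2 \leq 1\}$, and its Jacobian is a diagonal matrix with determinant $\prod_{i=1}^d a_i$. The change-of-variables formula therefore gives
\begin{align*}
\Vol(B) \;=\; \left(\prod_{i=1}^d a_i\right) \cdot \Vol(\mathbb{B}^d).
\end{align*}

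Second, I would compute $\Vol(\mathbb{B}^d)$ from the identity $\int_{\R^d} e^{-\|y\|_2^2}\,dy = \pi^{d/2}$. Rewriting the integral in spherical shells, with $\omega_{d-1}$ denoting the surface area of the unit $(d-1)$-sphere, yields $\pi^{d/2} = \omega_{d-1} \int_0^\infty e^{-r^2} r^{d-1}\,dr$. The substitution $u = r^2$ turns the remaining integral into $\tfrac{1}{2}\Gamma(d/2)$, so $\omega_{d-1} = 2\pi^{d/2}/\Gamma(d/2)$. Since $\Vol(r\mathbb{B}^d) = r^d \Vol(\mathbb{B}^d)$, differentiation in $r$ at $r=1$ gives $\omega_{d-1} = d\,\Vol(\mathbb{B}^d)$, whence
\begin{align*}
\Vol(\mathbb{B}^d) \;=\; \frac{\omega_{d-1}}{d} \;=\; \frac{2}{d}\,\frac{\pi^{d/2}}{\Gamma(d/2)}.
\end{align*}
Combining with the first display produces the claimed formula.

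There is no genuine obstacle here; the argument is a clean application of the linear change of variables followed by the standard Gaussian-integral computation of the unit ball volume. The only small point to state carefully is the identity $\omega_{d-1} = d\,\Vol(\mathbb{B}^d)$, which falls out either from differentiating $r\mapsto r^d\Vol(\mathbb{B}^d)$ or from the coarea formula.
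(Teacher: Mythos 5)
Your argument is correct. The paper states Lemma \ref{lem: ellipsoid} as a classical fact without proof, so there is no in-paper argument to compare against; your two-step derivation (linear change of variables reducing to the unit ball, then the Gaussian-integral computation of the unit-ball volume via $\omega_{d-1}=2\pi^{d/2}/\Gamma(d/2)$ and $\omega_{d-1}=d\,\Vol(\mathbb{B}^d)$) is the standard and complete one.
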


\begin{lemma}[Boundary crossing probabilities]
    Let $F$ denote any measure on $(0,\infty)$ which is finite on bounded intervals, and define functions $f\colon \R \times \R \rightarrow \R \cup \{\infty\}$ and $A\colon \R \times \R_+ \rightarrow \R \cup \{-\infty\}$ as
    \begin{align*}
        f(x,t) &:= \int_0^\infty \exp\left( xy - \frac{y^2t}{2} \right) \rd F(y), \\
        A(t,\varepsilon) &:= \inf\{ x \colon f(x, t) \geq \varepsilon \}.
    \end{align*}
\end{lemma}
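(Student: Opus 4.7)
The plan is to establish the lemma by the method of mixtures, which is the standard technique for converting fixed-time exponential supermartingale bounds into boundary-crossing (time-uniform) inequalities. The conclusion being stated here is the typical one associated with this setup: for any process $(S_t, V_t)_{t \geq 0}$ for which each exponential tilt $\exp(yS_t - y^2 V_t/2)$ is a nonnegative supermartingale adapted to some filtration $\{\gF_t\}$ (i.e., a sub-Gaussian martingale under $V_t$ as variance process), one has, for every $\varepsilon > 0$,
\begin{align*}
  \PB \left( \exists\, t \geq 0 \colon S_t \geq A(V_t, \varepsilon) \right) \leq \frac{F((0,\infty))}{\varepsilon}.
\end{align*}
My strategy would be (i) to show that $M_t := f(S_t, V_t)$ is itself a nonnegative supermartingale with $M_0 = F((0,\infty))$, and (ii) to invert the definition of $A$ to translate a crossing event for $M_t$ into one for $S_t$, at which point Ville's inequality (Lemma~\ref{lem: ville ineq}) closes the argument.

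First I would handle step (i). For each fixed $y > 0$, the process $L_t^{(y)} := \exp(yS_t - y^2 V_t/2)$ is a nonnegative supermartingale by the assumed sub-Gaussian/martingale hypothesis on $(S_t, V_t)$, with $\E[L_t^{(y)} \mid \gF_{t-1}] \leq L_{t-1}^{(y)}$ and $L_0^{(y)} = 1$. Since $L_t^{(y)} \geq 0$ and $F$ is a positive measure, Fubini--Tonelli lets me interchange the conditional expectation and the integral over $y$:
\begin{align*}
  \E[M_t \mid \gF_{t-1}] = \int_0^\infty \E\!\left[L_t^{(y)} \mid \gF_{t-1}\right] \rd F(y) \leq \int_0^\infty L_{t-1}^{(y)} \rd F(y) = M_{t-1},
\end{align*}
so $M_t$ is a supermartingale. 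Its initial value is $M_0 = \int_0^\infty 1\, \rd F(y) = F((0,\infty))$, which is finite by hypothesis. Ville's inequality then yields $\PB(\exists t\colon M_t \geq 1/\alpha) \leq \alpha F((0,\infty))$ for any $\alpha > 0$; setting $\varepsilon = 1/\alpha$ gives the boundary-crossing probability in terms of $f$.

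Step (ii) is the translation from $f$-crossings to $S_t$-crossings via $A$. For each fixed $t$, the integrand $\exp(xy - y^2 t/2)$ is strictly increasing in $x$ for every $y > 0$, so $x \mapsto f(x,t)$ is nondecreasing and (by monotone convergence) left-continuous on any interval where it is finite. Therefore $\{x \colon f(x,t) \geq \varepsilon\} = [A(t,\varepsilon), \infty)$, which means $\{f(S_t, V_t) \geq \varepsilon\} = \{S_t \geq A(V_t, \varepsilon)\}$ (pathwise). Taking the union over $t$ and combining with step (i) delivers the desired bound.

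The main technical care goes into two small points rather than any deep obstacle: justifying the Fubini exchange of integral and conditional expectation (handled by nonnegativity of the integrand together with the assumption that $F$ is finite on bounded intervals, which one uses to truncate $y$ to a bounded range and then pass to the limit), and handling boundary cases where $f(x,t) = \infty$ or where the sublevel set is empty so that $A(t,\varepsilon) = -\infty$. Both are standard and reduce to measure-theoretic bookkeeping, so no substantive obstruction is expected; the content of the lemma lies in the clean reformulation that bundles all mixture-based time-uniform bounds (LIL-UB, GM, LIL-EN) used earlier into one template.
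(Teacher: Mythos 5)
You have correctly diagnosed the main peculiarity here: as printed, the lemma contains only the definitions of $F$, $f$, and $A$ and never actually asserts anything, so there is no conclusion to prove and, correspondingly, the paper supplies no proof. What you have done is reconstruct the missing conclusion --- the classical Robbins/Lai boundary-crossing bound $\PB\left(\exists t \colon S_t \geq A(V_t,\varepsilon)\right) \leq F((0,\infty))/\varepsilon$ for a sub-Gaussian martingale $(S_t,V_t)$ --- and supply the standard proof via the method of mixtures: show $M_t := f(S_t,V_t) = \int_0^\infty \exp(y S_t - y^2 V_t/2)\,\rd F(y)$ is a nonnegative supermartingale with $M_0 = F((0,\infty))$, apply Ville's inequality (the paper's Lemma~\ref{lem: ville ineq}), and invert the monotone map $x\mapsto f(x,t)$ to translate $\{M_t \geq \varepsilon\}$ into $\{S_t \geq A(V_t,\varepsilon)\}$. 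That reconstruction is faithful to the cited sources and is exactly the technique the paper itself uses in the proof of Proposition~\ref{prop: gauss mix} (mixing an exponential martingale against a Gaussian density and invoking Ville), so your argument is consistent with the paper's toolkit even though the paper leaves this particular statement as a stub.

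Two small cautions worth keeping in mind, neither fatal. First, for the monotone-inversion step it is cleaner to argue that $x\mapsto f(x,t)$ is nondecreasing, so $\{x : f(x,t) \geq \varepsilon\}$ is an up-set, and then directly $f(S_t,V_t) \geq \varepsilon$ implies $S_t \geq A(V_t,\varepsilon)$ from the infimum definition; you do not actually need left-continuity or that the set is exactly a closed half-line (it could be open at the endpoint), since the one-sided implication $\{M_t \geq \varepsilon\} \subseteq \{S_t \geq A(V_t,\varepsilon)\}$ is all Ville requires. Second, the Fubini--Tonelli exchange needs only nonnegativity of the integrand, not finiteness of $F$ on bounded intervals; that finiteness hypothesis is instead what guarantees $f(x,t) < \infty$ for $t>0$ (so that the mixture martingale is finite and $A$ is well-defined) rather than what licenses the interchange.
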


\end{document}